\documentclass[lettersize,journal]{IEEEtran}
\usepackage{amsmath,amsfonts}
\usepackage{array}
\usepackage[caption=false,font=normalsize,labelfont=sf,textfont=sf]{subfig}
\usepackage{textcomp}
\usepackage{stfloats}
\usepackage{url}
\usepackage{verbatim}
\usepackage{graphicx}
\usepackage{cite}
\usepackage{algorithm}
\usepackage{algpseudocode}
\newcommand\MYhyperrefoptions{bookmarks=true,bookmarksnumbered=true,
pdfpagemode={UseOutlines},plainpages=false,pdfpagelabels=true,
colorlinks=true,linkcolor={red},citecolor={green},urlcolor={magenta},
pdftitle={Soft-Label Caching and Sharpening for Communication-Efficient Federated Distillation},
pdfsubject={Typesetting},
pdfauthor={Kitsuya Azuma},
pdfkeywords={Federated learning, knowledge distillation, non-IID data, communication efficiency}}
\ifCLASSINFOpdf
\usepackage[\MYhyperrefoptions,pdftex]{hyperref}
\else
\usepackage[\MYhyperrefoptions,breaklinks=true,dvips]{hyperref}
\usepackage{breakurl}
\fi
\usepackage{bm}
\usepackage{booktabs}
\usepackage{multirow}
\usepackage{enumitem}
\usepackage{amsthm}
\usepackage{titlesec}
\usepackage{makecell}
\usepackage{xcolor}
\algrenewcommand{\alglinenumber}[1]{\color{gray}#1:}
\newcommand{\AlgComment}[1]{%
  \hfill\textcolor{gray}{\textit{$\triangleright$ #1}}%
}

\makeatletter
\renewcommand\subsubsection{\@startsection{subsubsection}{3}{\z@}{0.8ex \@plus 0.2ex \@minus 0.1ex}{0.4ex \@plus 0.1ex}{\normalfont\normalsize\bfseries}}
\makeatother

\theoremstyle{definition}
\newtheorem{theorem}{Theorem}

\titlespacing*{\paragraph}{0pt}{0.5em}{0em}

\begin{document}

\title{Soft-Label Caching and Sharpening for \\Communication-Efficient Federated Distillation}

\author{Kitsuya Azuma,~\IEEEmembership{Student Member,~IEEE,}
Takayuki Nishio,~\IEEEmembership{Senior Member,~IEEE,}\\
Yuichi Kitagawa,
Wakako Nakano,
and Takahito Tanimura
\thanks{This article has been accepted for publication in IEEE Transactions on Mobile Computing. This is an open access article under the CC BY license. DOI: \href{https://doi.org/10.1109/TMC.2026.3652819}{10.1109/TMC.2026.3652819}}
}


\urlstyle{tt}

\maketitle

\begin{abstract}
\urlstyle{tt}
Federated Learning (FL) enables collaborative model training across decentralized clients, enhancing privacy by keeping data local. Yet conventional FL, relying on frequent parameter-sharing, suffers from high communication overhead and limited model heterogeneity. Distillation-based FL approaches address these issues by sharing predictions (soft-labels, i.e., normalized probability distributions) instead, but they often involve redundant transmissions across communication rounds, reducing efficiency. We propose SCARLET, a novel framework integrating synchronized soft-label caching and an enhanced Entropy Reduction Aggregation (Enhanced ERA) mechanism. SCARLET minimizes redundant communication by reusing cached soft-labels, achieving up to 50\% reduction in communication costs compared to existing methods while maintaining competitive accuracy. Enhanced ERA resolves the fundamental instability of conventional temperature-based aggregation, ensuring robust control and high performance in diverse client scenarios. Experimental evaluations demonstrate that SCARLET consistently outperforms state-of-the-art distillation-based FL methods in terms of accuracy and communication efficiency. The implementation of SCARLET is publicly available at \url{https://github.com/kitsuyaazuma/SCARLET}.

\end{abstract}

\begin{IEEEkeywords}
Federated learning, knowledge distillation, non-IID data, communication efficiency.
\end{IEEEkeywords}

\section{Introduction}

\IEEEPARstart{F}{ederated} Learning (FL) represents a paradigm shift in distributed machine learning, enabling model training across decentralized clients while preserving data privacy by keeping raw data local. Traditional centralized machine learning typically requires aggregating data at a central server, raising concerns over privacy, security, and compliance. FL mitigates these issues by facilitating collaborative training across clients, such as mobile or edge devices, where only model updates or predictions are communicated with a central server.

Conventional FL methods, such as Federated Averaging (FedAvg) \cite{fedavg}, face several fundamental challenges that hinder their scalability and practicality in real-world applications. 
These challenges include high communication costs from frequent transmission of model parameters, heterogeneous device capabilities and network conditions, convergence difficulties due to non-IID data, and privacy concerns related to the exposure of sensitive information \cite{challenges}. Addressing these issues is crucial for enhancing the efficiency and robustness of FL systems.

\begin{figure}[t]
\begin{center}
  \subfloat[Distillation-based FL]{%
    \includegraphics[width=0.95\columnwidth]{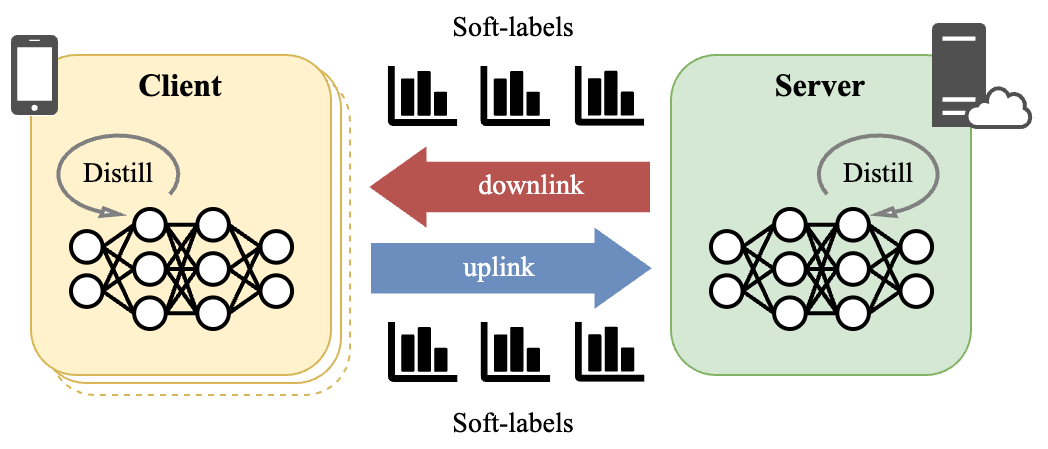}%
    \label{fig:concept:distillation_based_fl}%
  }

  \subfloat[SCARLET]{%
    \includegraphics[width=0.95\columnwidth]{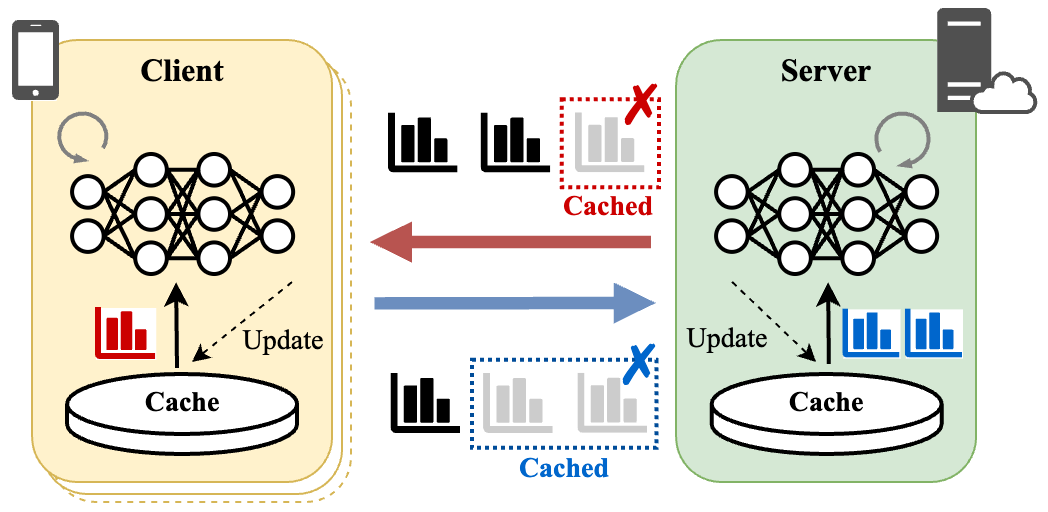}%
    \label{fig:concept:scarlet}%
  }
\end{center}
\caption{Conceptual overview of distillation-based FL approaches. Conventional distillation-based FL (\ref{fig:concept:distillation_based_fl}) transmits all soft-labels in each communication round. SCARLET (\ref{fig:concept:scarlet}) requests soft-labels only for data not present in the server cache and synchronizes cache updates between the server and clients, significantly reducing communication costs.}
\label{figure:concept}
\end{figure}

To mitigate these limitations, recent studies have explored knowledge distillation-based FL approaches, which offer several advantages over conventional parameter-sharing methods. Instead of transmitting full model parameters, these approaches share model predictions (soft-labels\footnote{In this paper, we refer to the normalized outputs of the softmax function as \textit{soft-labels}, and the unnormalized, pre-activation model outputs as \textit{logits}. A more detailed discussion of this notation is provided in Section \ref{subsection:background_and_notations}.}), effectively reducing communication overhead \cite{dsfl}, \cite{cfd}, \cite{fedgen}. They also provide greater flexibility in model architecture, addressing the systems heterogeneity challenge by allowing clients to operate with heterogeneous models of varying computational capacities \cite{ktpfl}, \cite{hbs}. Moreover, since knowledge distillation involves sharing only aggregated or processed outputs rather than raw model updates, it enhances privacy protection by minimizing the risk of exposing sensitive training data through shared gradients \cite{fedad}, \cite{fedkdprivacy}.

However, existing distillation-based FL methods exhibit significant inefficiencies. Specifically, these methods redundantly transmit predictions for identical public dataset samples across multiple communication rounds, despite minimal changes in these predictions. This redundancy results in unnecessary communication overhead. Furthermore, conventional entropy-based aggregation methods (e.g., ERA in \cite{dsfl}) suffer from a fundamental instability, as their temperature parameter is highly sensitive to input entropy variations. This instability makes robust tuning impractical and degrades performance in diverse non-IID scenarios.

In this work, we introduce \textbf{SCARLET} (Semi-supervised federated distillation with global CAching and Reduced soft-Label EnTropy), a novel distillation-based FL approach designed explicitly to resolve these inefficiencies. SCARLET employs synchronized prediction caching between the server and clients to significantly reduce redundant transmissions by reusing cached predictions across rounds, as illustrated in Fig. \ref{figure:concept}. Additionally, we propose Enhanced Entropy Reduction Aggregation (Enhanced ERA), which dynamically adjusts the aggregation sharpness, offering robust performance and simplified parameter tuning across various levels of client data heterogeneity in non-IID environments.

Our experiments demonstrate that SCARLET consistently outperforms other distillation-based FL methods, achieving target accuracy with reduced communication and faster convergence. This contribution establishes SCARLET as a promising approach for efficient and scalable FL across heterogeneous client environments.

The key contributions of this paper are summarized as follows:
\begin{itemize} 
    \item \textbf{Novel Concept of Soft-Label Caching:}
    We propose a novel soft-label caching mechanism, which significantly reduces communication overhead by caching predictions for reuse across rounds. To the best of our knowledge, this is the first caching mechanism explicitly designed to eliminate redundant transmission of soft-labels in distillation-based FL, setting a foundation for communication-efficient FL methods.
    
    \item \textbf{Development of Enhanced ERA:} We propose Enhanced ERA, an aggregation mechanism that replaces the mathematically unstable, temperature-based tuning of conventional ERA with an intuitive and provably robust, ratio-based control over aggregation sharpness, ensuring stable performance across diverse data distributions from strong non-IID to IID.
    
    \item \textbf{Extensive Comparative Evaluations:}  
    We comprehensively evaluate SCARLET against state-of-the-art distillation-based FL methods, demonstrating consistent superiority in accuracy, convergence speed, and communication efficiency across diverse non-IID scenarios.
    
    \item \textbf{Open-Source Implementations for Reproducibility:}  
    We publicly release the source code for SCARLET along with our re-implementations of multiple state-of-the-art baseline methods. This facilitates transparent and reproducible evaluations and supports future research in communication-efficient FL.
\end{itemize}

The remainder of this paper is organized as follows: Section \ref{related_work} reviews related work on parameter-sharing and distillation-based FL methods. Section \ref{proposed_method} describes the SCARLET framework, including the prediction caching mechanism and adaptive aggregation strategies. Section \ref{section:experiments} presents our experimental evaluation. Section \ref{limitations} discusses the limitations of the proposed method, and Section \ref{conclusion} concludes with a summary of findings and future directions.

\section{Related Work}
\label{related_work}
\subsection{Federated Learning (FL)}
Federated Learning (FL) is a framework for training machine learning models across decentralized clients while preserving data privacy. Since its introduction by \cite{fedavg} with the Federated Averaging (FedAvg) algorithm, FL research has focused on challenges like communication efficiency, non-IID data distributions, and scalability.

Handling heterogeneous data, or non-IID  data, where client distributions differ significantly, is a key challenge in FL. Traditional FL methods such as FedAvg are not optimized for these non-IID settings, resulting in slower convergence and reduced accuracy. Various solutions have been proposed, including personalized FL methods like pFedMe \cite{pfedme}, which allows client-specific adaptation, and regularization-based approaches such as FedProx \cite{fedprox}, which adds a penalty term to local updates to mitigate client drift. SCAFFOLD \cite{scaffold} addresses local model divergence using control variates, and FedNTD \cite{fedntd} employs selective distillation to retain knowledge from “not-true” classes, maintaining a global perspective without extra communication.

Despite these innovations, most methods still rely on parameter sharing, which incurs communication costs. Efficient knowledge transfer remains crucial, especially in non-IID environments.

\subsection{Communication-Efficient FL}
Communication efficiency in FL minimizes data transmission between clients and the server, addressing bandwidth and latency concerns. Techniques include quantization, sparsification, and knowledge distillation.

Quantization reduces the bit precision of model updates, thus lowering data size. FedPAQ \cite{fedpaq} applies quantization to compressed client updates for efficiency. FedCOMGATE \cite{fedcomgate} combines quantization with gradient tracking, aligning local and global updates to support non-IID data while reducing communication.

Sparsification transmits only a subset of gradient elements, focusing on key information. Adaptive Gradient Sparsification \cite{ags} dynamically adjusts sparsity based on client needs, using a bidirectional top-k approach that retains key gradients to balance communication efficiency and model accuracy.

Knowledge distillation, which shares distilled knowledge (such as output logits) instead of full parameters, allows model heterogeneity by transmitting only essential information. This approach accommodates model heterogeneity among clients by transferring essential output information rather than the model structure itself. Within distillation methods, approaches like FedDF \cite{feddf} aggregate client outputs on the server, allowing clients to download an updated model without the need to exchange full parameters. Similarly, FedKD \cite{fedkd} reduces communication costs by employing mutual distillation between a global model and a local model while dynamically compressing exchanged gradients, achieving significant communication savings without sacrificing performance. However, our research focuses on a different class of distillation methods that exclusively exchange model outputs bidirectionally between clients and the server, further enhancing communication efficiency.

\subsection{Distillation-Based FL}
Distillation-based FL reduces the communication overhead from parameter exchange by transmitting only model outputs (logits or soft-labels). Unlike traditional knowledge distillation \cite{kd}, which compresses models, distillation-based FL enables collaborative learning among heterogeneous clients, managing model heterogeneity while reducing communication.

Some studies use public data in distillation. FedMD \cite{fedmd} enables clients with different models to collaborate by sharing logits derived from a labeled public dataset, allowing clients to refine models without sharing raw data. FD \cite{fd} reduces communication by aggregating label-wise average soft-labels from clients’ private data, while DS-FL \cite{dsfl} improves on FD by using unlabeled public data and introducing Entropy Reduction Aggregation (ERA) to handle high-entropy aggregated soft-labels in non-IID settings.

Recent methods further optimize communication, personalization, and non-IID handling. CFD \cite{cfd} employs soft-label quantization and delta coding to reduce data transmission, and COMET \cite{comet} clusters clients with similar data, sharing only soft-labels   
for improved personalization. Selective-FD \cite{selectivefd} filters ambiguous knowledge through selective sharing, using client-side selectors and a server-side selector to handle out-of-distribution samples and high-entropy predictions.

However, existing distillation-based FL methods suffer from two fundamental limitations. First, they involve redundant communication, as they do not evaluate how significantly soft-labels are updated in each round. Second, their aggregation mechanisms (e.g., conventional ERA) are often mathematically unstable, leading to poor performance in non-IID settings. These limitations motivate our proposed method, SCARLET, which explicitly addresses such inefficiencies by introducing both a soft-label caching mechanism and the stable Enhanced ERA mechanism.

\subsection{FL with Caching Mechanisms}
Several studies have proposed FL methods incorporating caching mechanisms. CacheFL \cite{cachefl} employs a caching mechanism that locally stores global models to reduce training time, enabling resource-limited clients to quickly access cached models. FedCache \cite{fedcache} utilizes sample-specific hash mappings and server-side caching to efficiently deliver personalized knowledge without relying on public datasets. FedCache 2.0 \cite{fedcache2} replaces logit exchanges with dataset distillation, treating the distilled datasets as a centralized knowledge cache on the server, from which clients download information aligned with their data.

However, these existing caching approaches fundamentally differ from the soft-label caching mechanism proposed in our method, SCARLET. Specifically, SCARLET’s caching mechanism explicitly aims to reduce the communication overhead associated with redundant soft-label transmission in each communication round of distillation-based FL, addressing distinct challenges and objectives compared to previous methods.

\section{Proposed Method}
\label{proposed_method}
\subsection{Background and Notations}
\label{subsection:background_and_notations}
Consider a cross-device FL setting where $K$ clients are participating while communicating with a central server. We assume an $N$-class classification task. Each client $k \in [K]$ has its classification model $\mathbf{w}_k$ and labeled private dataset $\mathcal{B}_k$. Each data sample $b \in \mathcal{B}_k$ is a pair $(\mathbf{x},y)$ where $\mathbf{x} \in \mathbb{R}^d$ is the input and $y \in [N]$ is the class label.

In FedAvg \cite{fedavg}, a standard FL method, each client $k \in [K]$ first updates its model $\mathbf{w}_k$ with its private dataset $\mathcal{B}_k$ using stochastic gradient descent (SGD). Then, each client uploads its updated model parameters or gradients to the server. The server aggregates these uploaded parameters to obtain the updated global model \(\mathbf{w}_g\) as follows:
\begin{equation}
    \mathbf{w}_g = \sum_{k=1}^{K} \frac{\lvert \mathcal{B}_k \rvert}{ \sum_{j=1}^{K} \lvert \mathcal{B}_j \rvert} \mathbf{w}_k,
\end{equation}
where \(\lvert \cdot \rvert\) denotes the cardinality (size) of a set.
Subsequently, the server broadcasts the global model $\mathbf{w}_g$ to all clients. This procedure is repeated for a finite number of communication rounds.

In DS-FL \cite{dsfl}, a distillation-based FL method that serves as the baseline for SCARLET, each client and the server exchange soft-labels instead of model parameters. We assume that each client $k \in [K]$ has not only its labeled private dataset $\mathcal{B}_k$ but also the public unlabeled dataset $\mathcal{P}$. After each client updates its model $\mathbf{w}_k$ as in FedAvg, it generates local soft-labels $z_{k}^{t} = \sigma(\mathbf{x}; \mathbf{w}_k)$ for $\mathbf{x} \in \mathcal{P}^t$, where $\mathcal{P}^t$ is unlabeled public dataset used in communication round $t$. The local soft-labels $z_{k}^{t}$ are uploaded from each client to the server, and the server aggregates them to create the global soft-labels $\hat{z}^t$ with ERA as follows:
\begin{equation}
    \label{equation:era}
    \hat{z}^t = \text{Softmax} \left( \left. \frac{1}{K} \sum_{k=1}^{K} z_{k}^{t} \right| T \right),
\end{equation}
where $\text{Softmax}(\cdot \mid T)$ denotes the softmax function with temperature $T$. As the temperature decreases, the entropy of the softmax function's output also decreases. Subsequently, the server broadcasts the global soft-label $\hat{z}^t$ to all clients. In the next communication round, the clients first update their local model with the broadcasted soft-labels $\hat{z}^{t-1}$ and the public unlabeled dataset $\mathcal{P}^{t-1}$ as follows:

\begin{equation}
    \mathbf{w}_{k}^{t} \gets \mathbf{w}_{k}^{t}  - \eta_\mathrm{dist} \nabla \phi_\mathrm{dist} (\mathcal{P}^{t-1}, \mathbf{w}_{k}^{t}, \hat{z}^{t-1}),
\end{equation}
where $\eta_\mathrm{dist}$ is the learning rate and $\phi_\mathrm{dist} (\cdot, \mathbf{w}_{k}^{t}, \cdot)$ is the loss function in the distillation procedure. The loss function $\phi_\mathrm{dist}$ is exemplified by the Kullback–Leibler divergence. Along with the clients’ local models, the global model $\mathbf{w}_{g}^{t}$ on the server is also updated with the global soft-labels $\hat{z}^{t}$ and public unlabeled dataset $\mathcal{P}^{t}$.

To avoid ambiguity, we clarify that in this paper, we refer to the outputs shared in distillation-based FL methods as \textit{soft-labels} rather than \textit{logits}. Although the term \textit{logits} is often used in some prior works, such as \cite{dsfl}, \cite{fd}, to refer to the output of the softmax function, it more commonly denotes the unnormalized model output before softmax function. Consequently, we will consistently use the term \textit{soft-labels} to describe the normalized outputs used in the distillation process, aligning with standard terminology in classification tasks.

For clarity, we list the notation used for the paper in Table \ref{table:notations}. Superscripts indicate the round index, while subscripts denote the client index.

\begin{table}[t]
\centering
\caption{List of Key Notations}
\label{table:notations}
\begin{tabular}{cl}
\toprule
\textbf{Notation} & \textbf{Description} \\
\midrule
$T$ & Total number of communication rounds \\
$E$ & Number of local iterations per round \\
$K$ & Total number of clients \\
$\mathcal{B}_k$ & Labeled private dataset of client $k \in [K]$ \\
$\mathcal{P}$ & Unlabeled public dataset \\
$\mathcal{P}^t$ & Unlabeled public dataset used in round $t \in [T]$ \\
$\mathcal{I}^t$ & Indices of $\mathcal{P}^t$ within $\mathcal{P}$ \\
$N$ & Number of classes for the classification task \\
$\mathbf{w}_k$ & Classification models of each client $k \in [K]$ \\
$\mathbf{w}_g$ & Classification model of the server \\
$\sigma(\mathbf{x}; \mathbf{w})$ & Soft-labels generated by model $\mathbf{w}$ for input data $\mathbf{x}$ \\
\bottomrule
\end{tabular}
\end{table}

\begin{algorithm}[!t]
\caption{SCARLET (Full Participation Scenario)}
\label{algorithm:scarlet}
\begin{algorithmic}[1]

\State \textbf{Server Initialization:}
\State Initialize global model $\mathbf{w}_g$ and all client models $\{\mathbf{w}_k\}_{k=1}^K$
\State Initialize global cache $\mathcal{C}_g$ and all local caches $\{\mathcal{C}_k\}_{k=1}^K$
\State Set cache duration $D$
\State Distribute public dataset $\mathcal{P}$ to all clients $k \in [K]$
\State
\For {$t=1,\cdots,T$ \textbf{communication rounds}}\textbf{:}
    \State \textbf{Server:}
    \State Randomly select subset $\mathcal{P}^t \subset \mathcal{P}$
    \State Compute $\mathcal{I}_{\text{req}}^{t} \gets \{i \in \mathcal{I}^t \mid \mathcal{C}_{g}(i) \text{ does not exist} \}$ \newline \hspace*{2em} \AlgComment{Identify samples not cached to request} \label{alg:scarlet:compute_req}
    \State Send $\mathcal{I}_{\text{req}}^{t}$ to all clients $k \in [K]$
    \State
    \If{$t > 1$}
        \State Send $\gamma^{t-1}, \hat{z}_{\text{req}}^{t-1}, \mathcal{I}^{t-1}$ to all clients $k \in [K]$ \label{alg:scarlet:send_signals}
    \EndIf

    \State
    \State \textbf{Client $k \in [K]$ (in parallel):}
    \State $\mathbf{w}_{k}^{(t,0)} \gets \mathbf{w}_{k}^{(t-1,E)}$
    \If{$t > 1$}
        \State $\hat{z}^{t-1} \gets $ \newline \hspace*{4em} \Call{UpdateLocalCache}{$\mathcal{C}_k$, $\gamma^{t-1}$, $\hat{z}_{\text{req}}^{t-1}$, $\mathcal{I}^{t-1}$} \newline \hspace*{6em}\AlgComment{See Algorithm~\ref{algorithm:soft-label_caching_module} for definition}
        \State $\mathbf{\tilde{w}}_{k}^{(t,0)}\gets \mathbf{w}_{k}^{(t,0)}$
        \For {$i=1,\cdots,E_\text{dist}$}\textbf{:} \AlgComment{Distillation step} \label{alg:scarlet:distillation_step}
            \State $\mathbf{\tilde{w}}_{k}^{(t,i)} \gets \newline \hspace*{6em} \mathbf{\tilde{w}}_{k}^{(t,i-1)}  - \eta_\text{dist} \nabla \phi_\text{dist} (\mathcal{P}^{t-1}, \mathbf{\tilde{w}}_{k}^{(t,i-1)}, \hat{z}^{t-1}) $
        \EndFor
        \State $\mathbf{w}_{k}^{(t,0)}\gets \mathbf{\tilde{w}}_{k}^{(t,E_{\text{dist}})}$
    \EndIf
    \For {$j=1,\cdots,E$}\textbf{:} \AlgComment{Local training step}
        \State $\mathbf{w}_{k}^{(t,j)} \gets \mathbf{w}_{k}^{(t,j-1)} - \eta \nabla \phi (\mathcal{B}_k, \mathbf{w}_{k}^{(t,j-1)})$
    \EndFor
    \State $\mathcal{P}_{k}^{t} \gets \{\mathcal{P}[i] \mid i \in \mathcal{I}_{\text{req}}^{t}\}$
    \State Send soft-labels $z_{k}^{t} \gets \sigma(\mathcal{P}_{k}^{t}, \mathbf{w}_{k}^{(t,E)})$ to server \label{alg:scarlet:client_send}
    \State
    \State \textbf{Server:}
    \State Set $\mathbf{w}_{g}^{(t,0)} \gets \mathbf{w}_{g}^{(t-1,E)}$
    \State Aggregate $\{z_{k}^{t}\}_{k \in [K]}$ with Enhanced ERA to get $\hat{z}_{\text{req}}^{t}$ \newline \hspace*{6em} \AlgComment{See (Eq.~\ref{equation:enhanced_era}}) \label{alg:scarlet:aggregate}
    \State $\hat{z}_{\text{cached}}^{t} \gets \{ \mathcal{C}_g(i) \mid i \in \mathcal{I}^t \setminus \mathcal{I}_{\text{req}}^{t} \} \newline \hspace*{4em} $ \AlgComment{Retrieve soft-labels from global cache}
    \State $\hat{z}^t \gets \hat{z}_{\text{req}}^{t} \cup \hat{z}_{\text{cached}}^{t}$ \AlgComment{Assemble full soft-label set}
    \For {$i=1,\cdots,E_{\text{dist}}$}\textbf{:} \AlgComment{Global distillation step}
        \State $\mathbf{w}_{g}^{(t,i)} \gets \mathbf{w}_{g}^{(t,i-1)} - \eta_{\text{dist}} \nabla \phi_{\text{dist}} (\mathcal{P}^{t}, \mathbf{w}_{g}^{(t,i-1)}, \hat{z}^t)$
    \EndFor
    \State $\gamma^t \gets$ \Call{UpdateGlobalCache}{$\mathcal{C}_g$, $\hat{z}^t$, $\mathcal{I}^t$, $t$, $D$} \newline \hspace*{6em}\AlgComment{See Algorithm~\ref{algorithm:soft-label_caching_module} for definition} \label{alg:scarlet:update_global_cache}
\EndFor

\end{algorithmic}
\end{algorithm}

\subsection{SCARLET: System Design and Workflow}
SCARLET (Semi-supervised federated distillation with global CAching and Reduced soft-Label EnTropy) is a novel distillation-based FL framework to address inefficiencies observed in existing distillation-based FL methods. SCARLET integrates synchronized soft-label caching and an adaptive aggregation mechanism called Enhanced ERA, significantly enhancing communication efficiency and adaptability to heterogeneous client environments.

Algorithm~\ref{algorithm:scarlet} describes the overall workflow of SCARLET. At the beginning of each communication round $t$, the server selects a subset of the public dataset, $\mathcal{P}^t$, to be used for distillation.

While the SCARLET framework can accommodate both full and partial client participation, we first assume the fundamental case of full client participation (i.e., all $K$ clients participate in every round) for analytical clarity. This full participation setting allows a focused analysis on the introduction of our two primary contributions: the soft-label caching mechanism (Section~\ref{subsubsection:cache_module} and Algorithm~\ref{algorithm:soft-label_caching_module}) and the Enhanced ERA mechanism (Section~\ref{subsection:enhanced_era}).

The necessary considerations for the more practical and complex scenario of partial client participation (i.e., where clients participate intermittently), which extends this fundamental case, will be detailed separately in Section \ref{subsection:partial_participation}. The subsequent experimental evaluation in Section~\ref{subsection:experimental_results} will then validate SCARLET's effectiveness under both of these settings: the full participation scenario (Sections \ref{subsubsection:comparative_evaluaion} to \ref{subsubsection:ablation_study_of_enhanced_era}) and the partial participation scenario (Section \ref{subsubsection:evaluation_under_partial_client_participation}).

While the workflow follows that of the baseline DS-FL \cite{dsfl}, SCARLET introduces two major differences.

First, after local training, each client selectively transmits soft-labels only for public dataset samples that are not currently cached on the server. This selective transmission is based on the request list $\mathcal{I}_{\text{req}}^{t}$ computed by the server (Algorithm~\ref{algorithm:scarlet}, line~\ref{alg:scarlet:compute_req}). This is managed by a global cache $\mathcal{C}_g$ on the server, which stores previously aggregated soft-labels. This caching mechanism, detailed in Section~\ref{subsubsection:cache_module}, synchronizes soft-label storage between the server and clients via cache signals $\gamma^t$ (generated in line~\ref{alg:scarlet:update_global_cache} and sent in line~\ref{alg:scarlet:send_signals}), significantly reducing communication overhead compared to DS-FL, which retransmits soft-labels in every round.

Second, SCARLET introduces a novel aggregation process at the server after collecting soft-labels (Algorithm~\ref{algorithm:scarlet}, line~\ref{alg:scarlet:aggregate}). We propose Enhanced ERA, an adaptive aggregation strategy that enables robust control over the entropy of aggregated predictions. In contrast to the conventional ERA used in DS-FL, which often requires careful tuning of the temperature parameter, Enhanced ERA adopts a new parameterization that allows flexible and stable adjustment of label sharpness. This design enhances robustness and stability under varying degrees of data heterogeneity across clients. Enhanced ERA is described in detail in Section~\ref{subsection:enhanced_era}.

Together, the soft-label caching mechanism and Enhanced ERA distinctly position SCARLET as a highly communication-efficient and robust alternative to existing distillation-based FL methods. The subsequent sections provide detailed descriptions and analysis of these components.

\subsection{Soft-Label Caching in SCARLET}
\label{subsubsection:cache_module}

\begin{figure*}[t]
\begin{center}
  \subfloat[Public data (not cached)]{%
    \includegraphics[height=6.8cm]{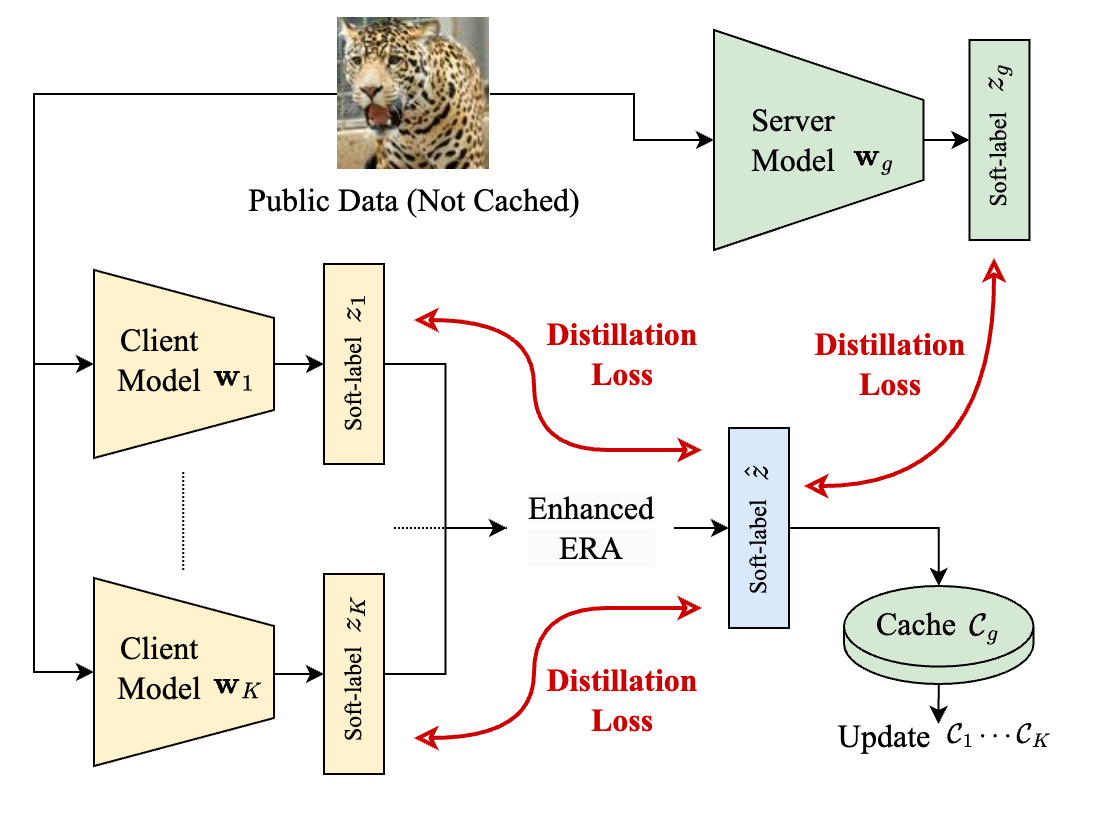}%
    \label{fig:detail:not_cached}%
  }\hfill
  \subfloat[Public data (cached)]{%
    \includegraphics[height=6.8cm]{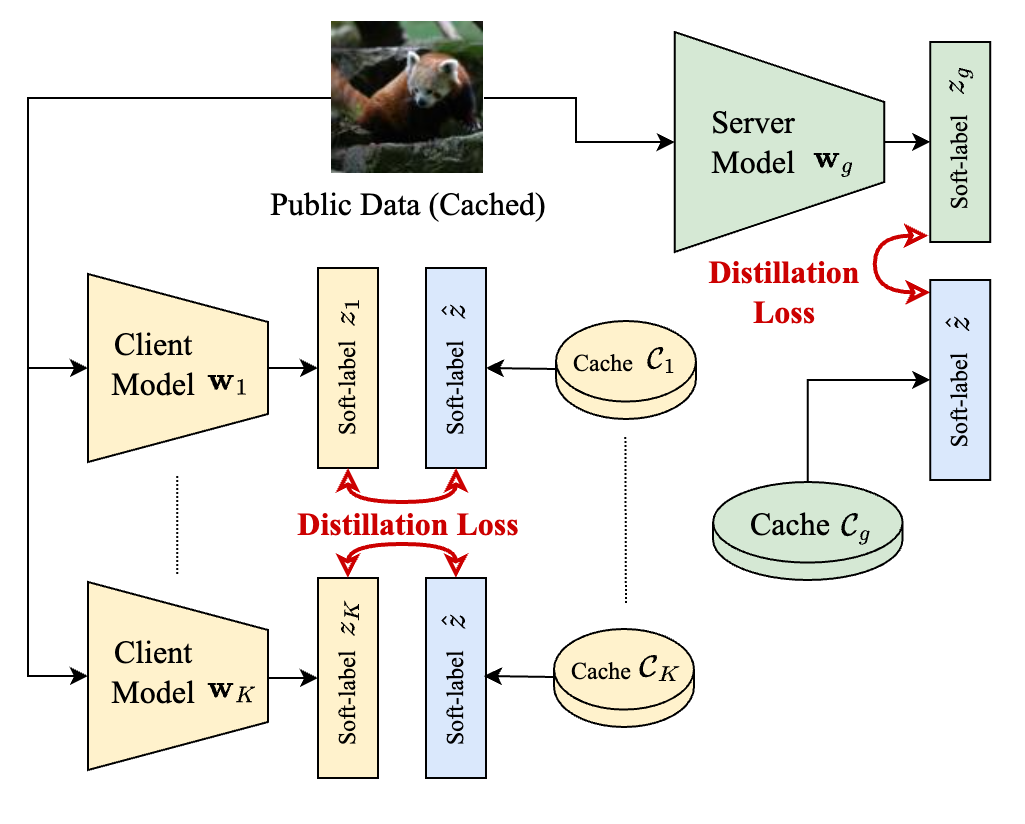}%
    \label{fig:detail:cached}%
  }
\end{center}
\caption{Illustration of the soft-label caching mechanism in SCARLET. When public data is not cached (\ref{fig:detail:not_cached}), clients transmit their soft-labels to the server, where they are aggregated using Enhanced ERA and stored in the server cache before synchronizing with client caches. When public data is cached (\ref{fig:detail:cached}), both the server and clients retrieve soft-labels from their respective caches for knowledge distillation, eliminating the need for additional soft-label communication.}
\label{figure:detail}
\end{figure*}

\begin{algorithm}[t]
\caption{Soft-Label Caching}
\label{algorithm:soft-label_caching_module}
\begin{algorithmic}[1]

\Function{UpdateGlobalCache}{$\mathcal{C}_g$, $\hat{z}^t$, $\mathcal{I}^t$, $t$, $D$}
    \State \textbf{Input:} Cache $\mathcal{C}_g$, soft-labels $\hat{z}^t$, indices $\mathcal{I}^t$,\newline \hspace*{5em}current round $t$, cache duration $D$
    \State \textbf{Output:} Cache signals $\gamma^t$
    \State Initialize $\gamma^t \gets [\;]$
    \ForAll {$i, z \in (\mathcal{I}^t, \hat{z}_t)$}
        \If {$\mathcal{C}_{g}(i) \text{ does not exist}$}
            \State $\mathcal{C}_g (i) \gets (z, t)$
            \State Append \texttt{NEWLY\_CACHED} to $\gamma^t$
        \Else
            \State $(z_c, t_c) \gets \mathcal{C}_g(i)$
            \If {$t - t_c \leq D$}
                \State Append \texttt{CACHED} to $\gamma^t$
            \Else
                \State Delete $\mathcal{C}_g(i)$
                \State Append \texttt{EXPIRED} to $\gamma^t$
            \EndIf
        \EndIf
    \EndFor
    \State \Return $\gamma^t$
\EndFunction
\State
\Function{UpdateLocalCache}{$\mathcal{C}_k$, $\gamma^{t-1}$, $\hat{z}_{\text{req}}^{t-1}$, $\mathcal{I}^{t-1}$}
    \State \textbf{Input:} Cache $\mathcal{C}_k$, cache signals $\gamma^{t-1}$,\newline \hspace*{5em}soft-labels $\hat{z}_{\text{req}}^{t-1}$, indices $\mathcal{I}^{t-1}$
    \State \textbf{Output:} Soft-labels $\hat{z}^{t-1}$
    \State Initialize $\hat{z}^{t-1} \gets [\;]$
    \State Initialize queue $Q \gets \hat z_{\mathrm{req}}^{t-1}$  \AlgComment{treat as FIFO queue}
    
    \ForAll {$i, z \in (\mathcal{I}^{t-1}, \hat{z}^t)$}
        \If {$\gamma^{t-1}[i]$ is \texttt{NEWLY\_CACHED}}
            \State $\mathcal{C}_k (i) \gets Q.\text{pop}()$
            \State Append $\mathcal{C}_k(i)$ to $\hat{z}^{t-1}$
        \ElsIf {$\gamma^{t-1}[i]$ is \texttt{CACHED}}
            \State Append $\mathcal{C}_k(i)$ to $\hat{z}^{t-1}$
        \ElsIf {$\gamma^{t-1}[i]$ is \texttt{EXPIRED}}
            \State Delete $\mathcal{C}_k(i)$
            \State Append $Q.\text{pop}()$ to $\hat{z}^{t-1}$
        \EndIf
    \EndFor

    \State \Return $\hat{z}^{t-1}$
\EndFunction

\end{algorithmic}
\end{algorithm}

The \textbf{soft-label caching mechanism} is introduced in SCARLET to address the communication inefficiency caused by redundant transmission of soft-labels in distillation-based FL methods. Specifically, SCARLET reduces communication overhead by caching and reusing soft-labels across communication rounds.

In general, distillation-based FL methods require clients to transmit soft-labels for the same public dataset samples across different rounds, even when the variations in these labels between rounds are minimal. This redundancy arises because local models tend to overfit private datasets during training, resulting in only marginal changes in soft-labels for public data samples. SCARLET addresses this inefficiency by caching and reusing soft-labels for a specified duration, thus reducing redundant transmissions.

The caching process operates on both the server and clients, as illustrated in Fig.~\ref{figure:detail}, ensuring efficient reuse of soft-labels during distillation. The detailed procedure is described in  Algorithm~\ref{algorithm:soft-label_caching_module}.
At the server side, a global cache $\mathcal{C}_g$ maintains aggregated soft-labels along with the corresponding indices of public dataset samples and their timestamps. At the beginning of each communication round, the server identifies which samples in the selected public subset $\mathcal{P}^t$ are already cached---i.e., those with valid and non-expired entries in $\mathcal{C}_g$ based on a predefined cache duration $D$. Only the samples that are not cached (either not previously stored or whose entries have expired) are selected for soft-label updates. This selective update mechanism significantly reduces the amount of data transmitted during each round.

Similarly, clients maintain local caches $\mathcal{C}_k$, which store the soft-labels received from the server. During the distillation process, clients attempt to retrieve soft-labels for public dataset samples from their local caches. For samples that are not cached, the client uses newly received soft-labels from the server and updates its cache accordingly. This dual caching strategy across server and clients minimizes communication overhead while preserving the effectiveness of the distillation process.

The caching mechanism introduces minimal additional costs compared to the communication overhead it reduces. Each round, the server sends cache synchronization signals, denoted as $\gamma^t$ in Algorithm~\ref{algorithm:soft-label_caching_module}, to clients. These signals indicate the cache status of public dataset samples and can be represented as integers, resulting in an additional communication cost of \( O(\lvert \mathcal{P}^t \rvert) \), where \( \lvert \mathcal{P}^t \rvert \) is the number of public dataset samples selected per round. For memory, both the server and clients require \( O(\lvert \mathcal{P} \rvert \times N) \) to cache soft-labels, where \( \lvert \mathcal{P} \rvert \) is the total size of the public dataset and \( N \) is the number of classes. This memory usage is small compared to model sizes in FL and can be further optimized by storing caches on disk if necessary.

\begin{figure}[t]
\centering
\includegraphics[width=\linewidth]{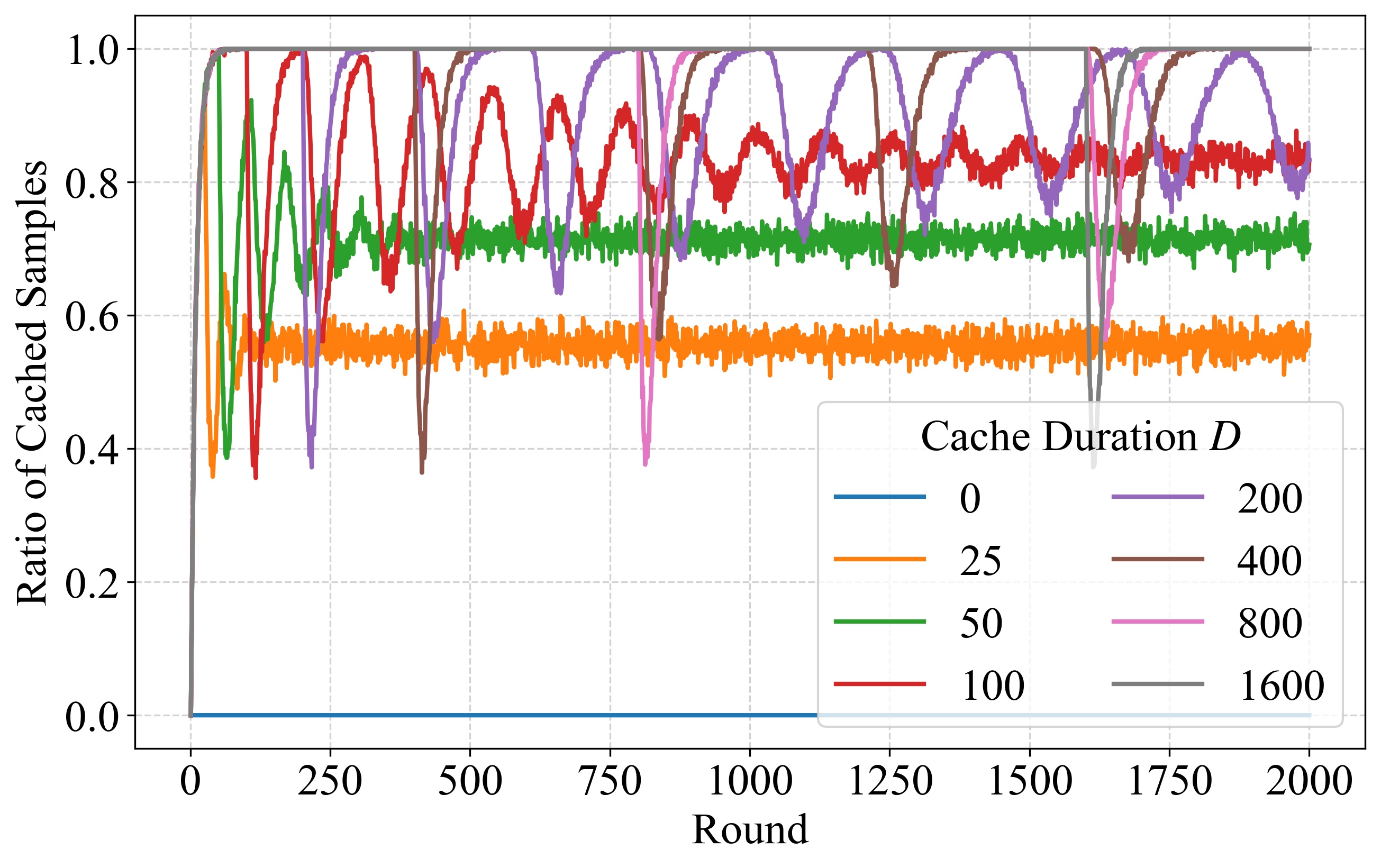}
\caption{Simulation of the ratio of cached samples per round as a function of cache duration $D$. Cached samples are defined as those in $\mathcal{P}^t$ that are found in the local cache and have not expired. The communication cost decreases as the ratio of cached samples increases, since only uncached samples require transmission.}
\label{fig:cache_simulation}
\end{figure}

A key advantage of this mechanism is that the impact of $D$ can be estimated using a lightweight, standalone simulation without executing the full FL process. This simulation, detailed in Appendix~\ref{appendix:cache_simulation}, models only the random sampling process---not the FL training itself---to rapidly predict the cache hit rate over time for a given $D$. Fig.~\ref{fig:cache_simulation} shows the result of such a simulation, modeling a scenario where 10\% of the total public dataset $\mathcal{P}$ is randomly selected as $\mathcal{P}^{t}$ each round. The figure confirms that $D$ dynamically controls the ratio of cached samples. Due to the random sampling, this ratio fluctuates before decaying and stabilizing over the rounds.

This simulation reveals that $D$ directly governs the cache's refresh frequency. In distillation-based FL, the transmission of soft-labels for these public samples constitutes the majority of communication cost. Therefore, even a conservative duration, such as $D < 100$, results in a high proportion of public data samples being cached, offering significant communication savings. However, for $D \ge 200$, the simulation shows periods where the cache ratio approaches 1.0. During these rounds, all samples used for distillation are retrieved from the cache. This means the system is training on identical, outdated soft-labels, and communication is reduced to only the cache signals. This risks both accuracy degradation from stale knowledge and wasted communication rounds.

This analysis of the simulation results clarifies the fundamental trade-off: As $D$ increases, a greater number of selected samples remain valid in the cache across rounds, leading to a higher cache hit rate and a corresponding reduction in the need to retransmit soft-labels. However, as the simulation highlights, excessively large values of $D$ reduce the frequency of cache updates, increasing the likelihood of reusing outdated soft-labels---particularly those that significantly deviate from the true labels----which may negatively impact training. These results demonstrate that the caching module effectively reduces communication overhead while providing a tunable trade-off between the reuse of cached soft-labels and their freshness, controlled via the cache duration parameter $D$.

This lightweight simulation thus provides a valuable tool for identifying a reasonable $D$ that balances cost reduction with cache freshness. The practical guideline for selecting $D$ using this simulation procedure, as well as its effectiveness, is discussed in the final paragraph of Section~\ref{subsubsection:ablation_study_of_cache_duration}.

\subsection{Handling Partial Client Participation}
\label{subsection:partial_participation}

When clients participate intermittently, the integrity of the local cache ($\mathcal{C}_k$) becomes a critical challenge. A client rejoining a session after being offline for several rounds will hold a stale cache, which is not synchronized with the server's global cache ($\mathcal{C}_g$). This client cannot correctly reconstruct the full soft-label set $\hat{z}^{t-1}$ from the standard update package ($\gamma^{t-1}, \hat{z}_{req}^{t-1}$) as described in Algorithm~\ref{algorithm:soft-label_caching_module}.

To address this, we introduce a ``catch-up package'' mechanism. When a client $k$ is selected for round $t$, the server checks if the client participated in the previous round $t-1$. If the client is synchronized, having participated in $t-1$, it receives the standard, lightweight update package as described in Algorithm~\ref{algorithm:scarlet} (line~\ref{alg:scarlet:send_signals}), containing only the cache signals $\gamma^{t-1}$ and the soft-labels for newly computed samples $\hat{z}_{req}^{t-1}$.

Conversely, if the client is stale, meaning it did not participate in round $t-1$, its local cache is not synchronized with the latest updates. Therefore, in addition to the standard lightweight package for round $t-1$, the server sends a catch-up package. This supplementary package contains the aggregated soft-labels required for the client to update its cache for all the rounds it was inactive. Before performing the distillation step, the client first uses this catch-up package to update its local cache $\mathcal{C}_k$ for all indices in $\mathcal{I}^{t-1}$, ensuring it is fully synchronized. It then proceeds with the distillation step (Algorithm~\ref{algorithm:scarlet}, line~\ref{alg:scarlet:distillation_step}) using the now-complete cache information.

This mechanism guarantees that all participating clients possess the correct teacher soft-labels for distillation. On the downlink, this incurs an additional communication cost specifically for stale clients, corresponding to the differential updates accumulated since their last participation to synchronize with the server's latest cache, thereby avoiding the need to synchronize all clients in every round. In contrast, the uplink contribution remains consistently communication-efficient: clients strictly compute and transmit soft-labels $z_k^t$ only for the samples in the server's request list $\mathcal{I}_{req}^t$ (Algorithm~\ref{algorithm:scarlet}, line~\ref{alg:scarlet:client_send}).

\subsection{Enhanced ERA in SCARLET}
\label{subsection:enhanced_era}

\begin{figure}[t]
  \centering

  \subfloat[High-entropy soft-label]{%
    \includegraphics[width=0.48\columnwidth]{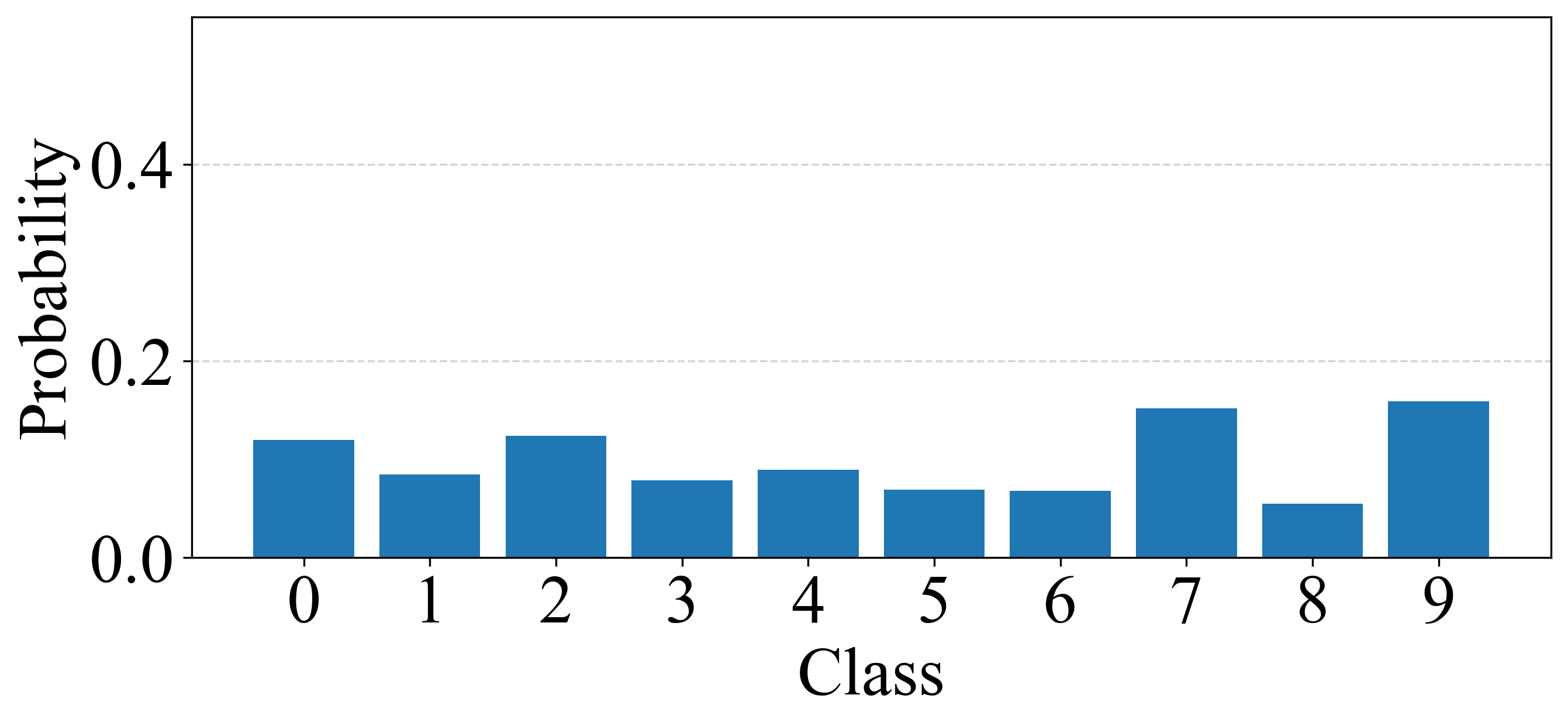}%
    \label{fig:comparison_era_enhanced_era:high-entropy}%
  }\hfill
  \subfloat[Low-entropy soft-label]{%
    \includegraphics[width=0.48\columnwidth]{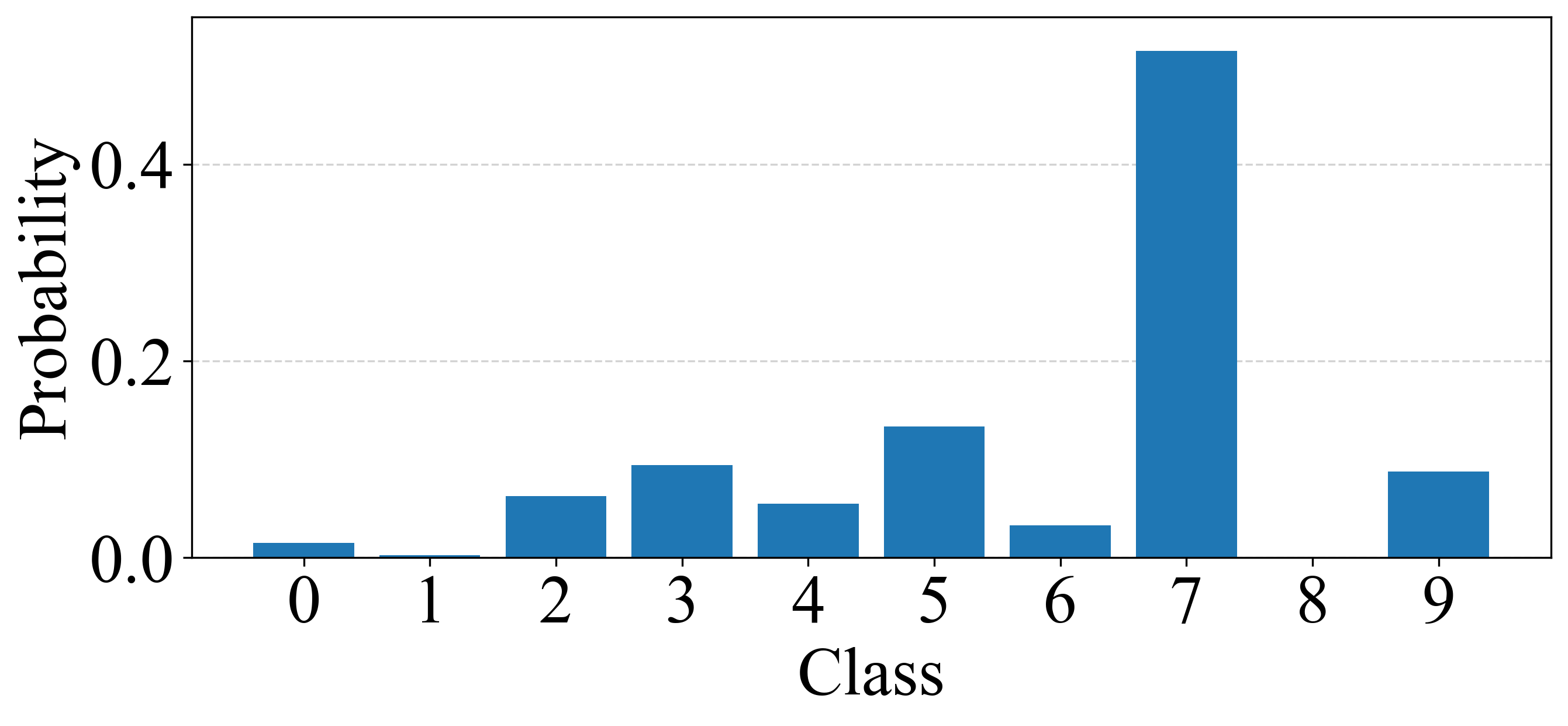}%
    \label{fig:comparison_era_enhanced_era:low-entropy}%
  }

  \subfloat[Entropy vs. $T$ using ERA for (\ref{fig:comparison_era_enhanced_era:high-entropy}) and (\ref{fig:comparison_era_enhanced_era:low-entropy}); shown left and right, respectively]{%
    \begin{minipage}[b]{\columnwidth}
      \centering
      \includegraphics[width=0.48\columnwidth]{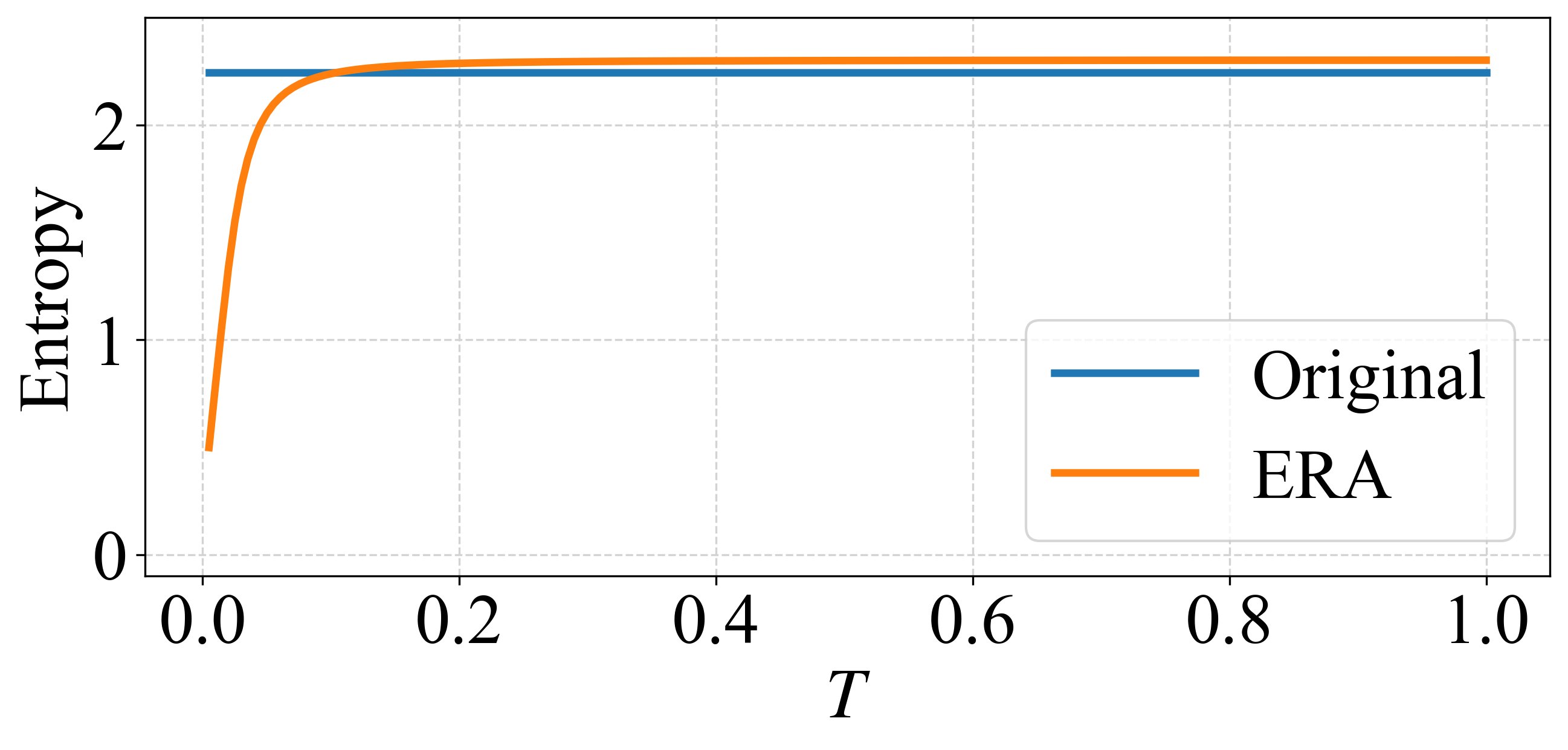}\hfill
      \includegraphics[width=0.48\columnwidth]{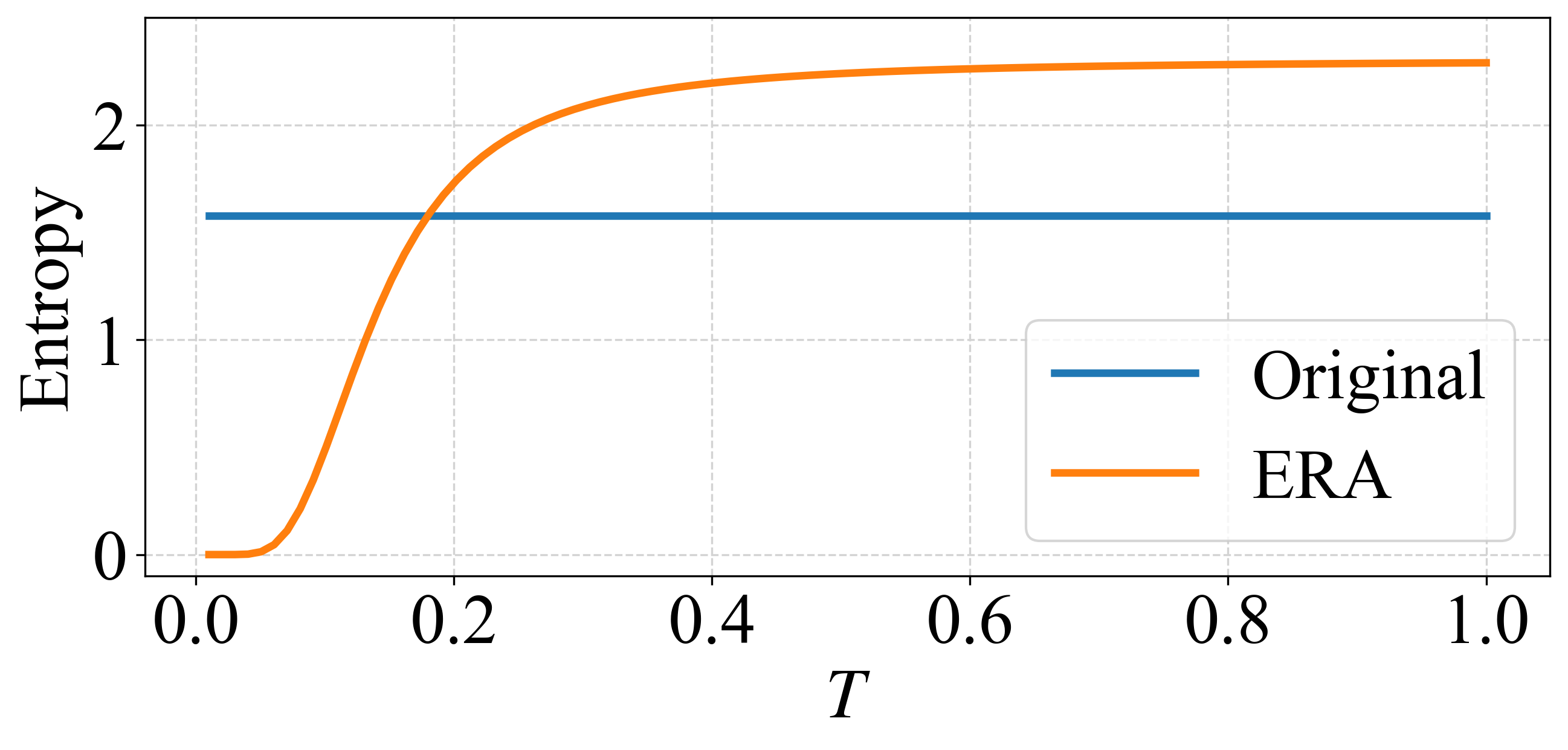}
      \label{fig:comparison_era_enhanced_era:era}
    \end{minipage}
  }

  \subfloat[Entropy vs.\ $\beta$ using Enhanced ERA for (\ref{fig:comparison_era_enhanced_era:high-entropy}) and (\ref{fig:comparison_era_enhanced_era:low-entropy}); shown left and right, respectively]{%
    \begin{minipage}[b]{0.96\columnwidth}
      \centering
      \includegraphics[width=0.48\columnwidth]{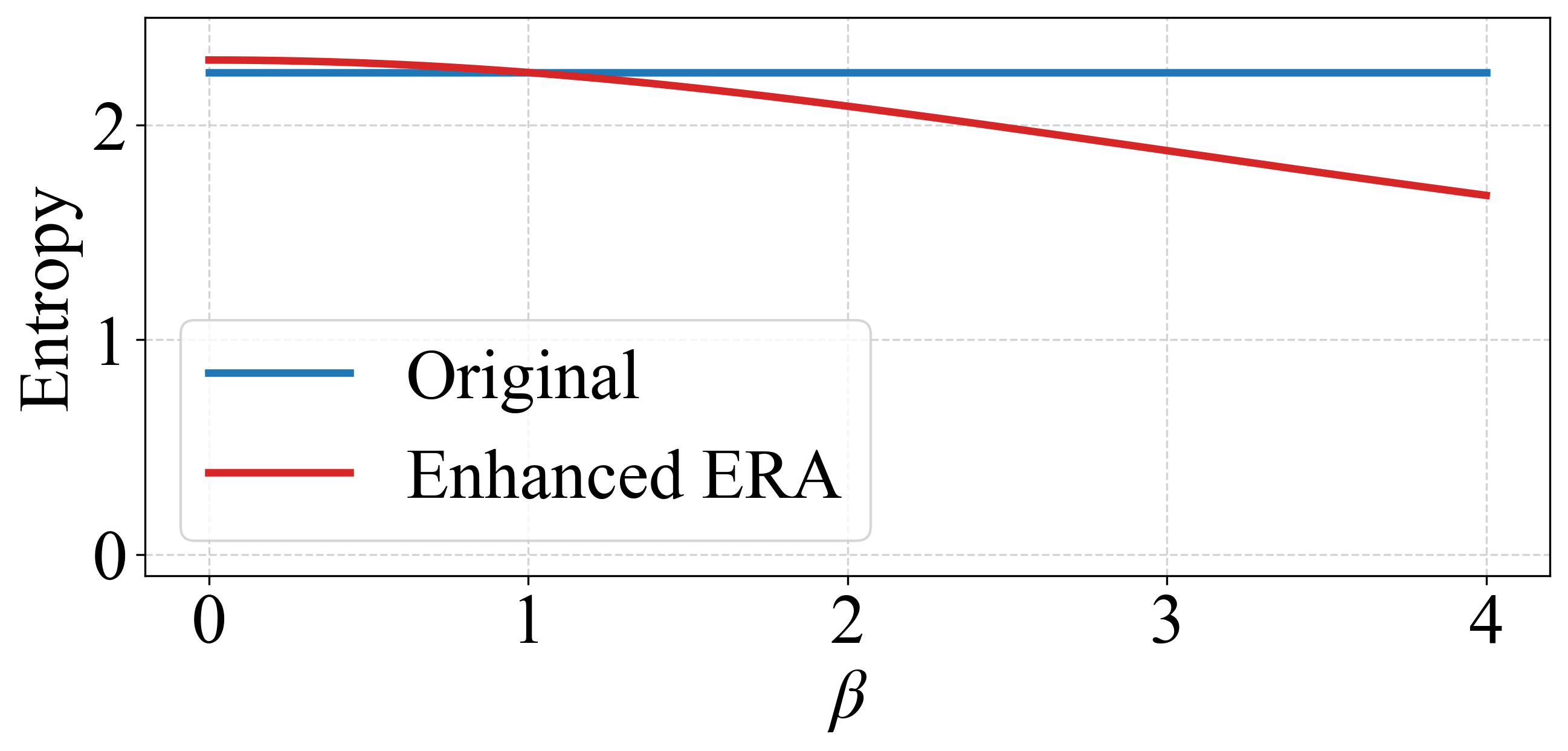}\hfill
      \includegraphics[width=0.48\columnwidth]{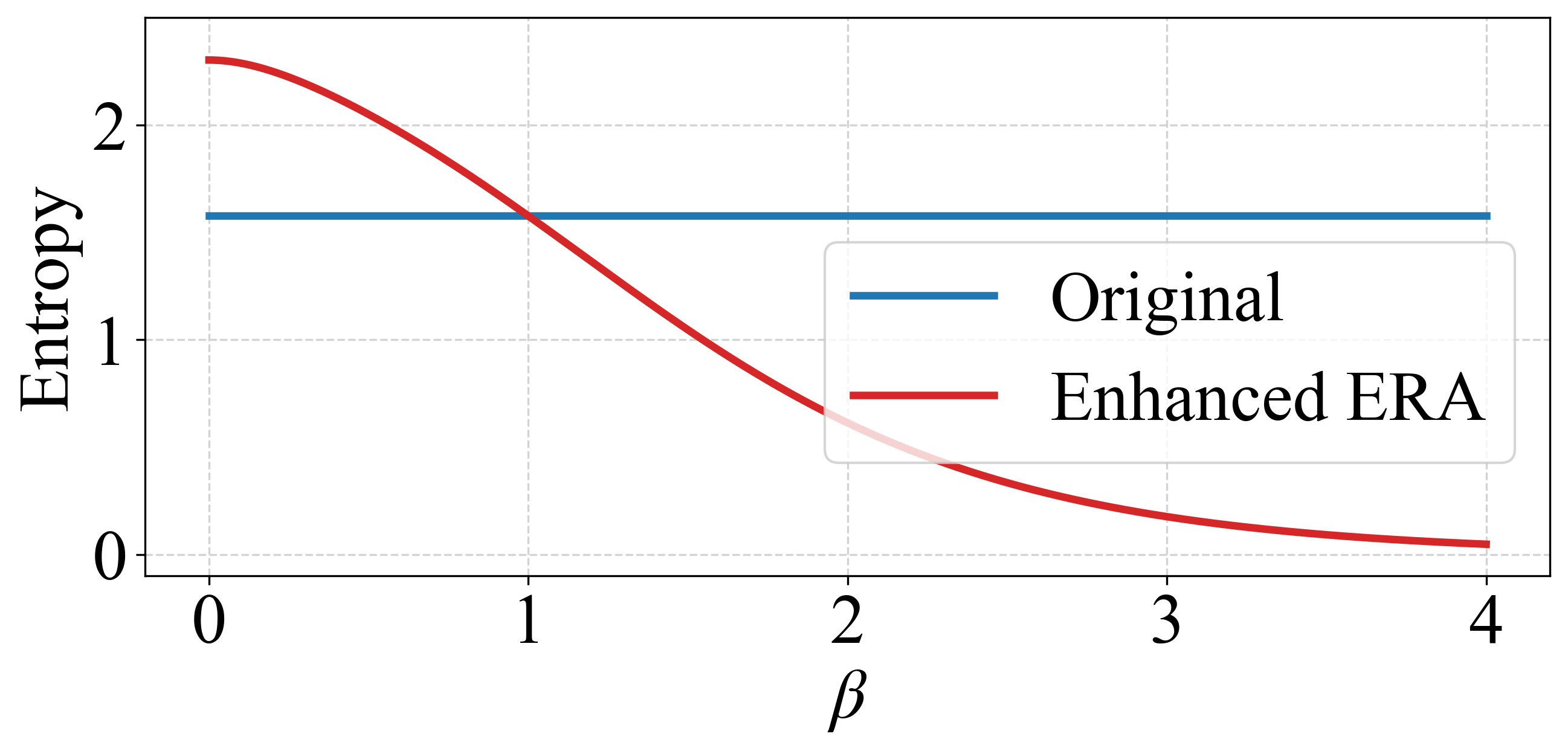}
    \end{minipage}
    \label{fig:comparison_era_enhanced_era:enhanced_era}
  }

  \caption{Comparison of ERA and Enhanced ERA with different parameter settings.
    (\ref{fig:comparison_era_enhanced_era:high-entropy}) and (\ref{fig:comparison_era_enhanced_era:low-entropy}) show the original high-entropy (low-confidence) and low-entropy (high-confidence) soft-labels, respectively. (\ref{fig:comparison_era_enhanced_era:era}) illustrates how ERA rapidly reduces entropy by decreasing temperature~$T$, particularly for (\ref{fig:comparison_era_enhanced_era:high-entropy}) compared to (\ref{fig:comparison_era_enhanced_era:low-entropy}).
    (\ref{fig:comparison_era_enhanced_era:enhanced_era}) demonstrates how Enhanced ERA provides smoother and more flexible entropy control by varying $\beta$, ensuring that $\beta = 1$ recovers the original entropy for both (\ref{fig:comparison_era_enhanced_era:high-entropy}) and (\ref{fig:comparison_era_enhanced_era:low-entropy}).
  } 
  \label{fig:comparison_era_enhanced_era}
\end{figure}

A primary challenge in distillation-based FL arises when aggregating soft-labels from clients with non-IID data. Simply averaging their heterogeneous predictions often results in an ambiguous global soft-label with high entropy, which provides a poor training signal and degrades performance.

To address this, \cite{dsfl} introduced Entropy Reduction Aggregation (ERA), defined in (Eq.~\ref{equation:era}), a method that intentionally sharpens the aggregated soft-labels. However, the original ERA must be carefully calibrated. The foundational concept of knowledge distillation \cite{kd} is to transfer generalization ability by using teacher's ``soft targets'' (referred to as soft-labels in this paper), which contain nuanced information in the relative probabilities of incorrect answers. Excessively reducing entropy turns these soft targets into near one-hot vectors, destroying this valuable information and increases the risk of overfitting.

This balance is particularly relevant in our realistic setting where the public and private datasets differ. For instance, when a model trained on CIFAR-10 \cite{cifar} sees a ``raccoon'' from CIFAR-100, its soft-label reflects a nuanced understanding of shared features (e.g., similarity to ``cat'' and ``dog''). Aggressively sharpening this into a confident ``cat'' prediction would incorrectly teach the model that a raccoon is a canonical cat, distorting its learned feature representation. Preserving some entropy is thus essential.

Moreover, in the early stages of training---when the model is not yet sufficiently trained and the soft-label tends to have high entropy, with all classes assigned relatively high probabilities---strong entropy reduction may remove the information associated with the correct class and instead amplify incorrect classes, thereby leading the model to learn from an incorrect label. Thus, for such high-entropy soft-labels, it is particularly important to avoid reducing the entropy too aggressively.

This practical challenge of finding a single, robust $T$ for mixed-entropy inputs is the core limitation of ERA. The averaged soft-labels exhibit widely varying entropy levels across samples (e.g., Fig.~\ref{fig:comparison_era_enhanced_era:high-entropy} versus Fig.~\ref{fig:comparison_era_enhanced_era:low-entropy}), and, as shown in Fig.~\ref{fig:comparison_era_enhanced_era:era}, ERA’s temperature $T$ is highly sensitive to this initial entropy.
Specifically, for high-entropy soft-labels (Fig.~\ref{fig:comparison_era_enhanced_era:high-entropy}), an extremely low $T$ is necessary to reduce ambiguity and extract a clear signal. However, applying such a low $T$ to low-entropy soft-labels (Fig.~\ref{fig:comparison_era_enhanced_era:low-entropy}) leads to aggressive over-sharpening. This excessive reduction destroys the nuanced information contained in the relative probabilities, effectively converting the soft-label into a one-hot vector. 
Conversely, setting a conservative $T$ to preserve low-entropy labels fails to sharpen high-entropy ones and can even increase their entropy.
Thus, finding a single value of $T$ that balances these conflicting requirements remains widely impractical.

To address this fundamental instability, we propose \textbf{Enhanced ERA}, defined as:
\begin{equation}
    \label{equation:enhanced_era}
    {\hat{z}^t}_i = \frac{\bigl(\overline{z^t}_{i}\bigr)^\beta}{\sum_{j=1}^{N} \bigl(\overline{z^t}_{j}\bigr)^\beta},
\end{equation}
where \(\overline{z^t} = \frac{1}{K} \sum_{k=1}^{K} z_{k}^{t}\) is the averaged local soft-labels, and \(\beta\) controls the aggregation sharpness. As $\beta$ increases beyond 1, the  distribution is monotonically sharpened. This is formally guaranteed by the principle of \textit{majorization}: we provide a proof that the output distribution for $\beta_2 > \beta_1$ \emph{majorizes} the distribution for $\beta_1$ in Appendix~\ref{appendix:majorization_proof}.

Beyond this core stability, Enhanced ERA directly solves the impasse of the conventional ERA. As illustrated in Fig.~\ref{fig:comparison_era_enhanced_era:enhanced_era}, the power-based formulation provides smooth, consistent control. Crucially, when $\beta=1$, it perfectly preserves the original entropy, regardless of whether the input is high-entropy (Fig.~\ref{fig:comparison_era_enhanced_era:high-entropy}) or low-entropy (Fig.~\ref{fig:comparison_era_enhanced_era:low-entropy}). This provides a stable ``identity'' baseline, allowing the server to preserve low-entropy labels (by setting $\beta \approx 1$) while still having a mechanism to gradually and predictably sharpen high-entropy labels (by setting $\beta > 1$).

This intuitive stability, observed in Fig.~\ref{fig:comparison_era_enhanced_era}, is underpinned by a formal mathematical difference between the two methods. The instability of ERA stems from its $1/T$ (inverse) relationship and its dependence on the absolute difference between probabilities. In contrast, the stability of Enhanced ERA stems from its $\beta$ (linear) relationship and its dependence on the ratio of probabilities. A formal comparative analysis, using log-probability ratios and sensitivity derivatives to prove this ``instability vs. stability'' dynamic, is provided in Appendix~\ref{appendix:analysis_sensitivity_stability}.

\begin{figure}[t]
  \centering

  \subfloat[Dirichlet $\alpha=0.05$]{%
    \includegraphics[width=0.48\columnwidth]{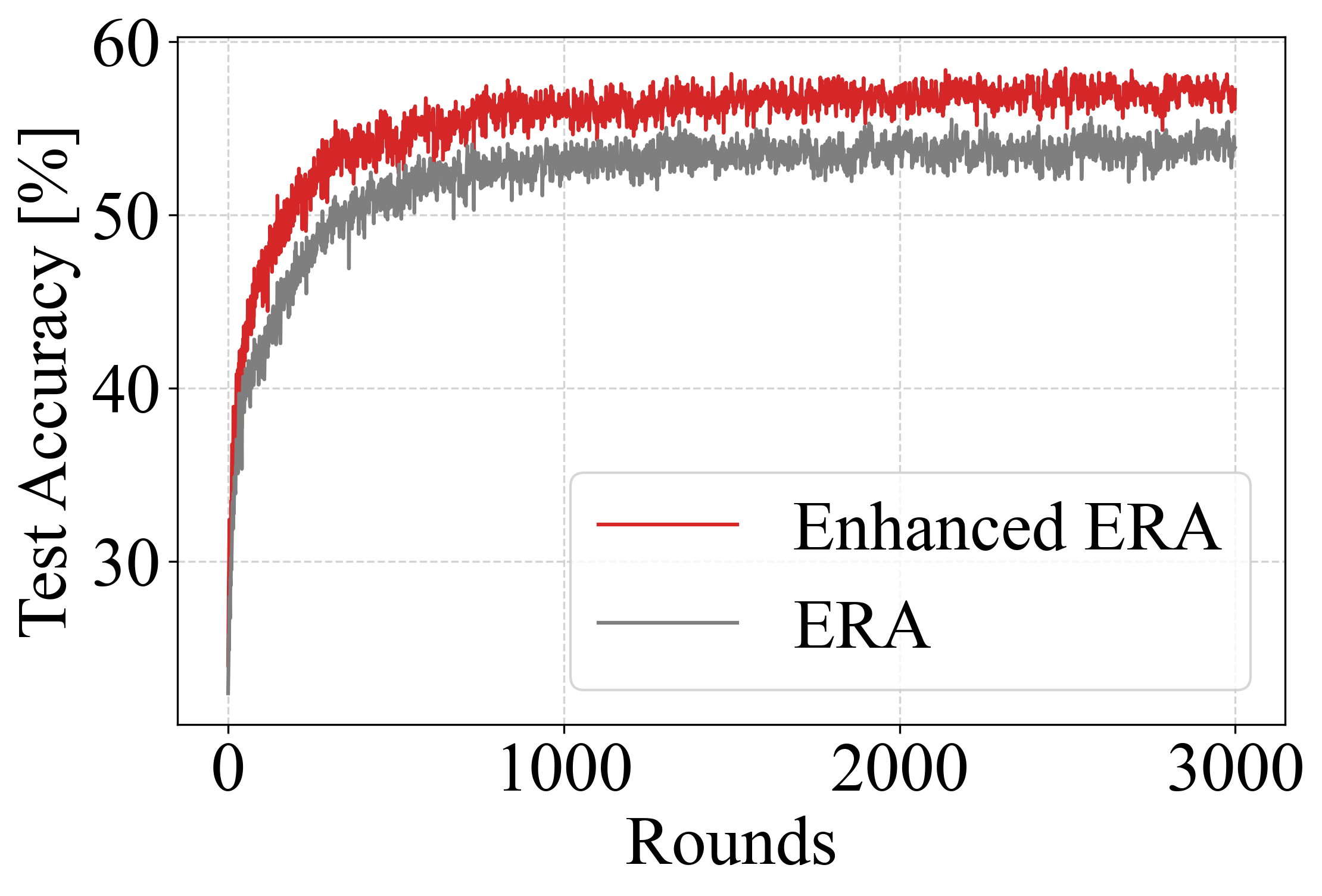}%
    \label{fig:era_vs_enhanced_era:0-05}%
  }\hfill
  \subfloat[Dirichlet $\alpha=0.3$]{%
    \includegraphics[width=0.48\columnwidth]{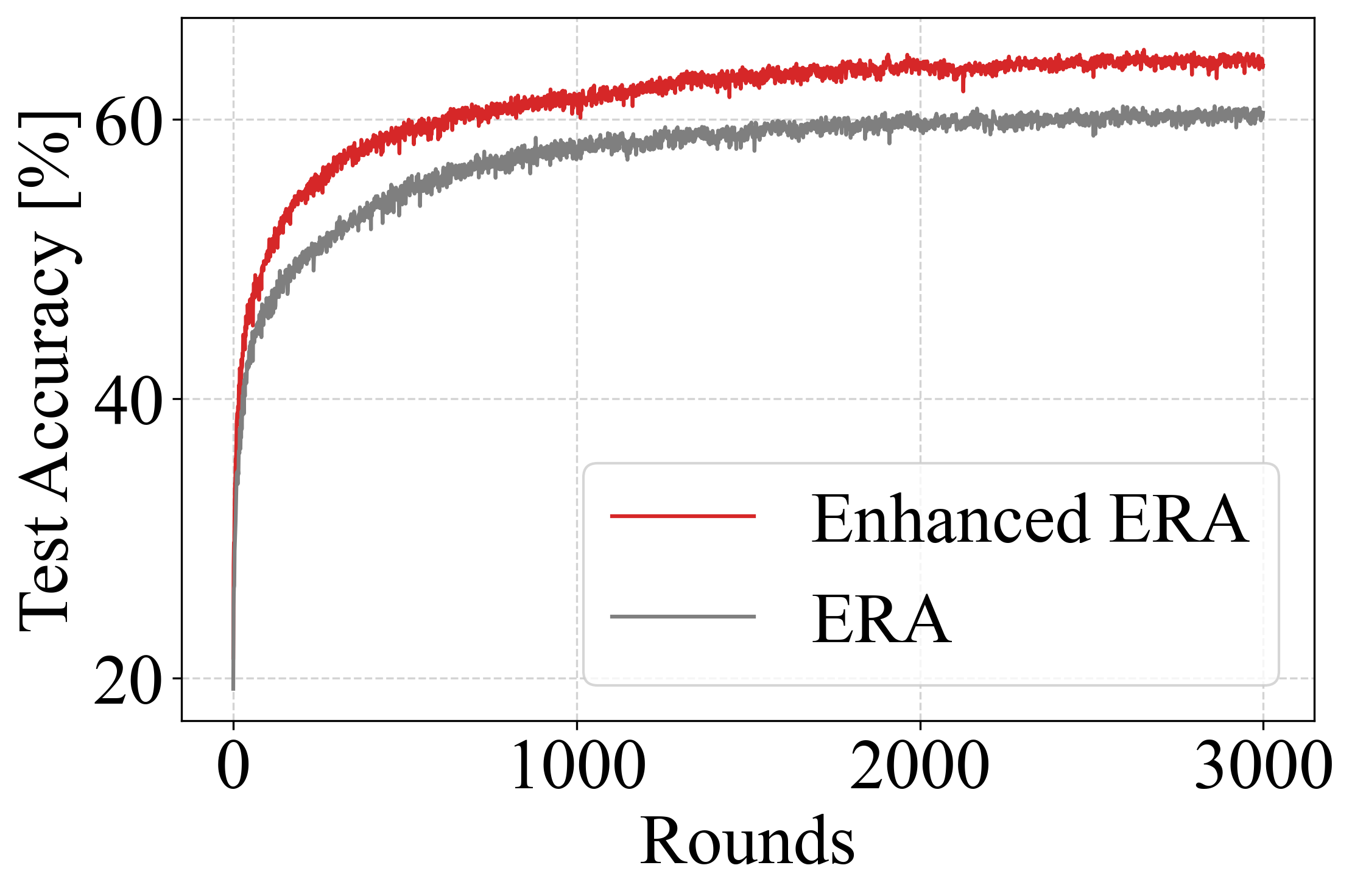}%
    \label{fig:era_vs_enhanced_era:0-3}%
  }

  \caption{Preliminary comparison of aggregation stability. Learning curves show Enhanced ERA (red lines) consistently outperforms the conventional ERA (gray lines) on CIFAR-10. The optimal parameters plotted are Enhanced ERA ($\beta=2.5$) vs. ERA ($T=0.1$) for strong non-IID (\ref{fig:era_vs_enhanced_era:0-05}), and  Enhanced ERA ($\beta=1.0$) vs. ERA ($T=0.2$) for moderate non-IID (\ref{fig:era_vs_enhanced_era:0-3}).}
  \label{fig:era_vs_enhanced_era}
\end{figure}

To validate the practical impact of this theoretical stability, we conducted a preliminary experiment (Fig.~\ref{fig:era_vs_enhanced_era}) comparing the two aggregation mechanisms in isolation. We evaluated the performance of the DS-FL \cite{dsfl} framework using its default ERA mechanism, and compared it directly against an identical framework where we replaced ERA with our proposed Enhanced ERA. Neither configuration used soft-label caching, and all other experimental settings were identical to the main setup described in Section \ref{subsection:experimental_setup}. We performed a fine-grained search for both mechanisms: ERA ($T \in \{0.025, 0.05, \dots, 0.275, 0.3\}$) and Enhanced ERA (tuning $\beta \in \{0.5, 1.0, \dots, 2.5, 3.0\}$). The optimal hyperparameter for each was selected based on the mean server-side test accuracy over the final 10 rounds of 3000 communication rounds.

The results in Fig.~\ref{fig:era_vs_enhanced_era} show Enhanced ERA significantly outperformed in both scenarios. It yielded an accuracy improvement of approximately 3 percentage points under strong non-IID conditions ($\alpha=0.05$) and approximately 4 percentage points under moderate non-IID conditions ($\alpha=0.3$). This consistent gap provides strong practical evidence for our theoretical analysis in Appendix~\ref{appendix:analysis_sensitivity_stability}: while a theoretical optimum $T$ may exist for ERA, its input-sensitive instability (Fig.~\ref{fig:comparison_era_enhanced_era:era}) makes it impractically difficult to tune. Enhanced ERA's robust, stable control (Fig.~\ref{fig:comparison_era_enhanced_era:enhanced_era}), in contrast, delivers superior performance with practical tuning.

Beyond this core stability, Enhanced ERA also provides a critical theoretical advantage regarding robustness to the number of classes $N$. The denominator of ERA (Eq.~\ref{equation:era}) scales with $N$ (i.e., $\sum e^{\overline{z}_{j}/T} \ge N$), forcing $T$ to be tuned to aggressively small values for datasets with large $N$ (e.g., CIFAR-100). In contrast, the denominator of Enhanced ERA (Eq.~\ref{equation:enhanced_era}) is bounded ($\sum_{j=1}^N (\overline{z}_j)^\beta$), making its normalization term independent of $N$ and $\beta$ a robust hyperparameter across datasets of varying class sizes.

In summary, Enhanced ERA provides controlled entropy adjustments, enabling SCARLET to effectively handle different degrees of non-IID data. This paper primarily focuses on introducing the fundamental mechanisms of soft-label caching and Enhanced ERA; the introduction of this stable, robust, and mathematically-grounded aggregation mechanism, in particular, is a key contribution. While this provides a controlled adjustment for non-IID data, the dynamic tuning of $\beta$ itself remains an open research question (discussed in Section~\ref{limitations}).

\section{Experiments}
\label{section:experiments}

\subsection{Experimental Setup}
\label{subsection:experimental_setup}

\subsubsection{Datasets and Partitioning}
\smallbreak
SCARLET was evaluated using three combinations of private and public datasets for image classification tasks under heterogeneous data conditions. The detailed dataset configurations are summarized in Table~\ref{table:datasets}.

\begin{table}[t]
 \caption{Dataset Configurations and Dirichlet Parameters Used in SCARLET Evaluation.}
 \label{table:datasets}
 \centering
  \begin{tabular}{ccccc}
   \toprule
    \multirow{2}{*}{Private Dataset} & \multirow{2}{*}{Public Dataset} & \multicolumn{2}{c}{Size} & \multirow{2}{*}{Dirichlet $\alpha$} \\
    & & Private & Public & \\
    \midrule 
    CIFAR-10 & CIFAR-100 & 50,000 & 10,000 & 0.05, 0.3 \\
    CIFAR-100 & Tiny ImageNet & 50,000 & 10,000 & 0.05 \\
    Tiny ImageNet & Caltech-256 & 100,000 & 10,000 & 0.05 \\
   \bottomrule
  \end{tabular}
\end{table}

To ensure the robustness of SCARLET in practical scenarios, we deliberately avoided methods that split a single dataset into public and private datasets, such as in \cite{dsfl}. Instead, we adopted the approach inspired by \cite{comet}, where separate datasets with no overlap in image content are used for public and private datasets. This choice reflects real-world conditions, as acquiring a public dataset with identical distributions to private data is often impractical. By utilizing distinct datasets, SCARLET aligns more closely with application contexts where such separations are necessary, enhancing the realism and relevance of the evaluation.

The private and public datasets used in our experiments are entirely distinct, with no direct class overlaps. Specifically, CIFAR-10 and CIFAR-100 \cite{cifar}, despite being from the same dataset family, have no overlapping classes. For instance, CIFAR-10 includes categories such as ``automobile'' and ``truck'' while CIFAR-100 contains related but separate categories like ``bus'' and ``pickup truck.'' Similarly, Tiny ImageNet \cite{tinyimagenet} and Caltech-256 \cite{caltech} cover broad object categories without any direct overlap in image content. While some semantic similarities exist between certain categories across datasets, they do not affect the integrity of the experimental setup.

Data heterogeneity across clients was simulated using a Dirichlet distribution \cite{dirichlet} with varying \(\alpha\) values. Larger $\alpha$ values approximate IID distributions, while smaller values result in each client holding data from fewer classes. Fig.~\ref{fig:partition} visualizes the non-IID sample distributions among clients. Smaller $\alpha$ values lead to skewed distributions, where each client primarily holds data from fewer classes. This setup evaluates SCARLET’s performance under varying degrees of data heterogeneity and better simulates challenging FL environments.

\begin{figure}[t]
  \subfloat[CIFAR-10 ($\alpha = 0.05$)]{%
    \includegraphics[width=0.5\columnwidth]{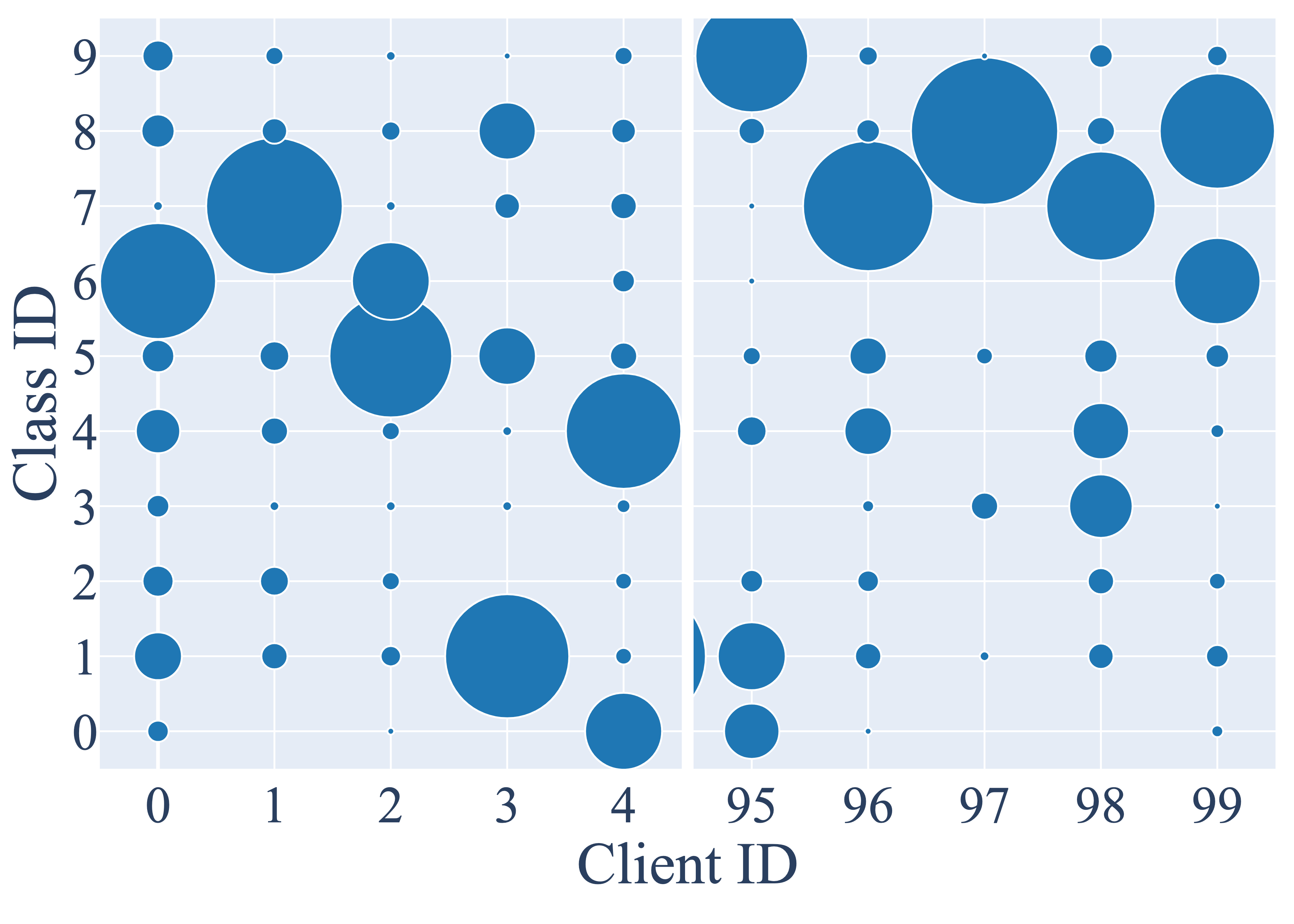}%
    \label{fig:partition:cifar10_a005}%
  }\hfill
  \subfloat[CIFAR-10 ($\alpha = 0.3$)]{%
    \includegraphics[width=0.5\columnwidth]{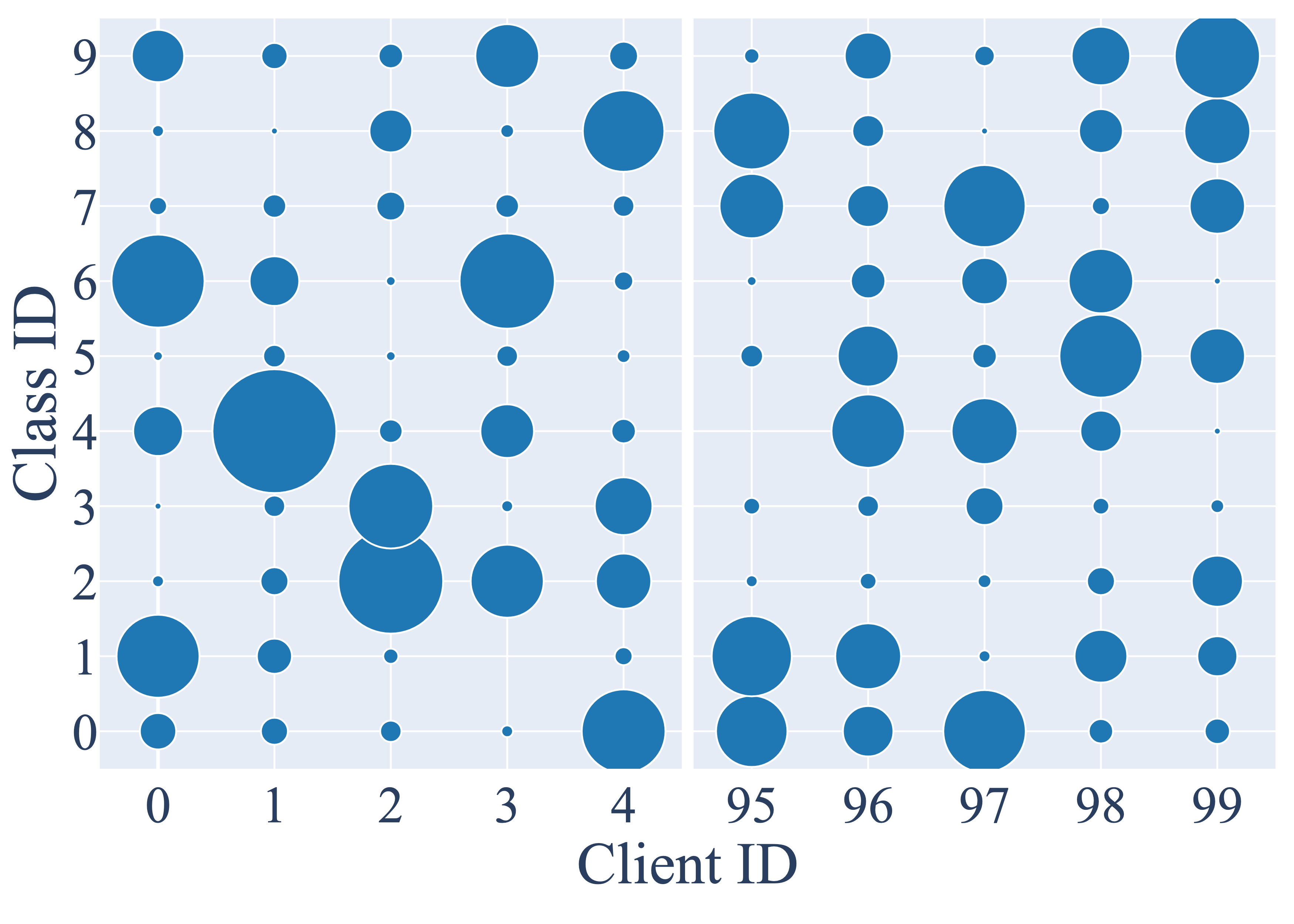}%
    \label{fig:partition:cifar10_a03}%
  }

  \subfloat[CIFAR-100 ($\alpha = 0.05$)]{%
    \includegraphics[width=0.5\columnwidth]{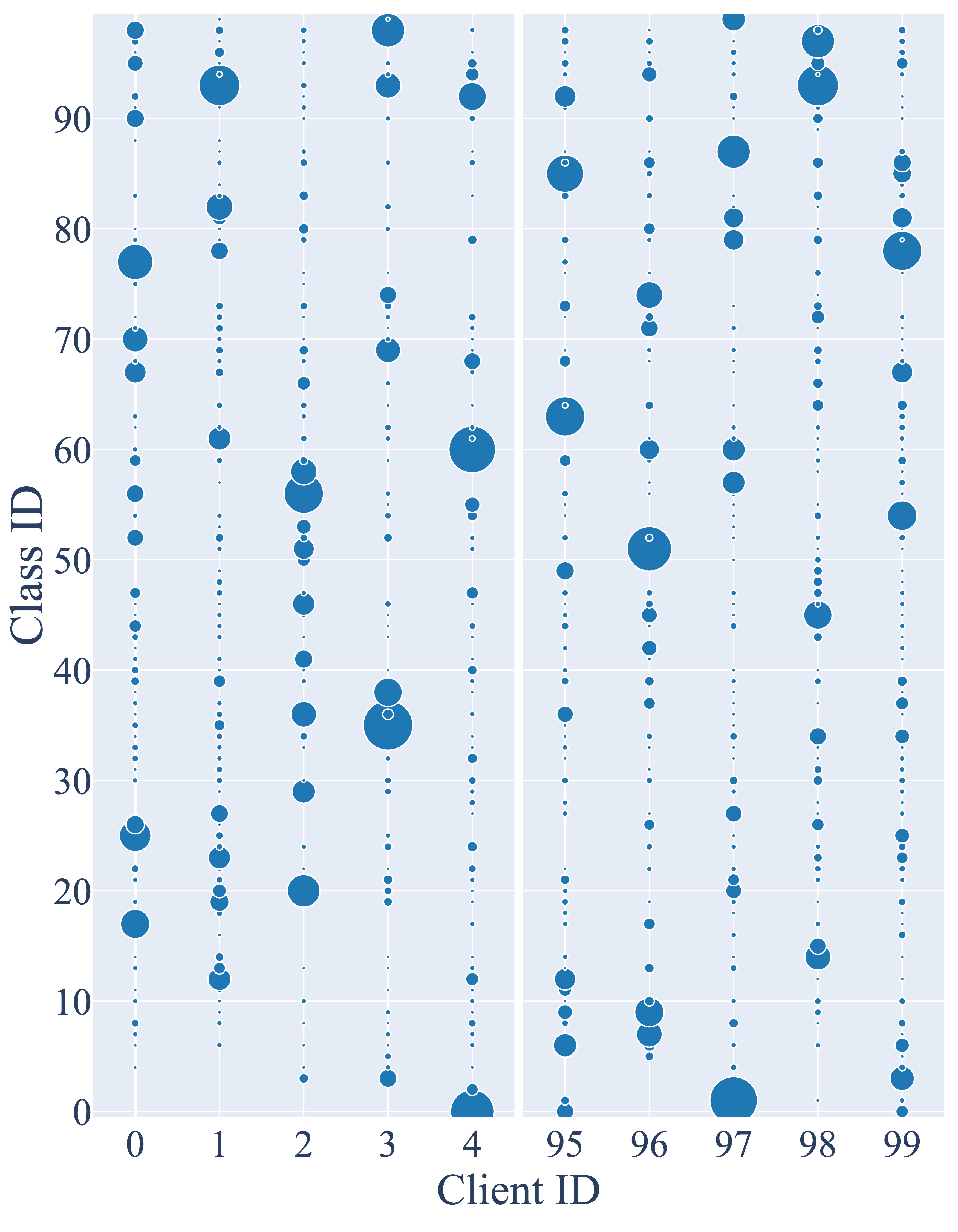}%
    \label{fig:partition:cifar100_a005}%
  }\hfill
  \subfloat[Tiny ImageNet ($\alpha = 0.05$)]{%
    \includegraphics[width=0.5\columnwidth]{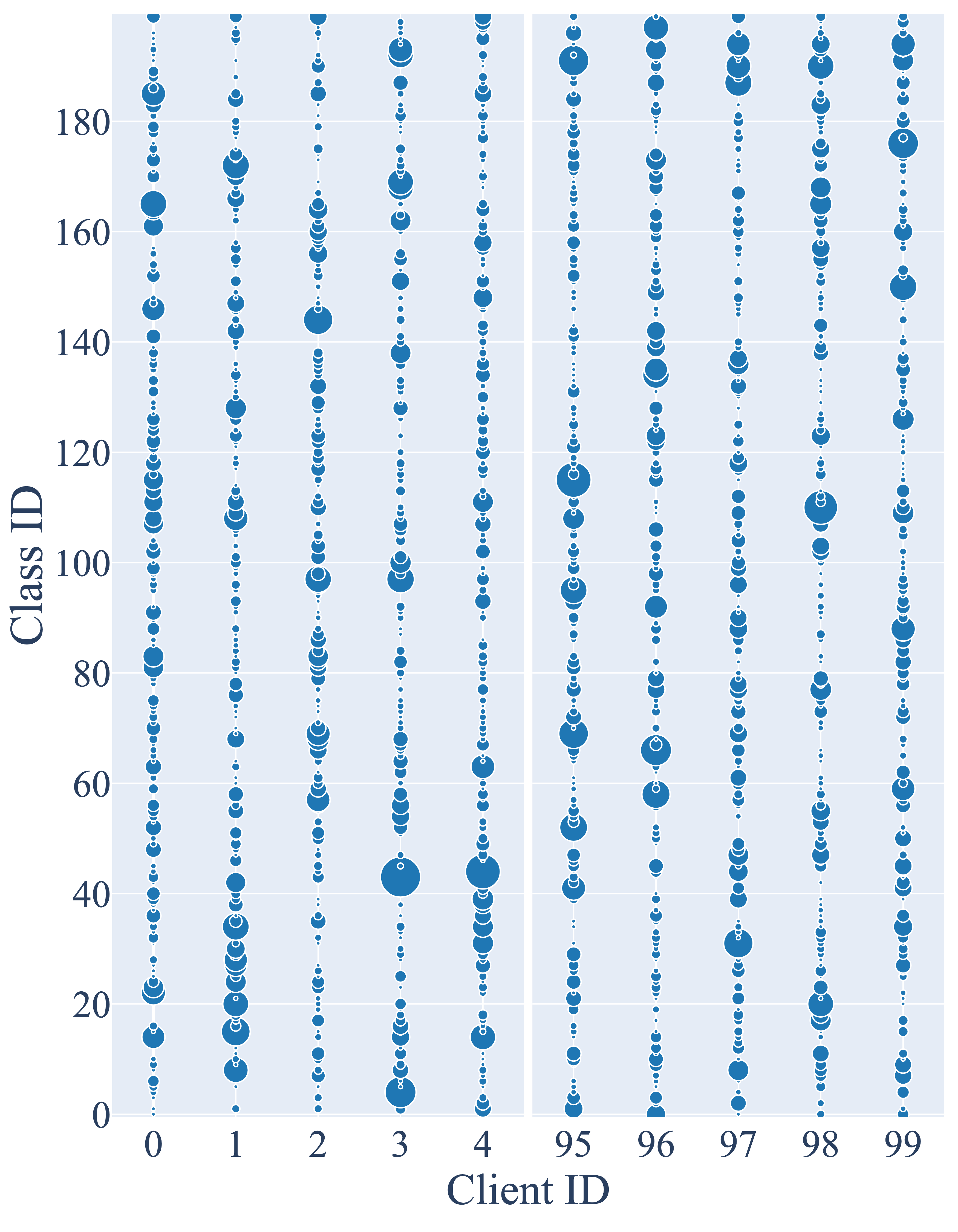}%
    \label{fig:partition:tiny_a005}%
  }

  \caption{Visualization of private dataset distributions across clients under different Dirichlet parameters ($\alpha$). Smaller $\alpha$ values result in more heterogeneous distributions, where clients predominantly hold data from fewer classes. (\ref{fig:partition:cifar10_a005}) and (\ref{fig:partition:cifar10_a03}) show CIFAR-10 with varying $\alpha$, while (\ref{fig:partition:cifar100_a005}) and (\ref{fig:partition:tiny_a005}) illustrate CIFAR-100 and Tiny ImageNet, respectively.}
  \label{fig:partition}
\end{figure}

\subsubsection{Models}
\smallbreak
Table~\ref{table:models} outlines the model architectures and input image sizes used for each private dataset.
For image classification tasks, we tailored ResNet~\cite{resnet} architectures to match each dataset’s characteristics.
CIFAR-10 and CIFAR-100 both use $32\times32$ images; however, the increased complexity of
CIFAR-100, with its 100 classes, necessitates a deeper ResNet-32 model.
For Tiny ImageNet, which has $64\times64$ images, we used a ResNet-18, which is well suited for intermediate-resolution inputs.

\begin{table}[t]
 \caption{Model Architectures and Input Image Sizes for Each Private Dataset.}
 \label{table:models}
 \centering
  \begin{tabular}{ccc}
   \toprule
    Private Dataset & Model & Input Image Size \\
    \midrule
    CIFAR-10 & ResNet-20 & $32\times32$ \\
    CIFAR-100 & ResNet-32 & $32\times32$ \\
    Tiny ImageNet & ResNet-18 & $64\times64$ \\
   \bottomrule
  \end{tabular}
\end{table}

Although SCARLET and other distillation-based FL methods do not require the server and clients to use the same model architecture, we employed a uniform setup across all entities for simplicity. This consistency helps isolate the effects of SCARLET’s communication cost reduction from other factors related to model complexity.

\subsubsection{Comparison Methods and Hyperparameter Settings}
\smallbreak
We evaluated SCARLET's communication efficiency and model accuracy against four state-of-the-art methods: DS-FL \cite{dsfl}, CFD \cite{cfd}, COMET \cite{comet}, and Selective-FD \cite{selectivefd}.

Unless otherwise mentioned, the following hyperparameter settings were common across all methods. The number of clients was set to 100, each performing 5 local epochs per training round. The learning rate for both local client training and client-server distillation was set to 0.1, following \cite{dsfl}. Each training round utilized 1,000 unlabeled public data samples. For simplicity, we employed stochastic gradient descent (SGD) without momentum or weight decay as the optimizer.

The total number of communication rounds was determined to ensure our proposed method, SCARLET, and most of the baselines reached a stable performance plateau where its accuracy saturated. While this experimental setup uses a global test set for analysis, we validate in Appendix~\ref{appendix:validation} that this convergence plateau is observable in practice using accessible proxy metrics (e.g., local validation loss) that do not require such a test set. Accordingly, unless otherwise mentioned, we used 3000 rounds for CIFAR-10 and CIFAR-100, and 1000 rounds for Tiny ImageNet. This duration was deemed sufficient to provide a fair comparison of the convergence speed and accuracy against baseline methods within this operational window.

In contrast, method-specific hyperparameters critical to performance, such as the ERA temperature $T$ in DS-FL, were individually optimized through hyperparameter tuning. Detailed hyperparameter configurations for each comparative method are provided in Appendix~\ref{appendix:hyperparameter}.

\begin{figure}[t]
  \centering

  \subfloat[Private]{%
    \includegraphics[width=0.48\columnwidth]{figures/cifar10_alpha0-3_plotly_try.png}%
    \label{fig:test_partition:private}%
  }\hfill
  \subfloat[Test]{%
    \includegraphics[width=0.48\columnwidth]{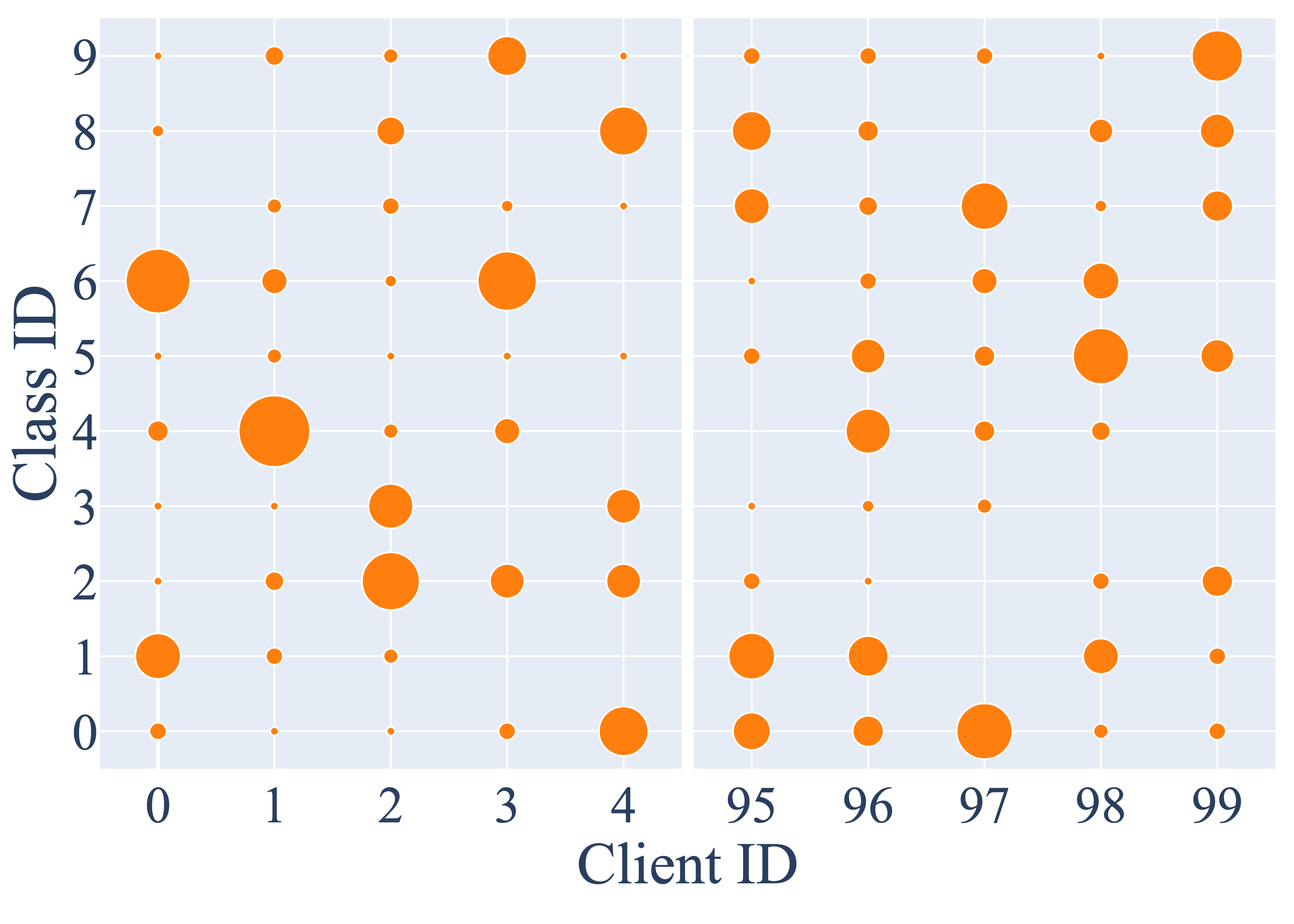}%
    \label{fig:test_partition:test}%
  }

  \caption{Comparison of each client's private and test dataset distributions.
  Both datasets are generated using the same Dirichlet distribution,
  ensuring that each client's test set maintains class proportions similar to its private dataset. This configuration supports personalized and non-IID evaluation on the client side.}
  \label{fig:test_partition}
\end{figure}

\subsubsection{Evaluation Metrics}
\smallbreak

The evaluation of SCARLET is based on two primary metrics: cumulative communication cost and test accuracy. The cumulative communication cost is defined as the total amount of data exchanged between the server and clients, including soft-labels and other parameters, summed over all clients across communication rounds. We exclude the one-time cost for retrieving the public dataset from this calculation, as it is common to all compared methods and does not affect relative performance.
Furthermore, to provide a deeper analysis that reflects real-world network conditions with often asymmetric bandwidth, we also present a detailed breakdown of the uplink (client-to-server) and downlink (server-to-client) communication costs for our main comparison.

Test accuracy is measured separately for the server and clients using distinct datasets. For example, when using the train split of CIFAR-10 (50,000 images) as the private dataset, the whole test split (10,000 images) is used for server-side evaluation. The same test split is further partitioned using the same Dirichlet distribution employed for the private dataset, distributing 100 images to each client and enabling client-side evaluation under non-IID conditions (see Fig.~\ref{fig:test_partition}). This setup ensures that server-side accuracy reflects global performance, while client-side accuracy evaluates personalized performance based on each client’s unique data distribution. Notably, client-side accuracy often surpasses server-side accuracy due to the alignment of client-specific test datasets with their training data distributions.

The main analysis presents the relationship between cumulative communication cost and test accuracy through graphs, highlighting both the efficiency and stability of the training process. Additionally, we provide cumulative communication cost values to reach a predefined target accuracy, but emphasize graphical results to better capture the dynamics of learning stability and convergence across methods.

\subsection{Experimental Results}
\label{subsection:experimental_results}

\begin{table}[t]
 \caption{Performance of Centralized Learning on Three Image Datasets.}
 \label{table:nonfl}
 \centering
  \begin{tabular}{ccc}
   \toprule
    Dataset & Model & Test Accuracy \\
    \midrule 
    CIFAR-10 & ResNet-20 & 85.4\% \\
    CIFAR-100 & ResNet-32 & 55.5\% \\
    Tiny ImageNet & ResNet-18 & 39.5\% \\
   \bottomrule
  \end{tabular}
\end{table}
 
\subsubsection{Centralized Learning (Non-FL) Reference Results}
\label{subsubsection:centralized_learning_reference_results}
\smallbreak
Table~\ref{table:nonfl} shows the performance obtained by centralized learning, where all clients upload their labeled private data to a central server, and a single global model is trained. These results serve as a reference to illustrate achievable performance levels without the constraints of FL. In this setting, since all data is aggregated at a central location, the resulting training data is effectively IID. This often leads to significantly higher accuracy compared to FL conducted under non-IID data conditions.

Note that, unlike the FL experiments which typically employ a higher learning rate, a learning rate of 0.01 is adopted here to ensure stable convergence in the centralized training setting.

\begin{figure*}[t]
  \centering

  \subfloat[Server ($\alpha = 0.05$)]{%
    \includegraphics[width=0.48\textwidth]{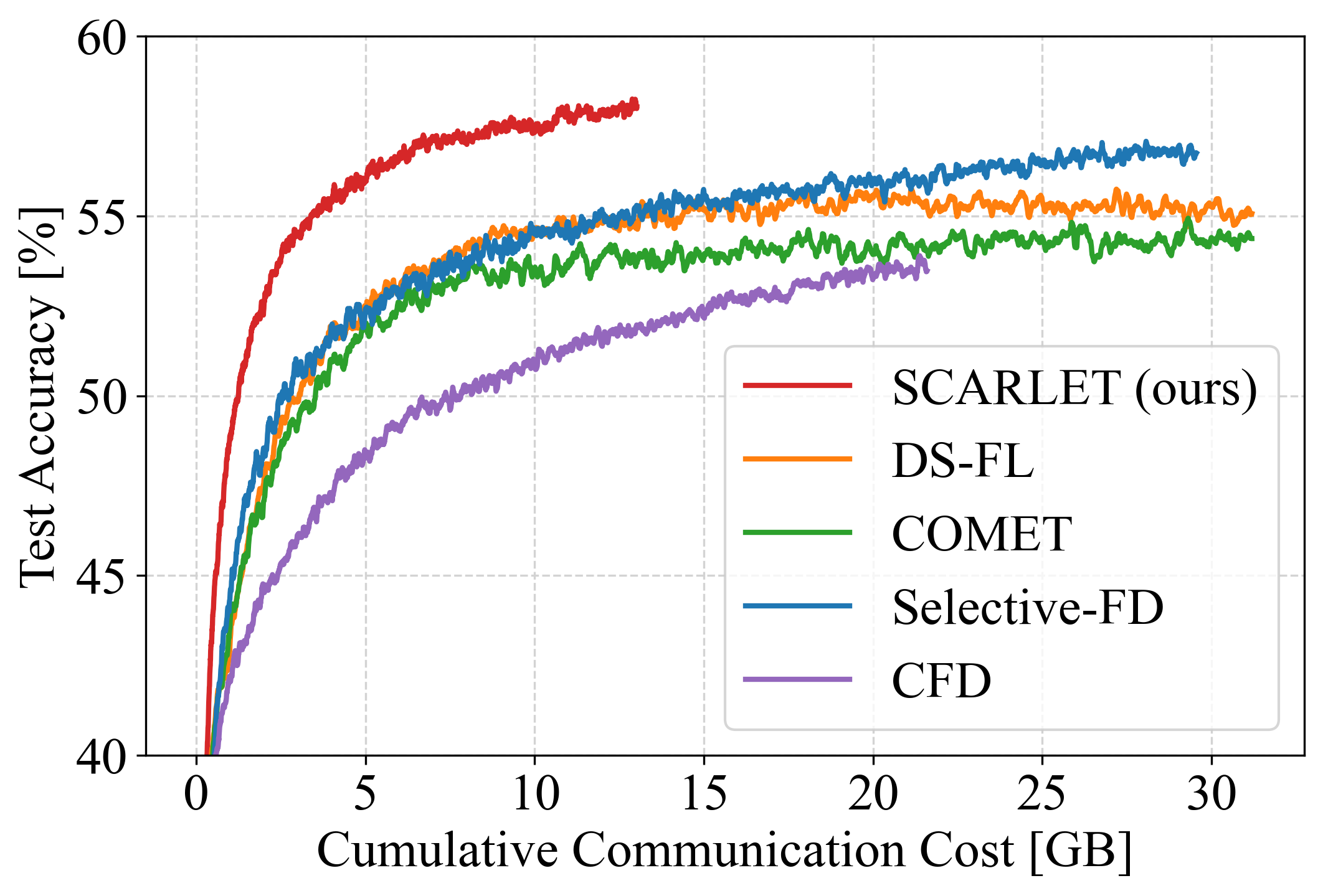}%
    \label{fig:performance_cifar10:server_a005}%
  }\hfill
  \subfloat[Client ($\alpha = 0.05$)]{%
    \includegraphics[width=0.48\textwidth]{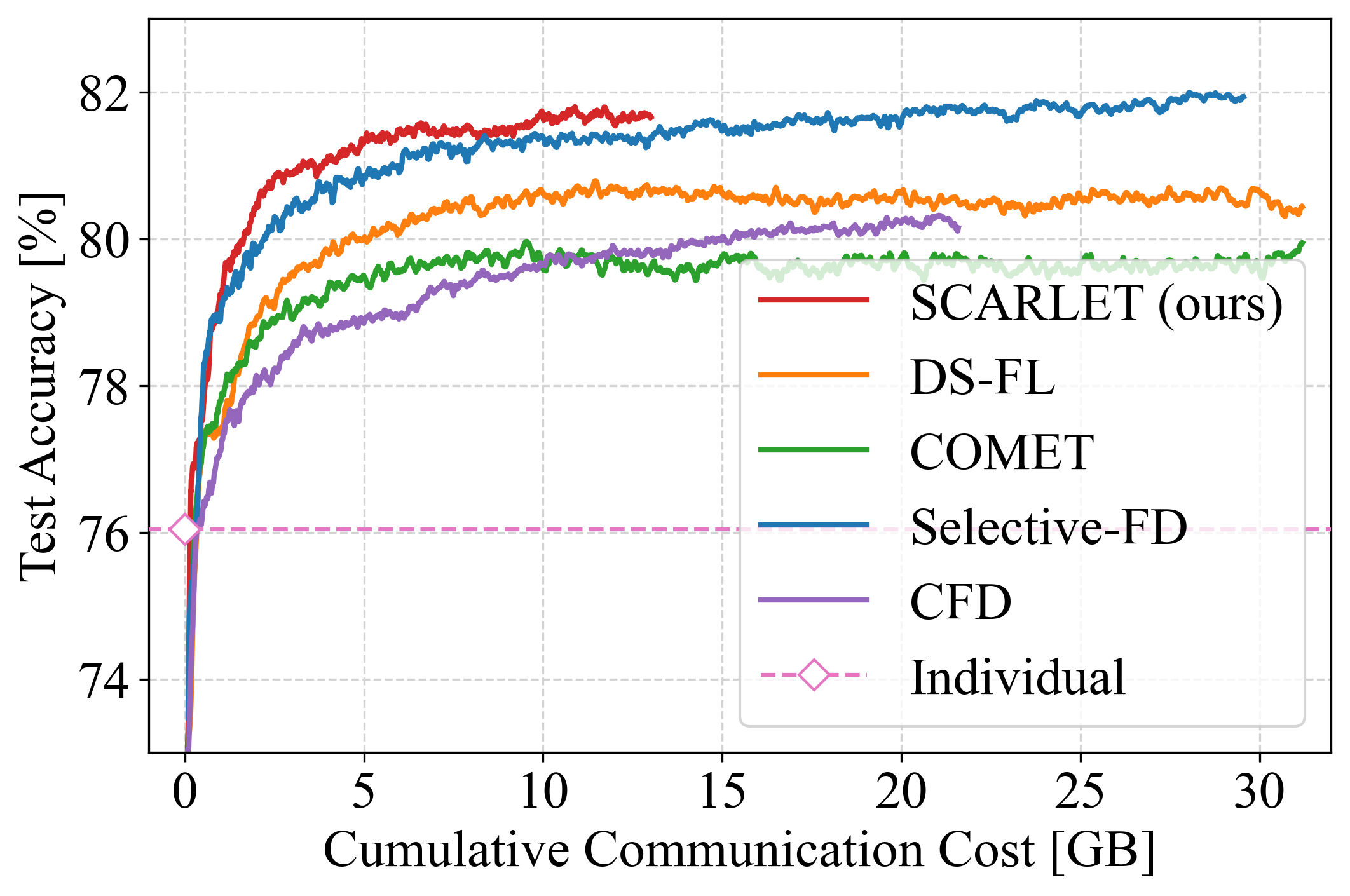}%
    \label{fig:performance_cifar10:client_a005}%
  }

  \subfloat[Server ($\alpha = 0.3$)]{%
    \includegraphics[width=0.48\textwidth]{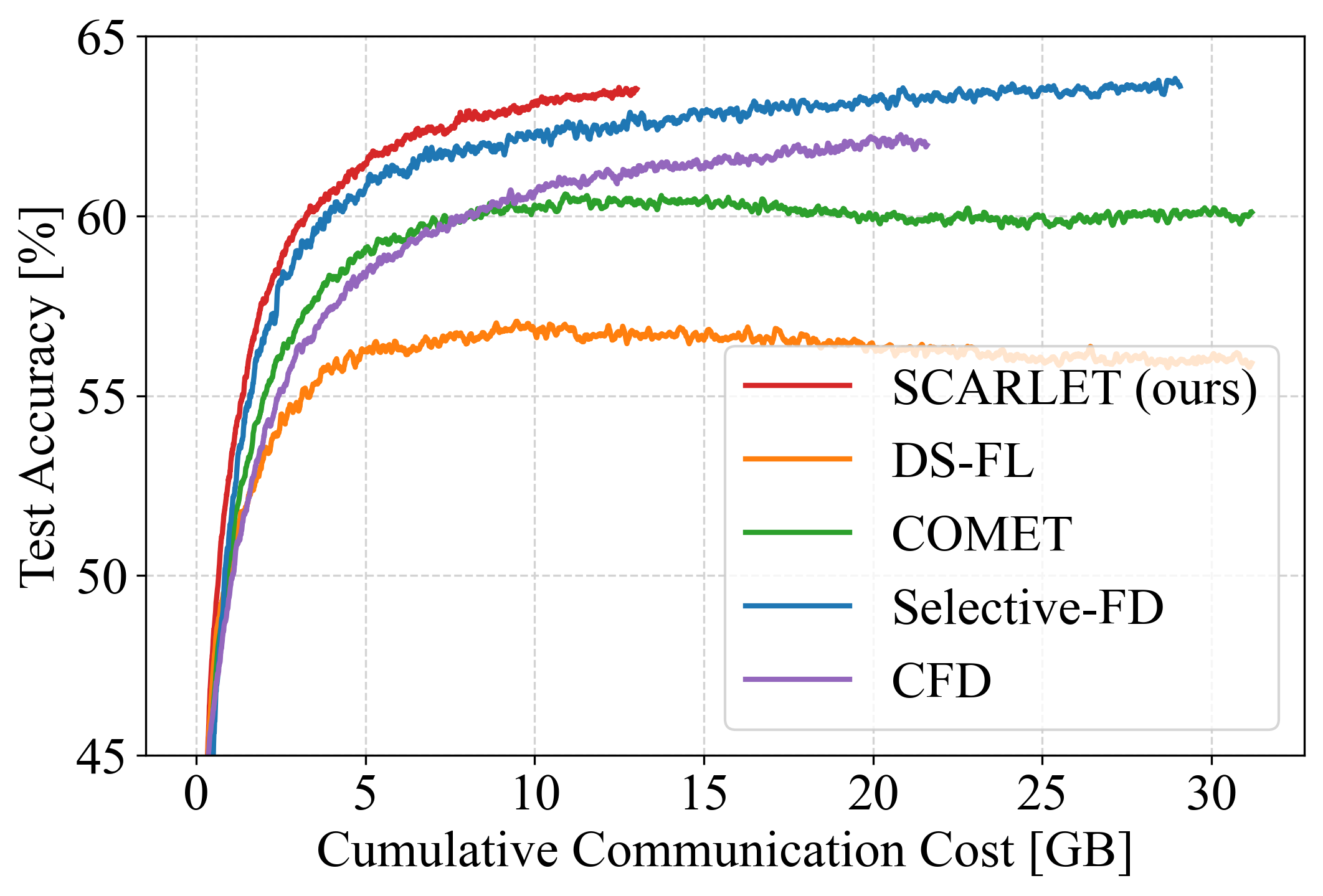}%
    \label{fig:performance_cifar10:server_a03}%
  }\hfill
  \subfloat[Client ($\alpha = 0.3$)]{%
    \includegraphics[width=0.48\textwidth]{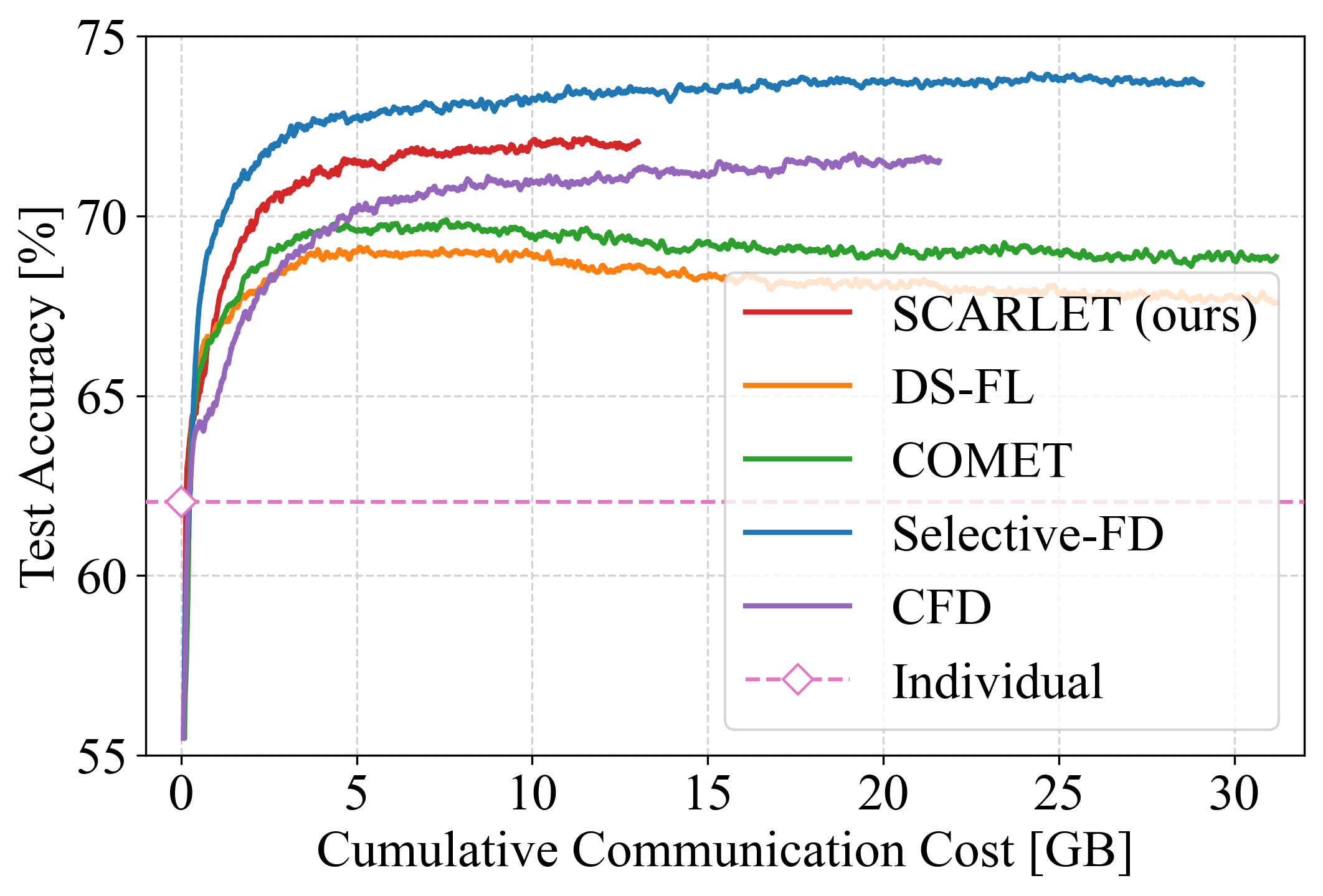}%
    \label{fig:performance_cifar10:client_a03}%
  }

  \caption{Test accuracy vs. communication cost for SCARLET and other state-of-the-art methods on CIFAR-10. SCARLET shows the largest advantage under strong non-IID conditions (\ref{fig:performance_cifar10:server_a005}, \ref{fig:performance_cifar10:client_a005}). This improvement results from the synergy of our two main contributions:
  the soft-label caching mechanism reduces data transmission, while the Enhanced ERA mechanism effectively mitigates the negative impact of data heterogeneity.
  Under moderate non-IID conditions (\ref{fig:performance_cifar10:server_a03}, \ref{fig:performance_cifar10:client_a03}), SCARLET maintains competitive accuracy with similar communication efficiency.}
  \label{fig:performance_cifar10}
\end{figure*}

\smallbreak
\subsubsection{Comparative Evaluation}
\label{subsubsection:comparative_evaluaion}
\smallbreak
We evaluate SCARLET’s communication efficiency and model accuracy by comparing it against four state-of-the-art methods—DS-FL \cite{dsfl}, CFD \cite{cfd}, COMET \cite{comet}, and Selective-FD \cite{selectivefd}—as well as an \textit{Individual} baseline that represents isolated client training without collaboration. All results were averaged across three random seeds, with details of hyperparameter settings for comparative methods provided in Appendix~\ref{appendix:hyperparameter}.

In all experiments, SCARLET employed a soft-label caching mechanism with a cache duration of $D=50$ and utilized the Enhanced ERA strategy to adapt to non-IID data distributions. The sharpness parameter $\beta$ in Enhanced ERA was tuned to reflect the degree of data heterogeneity. For CIFAR-10, $\beta$ was set to $1.5$ under strongly non-IID conditions ($\alpha=0.05$) and $1.0$ under moderate non-IID conditions ($\alpha=0.3$). For CIFAR-100 and Tiny ImageNet, $\beta=1.25$ was used. These parameters were chosen as they represent a strong balance of performance and efficiency, as demonstrated in our detailed ablation studies on cache duration $D$ (Section~\ref{subsubsection:ablation_study_of_cache_duration}) and aggregation sharpness $\beta$ (Section~\ref{subsubsection:ablation_study_of_enhanced_era}).

Fig.~\ref{fig:performance_cifar10} presents the relationship between server-side and client-side test accuracy and cumulative communication cost on CIFAR-10 under strong and moderate non-IID settings. 
SCARLET outperformed other methods under strong non-IID conditions ($\alpha=0.05$), achieving approximately 2 percentage points higher server-side accuracy while reducing communication costs by over 50\%. On the client side, SCARLET's accuracy closely followed the peak performance of Selective-FD (remaining within 1 percentage point of it), significantly outperforming all other baselines. This highlights the effectiveness of SCARLET’s caching mechanism and adaptive aggregation in mitigating challenges posed by heterogeneous data distributions.

\begin{table}[t]
 \caption{Per-Round Uplink and Downlink Communication Costs.}
\label{table:comm_costs}
 \centering
 \setlength{\tabcolsep}{4pt}
  \begin{tabular}{ccc}
   \toprule
   Method & \makecell{Uplink Cost \\ {[MB/round]}} & \makecell{Downlink Cost \\ {[MB/round]}} \\
   \midrule 
   \textbf{SCARLET (ours)} & \textbf{1.37 $\pm$ 0.28 (max: 4.30)} & \textbf{2.97 $\pm$ 0.29 (max: 6.32)} \\
   DS-FL \cite{dsfl} & 4.80 $\pm$ 0.00 (max: 4.80) & 5.60 $\pm$ 0.00 (max: 5.60) \\
   COMET \cite{comet} & 4.80 $\pm$ 0.00 (max: 4.80) & 5.60 $\pm$ 0.00 (max: 5.60) \\
   CFD \cite{cfd} & 1.60 $\pm$ 0.00 (max: 1.60) & 5.60 $\pm$ 0.00 (max: 5.60) \\
   Selective-FD \cite{selectivefd} & 3.88 $\pm$ 0.04 (max: 4.02) & 5.52 $\pm$ 0.02 (max: 5.58) \\
   \bottomrule
  \end{tabular}
\end{table}

\begin{figure}[t]
    \centering
    \includegraphics[width=0.95\columnwidth]{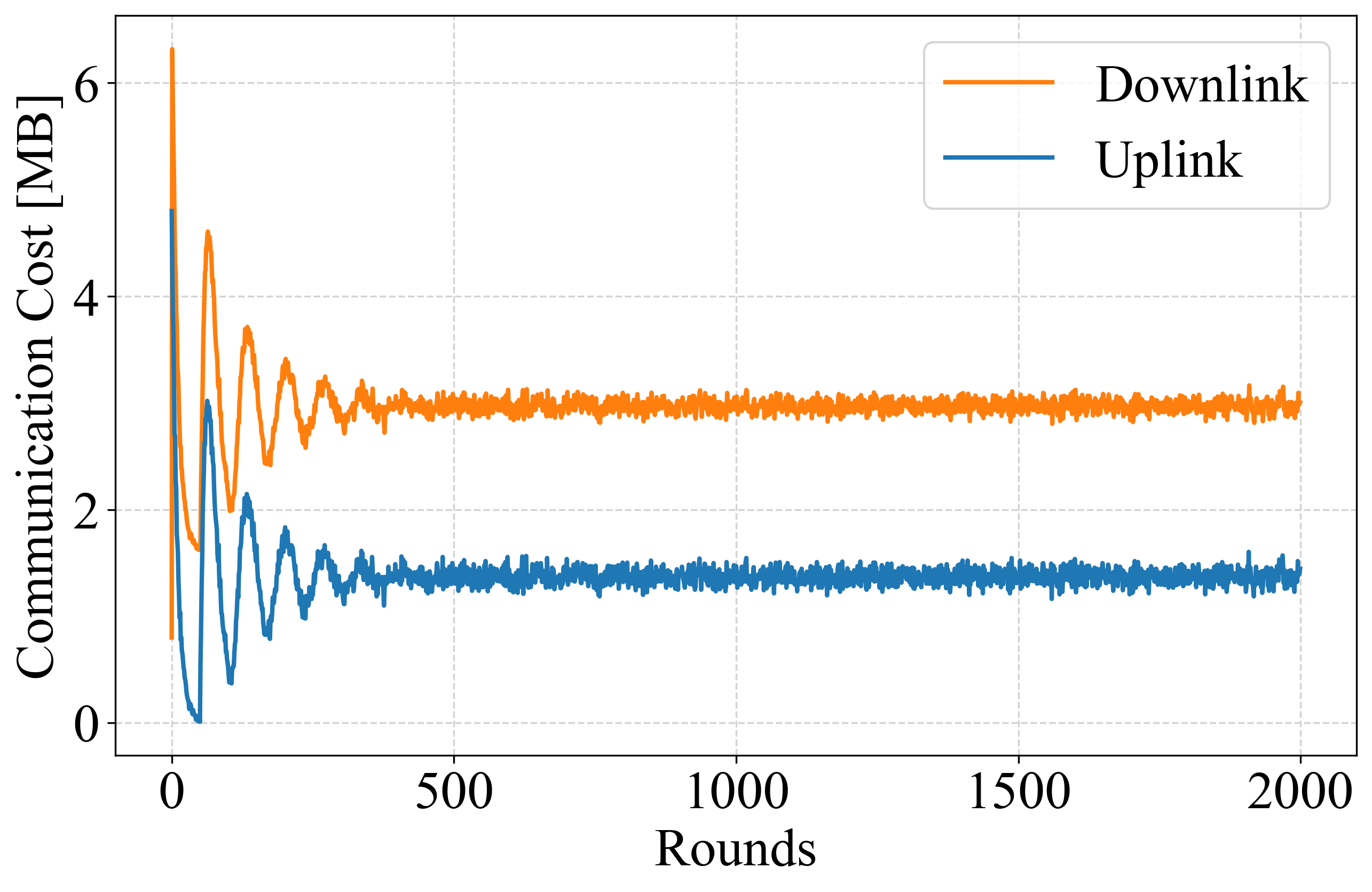}
    \caption{Per-round uplink and downlink communication costs for SCARLET. This figure visualizes the dynamic costs summarized as averages in Table~\ref{table:comm_costs}. Unlike methods with static costs, SCARLET's costs fluctuate dynamically, especially in the initial rounds, due to the soft-label caching mechanism. The costs stabilize as the random selection of public data staggers the timing of cache updates across individual samples, consistently maintaining a lower burden on the uplink.}
    \label{fig:scarlet_cost_dynamics}
\end{figure}

To offer a more granular analysis of this communication efficiency for the CIFAR-10 dataset under a strong non-IID setting ($\alpha=0.05$), Table~\ref{table:comm_costs} details the separated uplink and downlink costs, while Fig.~\ref{fig:scarlet_cost_dynamics} visualizes the per-round dynamics that produce these averages for SCARLET. This breakdown is critical for practical deployments where uplink bandwidth is often a bottleneck. As shown, SCARLET's advantage is particularly pronounced in the uplink, where the soft-label caching mechanism reduces the average cost by approximately 71\% compared to DS-FL and COMET, as clients only transmit uncached data. This targeted reduction of the more critical uplink burden, while still maintaining the lowest downlink cost, underscores SCARLET's robustness for real-world federated networks with asymmetric bandwidth.

Under moderate non-IID conditions ($\alpha=0.3$), SCARLET maintained server-side accuracy comparable to the best-performing method, Selective-FD, while achieving over 50\% lower communication cost. However, Selective-FD demonstrated superior client-side convergence speed and final accuracy, underscoring its strength in personalized learning scenarios. In contrast, DS-FL struggled to achieve high accuracy under these conditions, regardless of how the temperature $T$ in its ERA mechanism was tuned. This limitation can be traced to the presence of aggregated soft-labels with inherently low entropy. When $T$ is reduced excessively, these low-entropy soft-labels approach a one-hot representation, diminishing the effectiveness of the distillation process. Conversely, increasing $T$ too much causes a sudden spike in entropy for certain soft-labels, exacerbating inconsistencies in entropy reduction across public datasets. These issues result in both lower accuracy and performance degradation over time, likely due to overfitting.

SCARLET’s Enhanced ERA mechanism, on the other hand, addresses these challenges by enabling consistent and controlled entropy reduction. By dynamically adjusting the sharpness parameter $\beta$, SCARLET prevents the pitfalls of overly sharp or overly smooth distributions. For less severe non-IID conditions, $\beta$ is set to 1.0, effectively simplifying aggregation to straightforward averaging, which is sufficient under such scenarios. This adaptability allows SCARLET to achieve stable and high accuracy, while preserving its hallmark communication efficiency and maintaining compatibility across a wide range of FL scenarios.

\begin{figure}[t]
  \centering

  \subfloat[CIFAR-100 (Server)]{%
    \includegraphics[width=0.48\columnwidth]{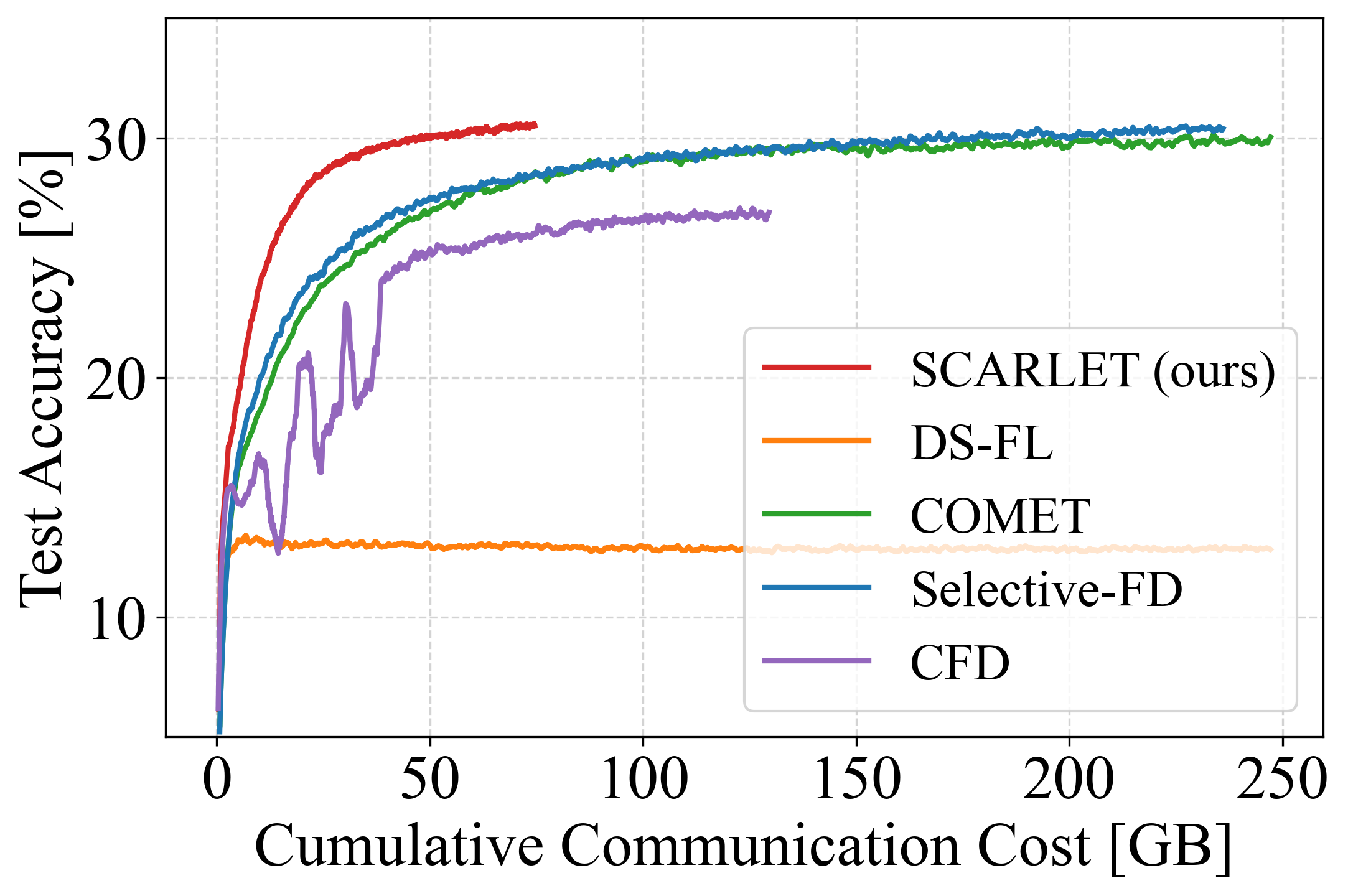}%
    \label{fig:performance:cifar100_server}%
  }\hfill
  \subfloat[CIFAR-100 (Client)]{%
    \includegraphics[width=0.48\columnwidth]{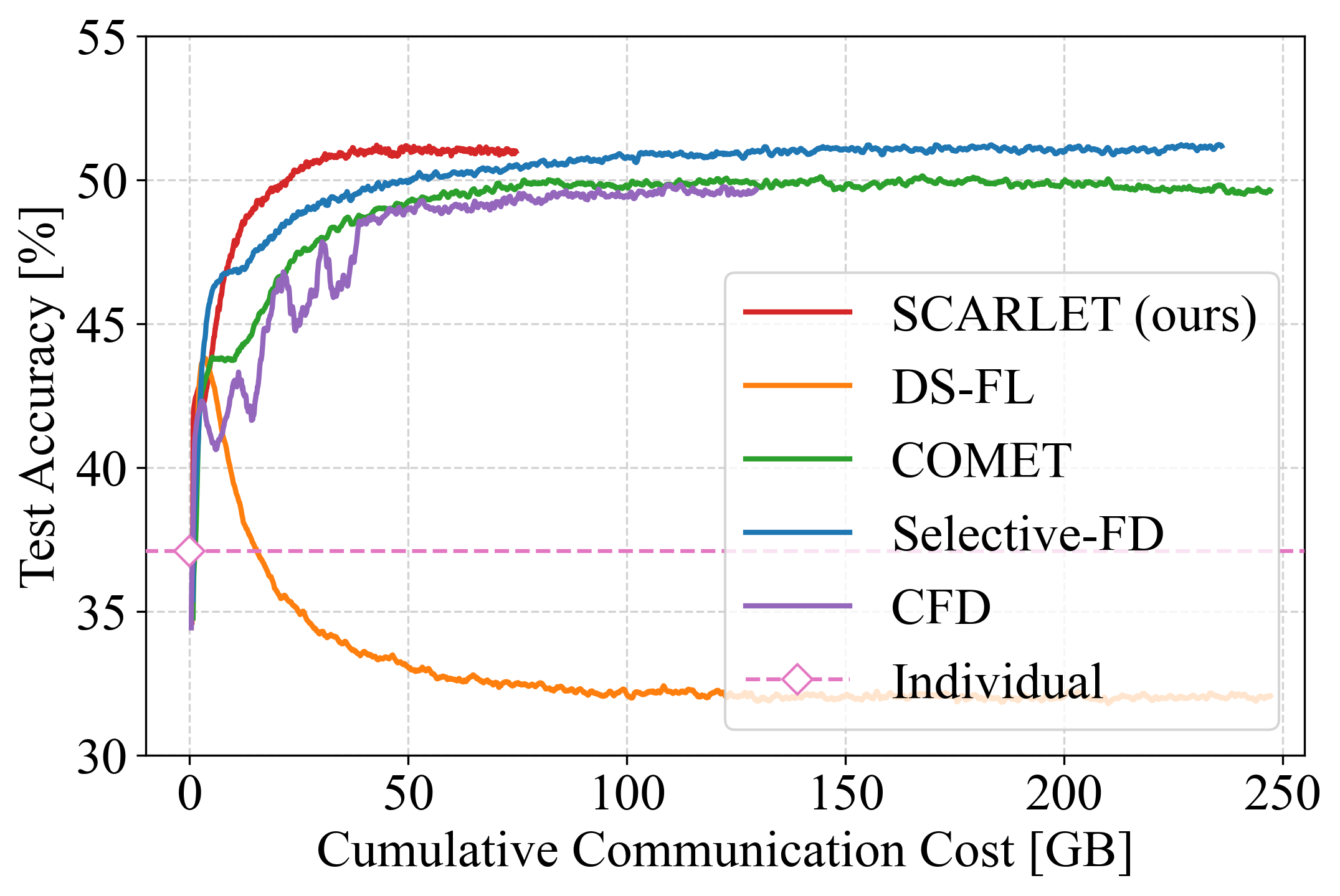}%
    \label{fig:performance:cifar100_client}%
  }

  \subfloat[Tiny ImageNet (Server)]{%
    \includegraphics[width=0.48\columnwidth]{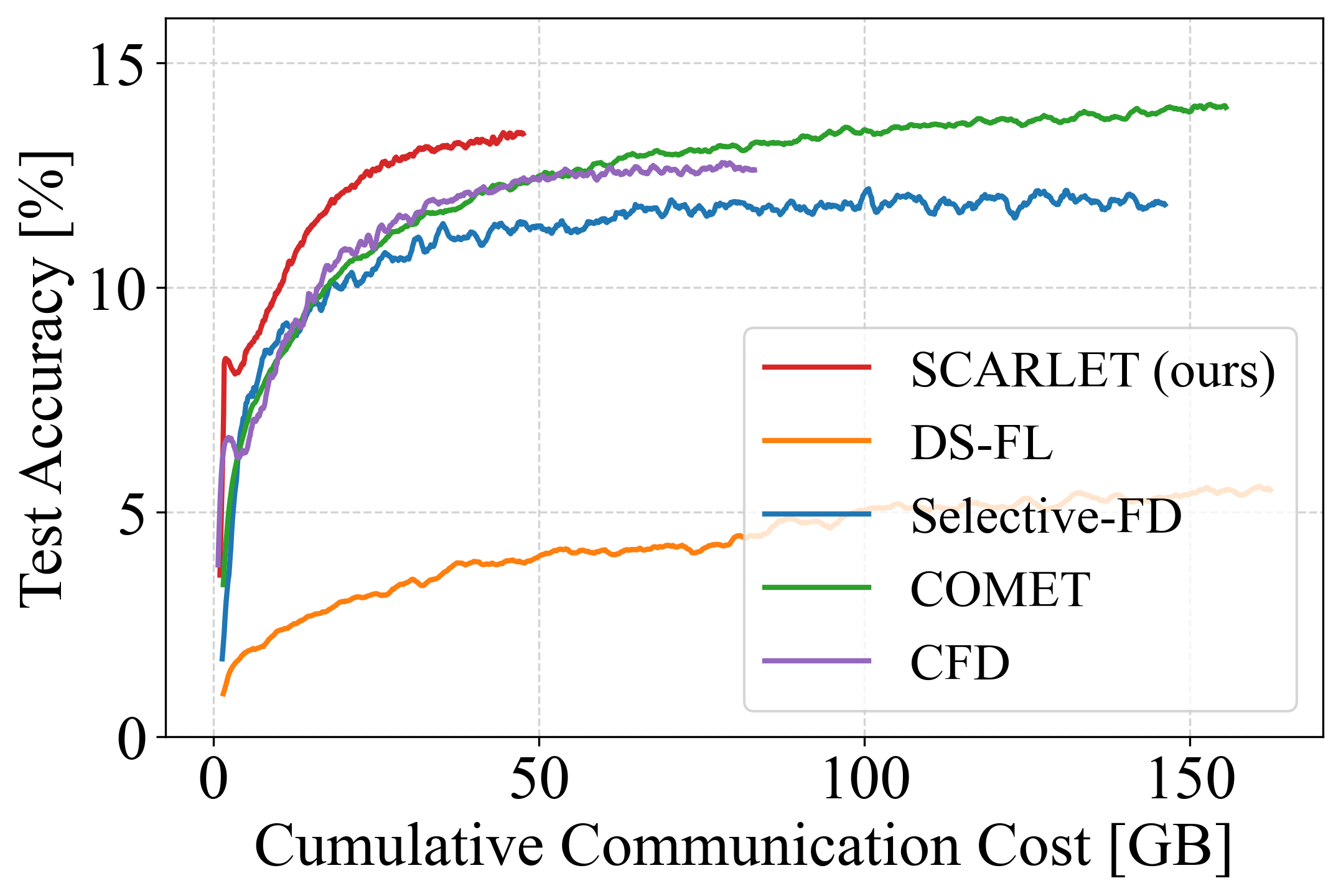}%
    \label{fig:performance:tiny_server}%
  }\hfill
  \subfloat[Tiny ImageNet (Client)]{%
    \includegraphics[width=0.48\columnwidth]{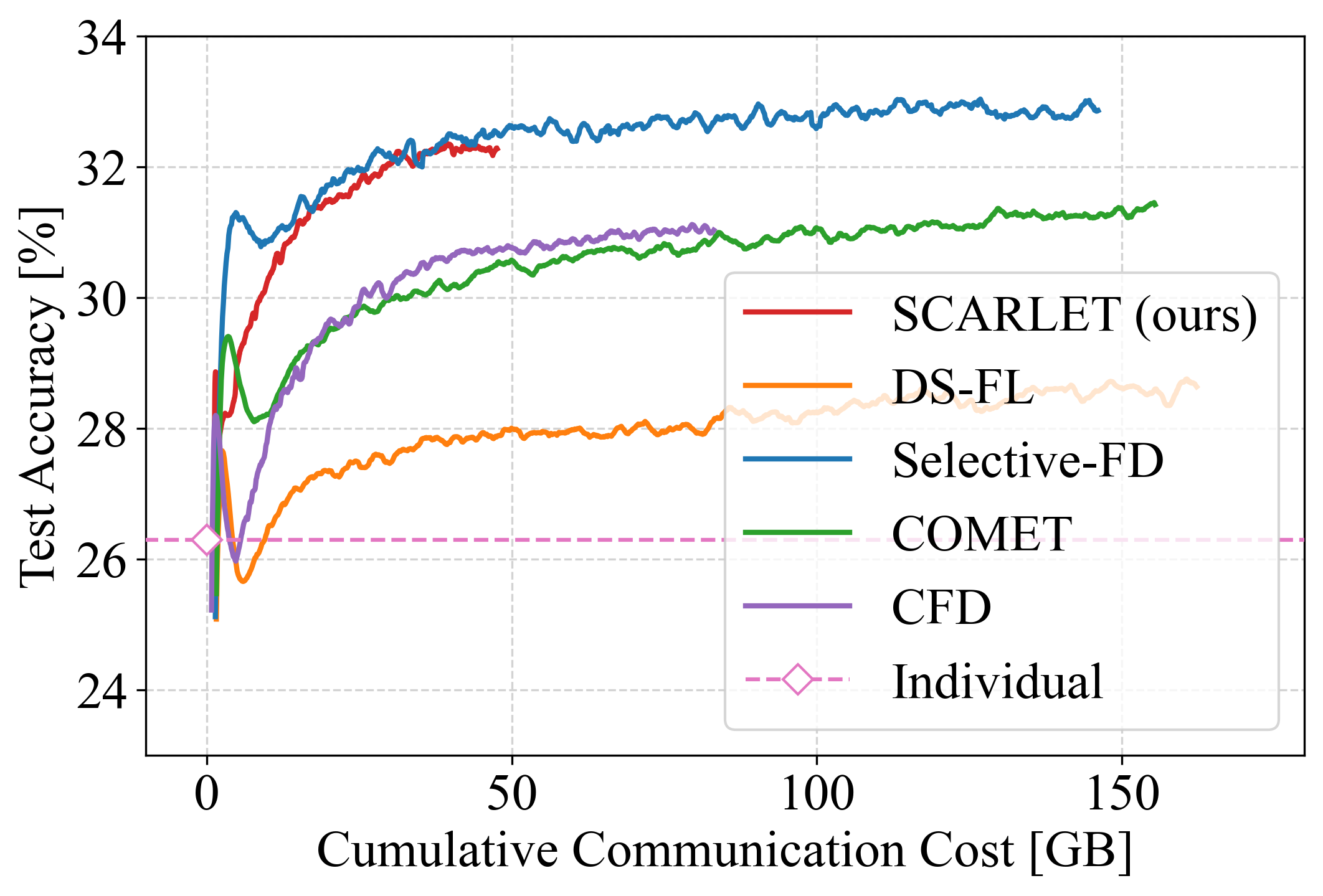}%
    \label{fig:performance:tiny_client}%
  }

  \caption{Test accuracy vs. communication cost on more complex datasets, CIFAR-100 and Tiny ImageNet. The results confirm the scalability and robustness of SCARLET.
  Even in these more challenging settings with a larger number of classes,
  SCARLET consistently maintains the highest communication efficiency,
  achieving competitive accuracy while requiring less communication than other methods.
  This demonstrates that the benefits of soft-label caching and Enhanced ERA generalize well beyond simple datasets.}
  \label{fig:performance_cifar100_tiny}
\end{figure}

Fig.~\ref{fig:performance_cifar100_tiny} presents the results for CIFAR-100 and Tiny ImageNet, demonstrating the scalability and robustness of SCARLET. The soft-label caching mechanism and Enhanced ERA consistently achieved high communication efficiency and competitive accuracy across datasets with more complex label distributions and greater heterogeneity. On the server side, SCARLET achieves test accuracy comparable to the top-performing baselines (Selective-FD and COMET) while reducing cumulative communication costs by approximately 70\%. On the client side, while Selective-FD retains a slight edge in final accuracy, SCARLET consistently ranks as a close second, delivering competitive performance that significantly outperforms the remaining baselines.

In contrast, DS-FL exhibits significantly degraded performance on these datasets. This failure highlights the inherent limitations of the conventional ERA mechanism: specifically, its sensitivity to the larger number of classes ($N$) and the practical infeasibility of finding a single optimal temperature $T$ capable of effectively regularizing the aggregated soft-labels.

In summary, SCARLET demonstrates robust performance in a variety of challenging FL scenarios. Its integration of the soft-label caching mechanism and Enhanced ERA mechanism not only reduces communication costs significantly but also sustains high model accuracy. While Selective-FD retains advantages in specific client-side settings, SCARLET’s adaptability and modular design open pathways for hybrid solutions that leverage the strengths of both approaches. Results on CIFAR-100 and Tiny ImageNet further reinforce SCARLET’s potential as a scalable, communication-efficient framework for diverse FL environments.

\smallbreak
\subsubsection{Effectiveness of the Soft-Label Caching Mechanism in Other SOTA Methods}
\label{subsubsection:effectiveness_of_the_soft-label_caching_mechanism_in_other_sota_methods}
\smallbreak
The soft-label caching mechanism, a core component of SCARLET, is designed to alleviate communication inefficiencies in distillation-based FL. Since such inefficiencies are not specific to SCARLET, the mechanism can also be applied to other SOTA methods to improve their communication efficiency. To evaluate its broader impact, we integrated the caching mechanism into three representative FL methods: CFD, COMET, and Selective-FD. DS-FL was excluded from this evaluation, as it becomes conceptually similar to SCARLET when caching is introduced. For all methods, we used a conservative cache duration of $D = 25$ to balance communication reduction with the risk of overfitting to stale predictions. This evaluation was performed on CIFAR-10 under strong non-IID conditions ($\alpha=0.05$).
The results presented in this section were averaged across three random seeds.

\begin{figure}[t]
  \centering

  \subfloat[CFD (Server)]{%
    \includegraphics[width=0.48\columnwidth]{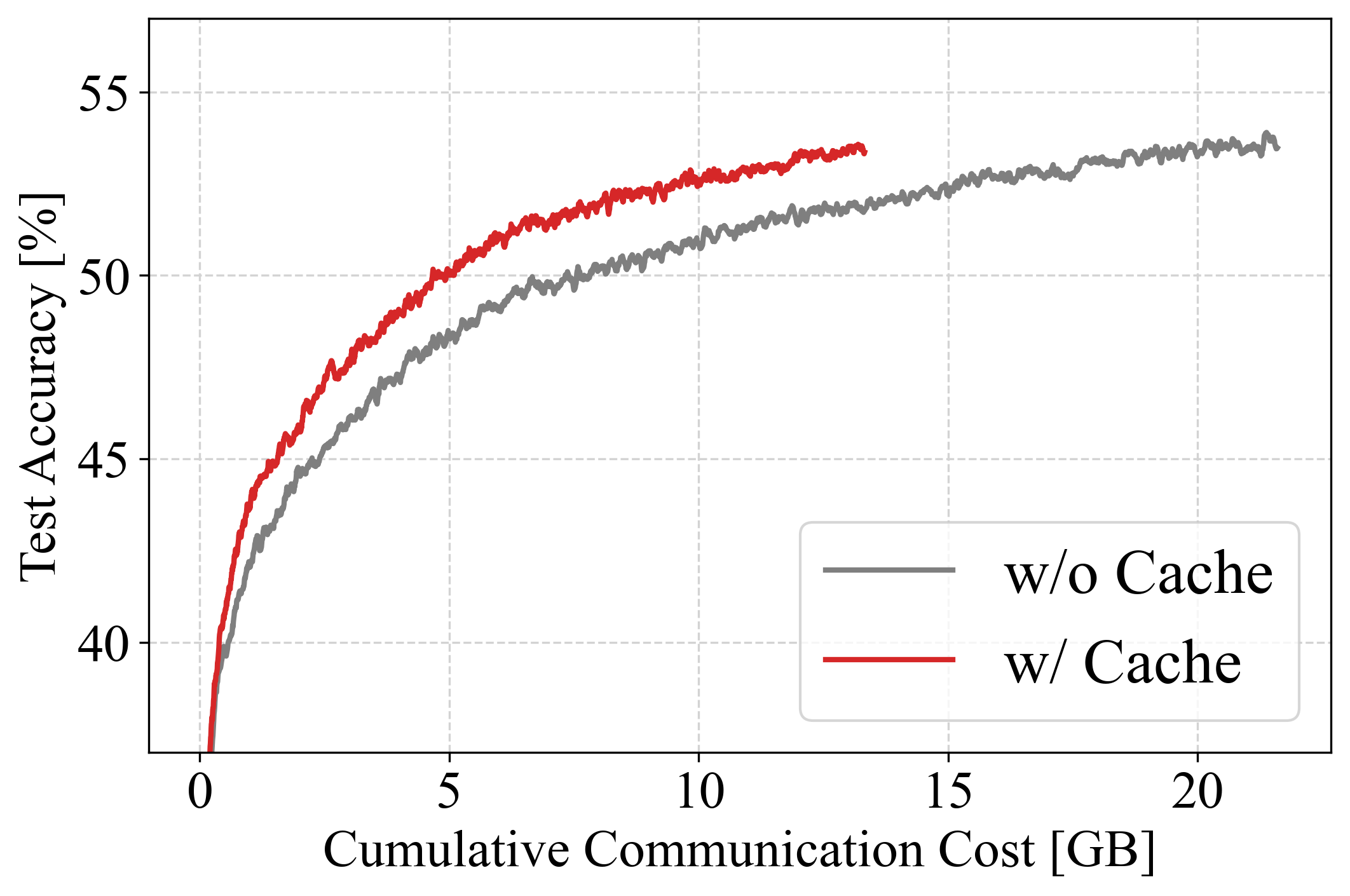}%
    \label{fig:slcm_integration:cfd_server}%
  }\hfill
  \subfloat[CFD (Client)]{%
    \includegraphics[width=0.48\columnwidth]{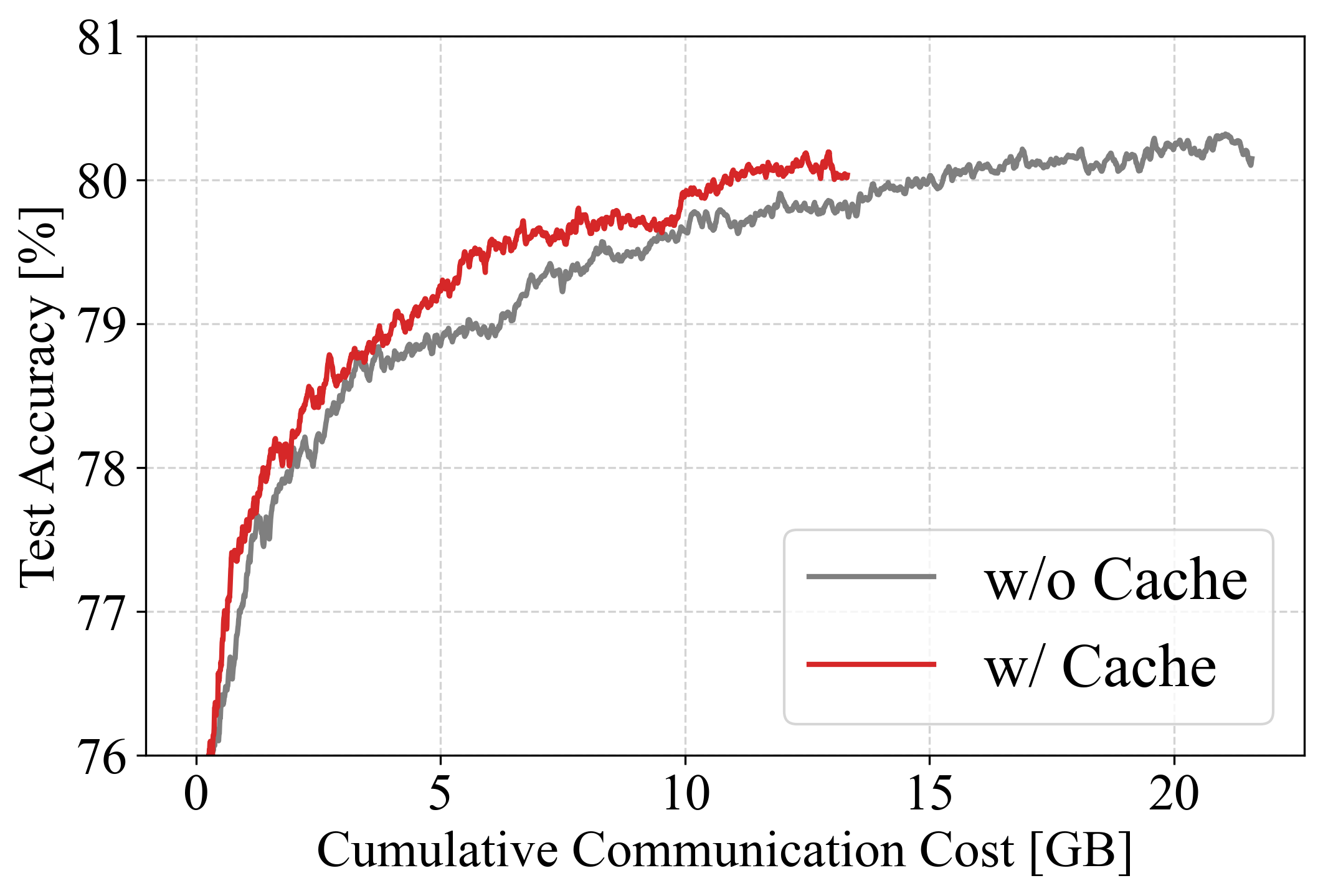}%
    \label{fig:slcm_integration:cfd_client}%
  }

  \subfloat[COMET (Server)]{%
    \includegraphics[width=0.48\columnwidth]{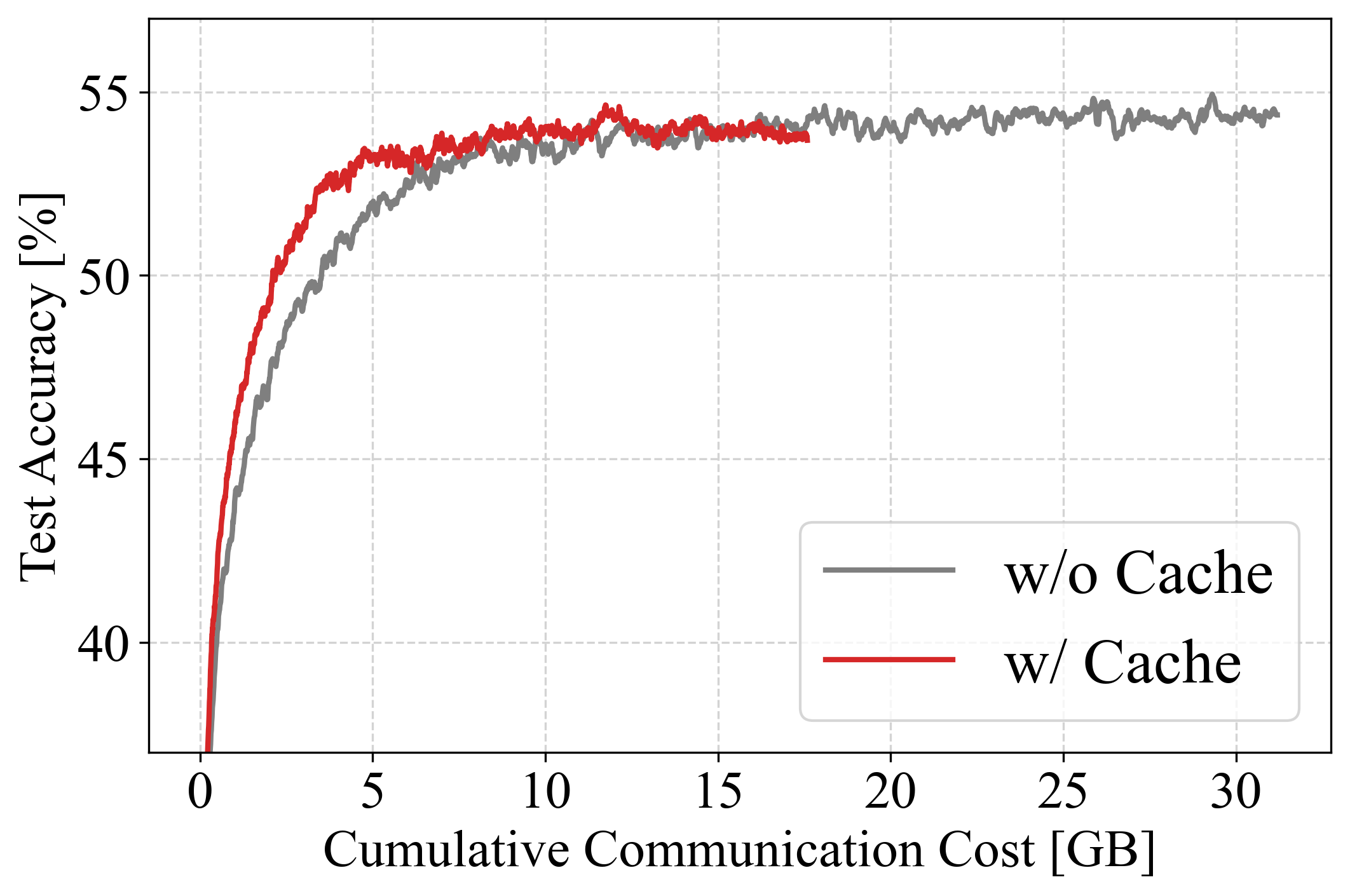}%
    \label{fig:slcm_integration:comet_server}%
  }\hfill
  \subfloat[COMET (Client)]{%
    \includegraphics[width=0.48\columnwidth]{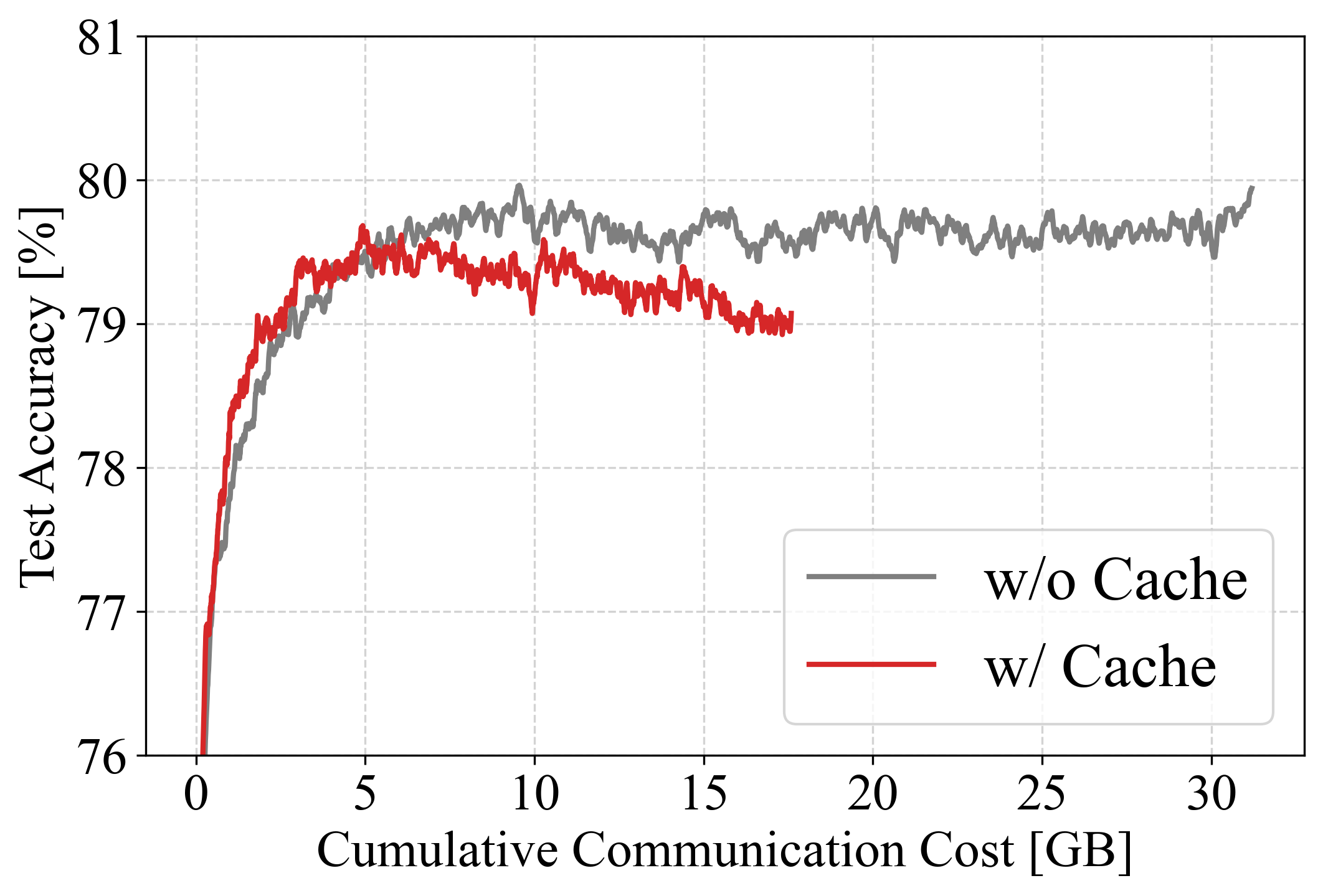}%
    \label{fig:slcm_integration:comet_client}%
  }

  \subfloat[Selective-FD (Server)]{%
    \includegraphics[width=0.48\columnwidth]{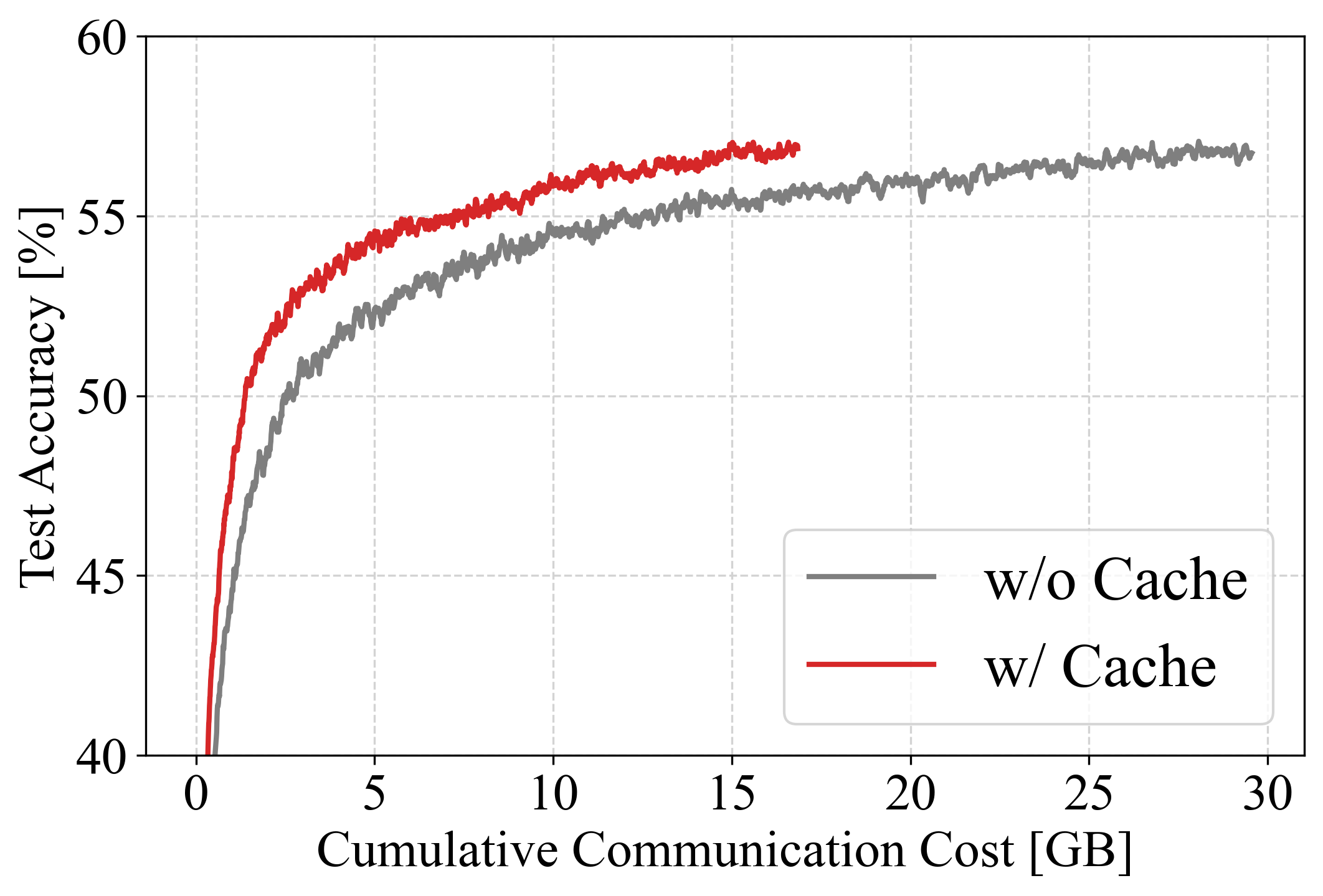}%
    \label{fig:slcm_integration:selectivefd_server}%
  }\hfill
  \subfloat[Selective-FD (Client)]{%
    \includegraphics[width=0.48\columnwidth]{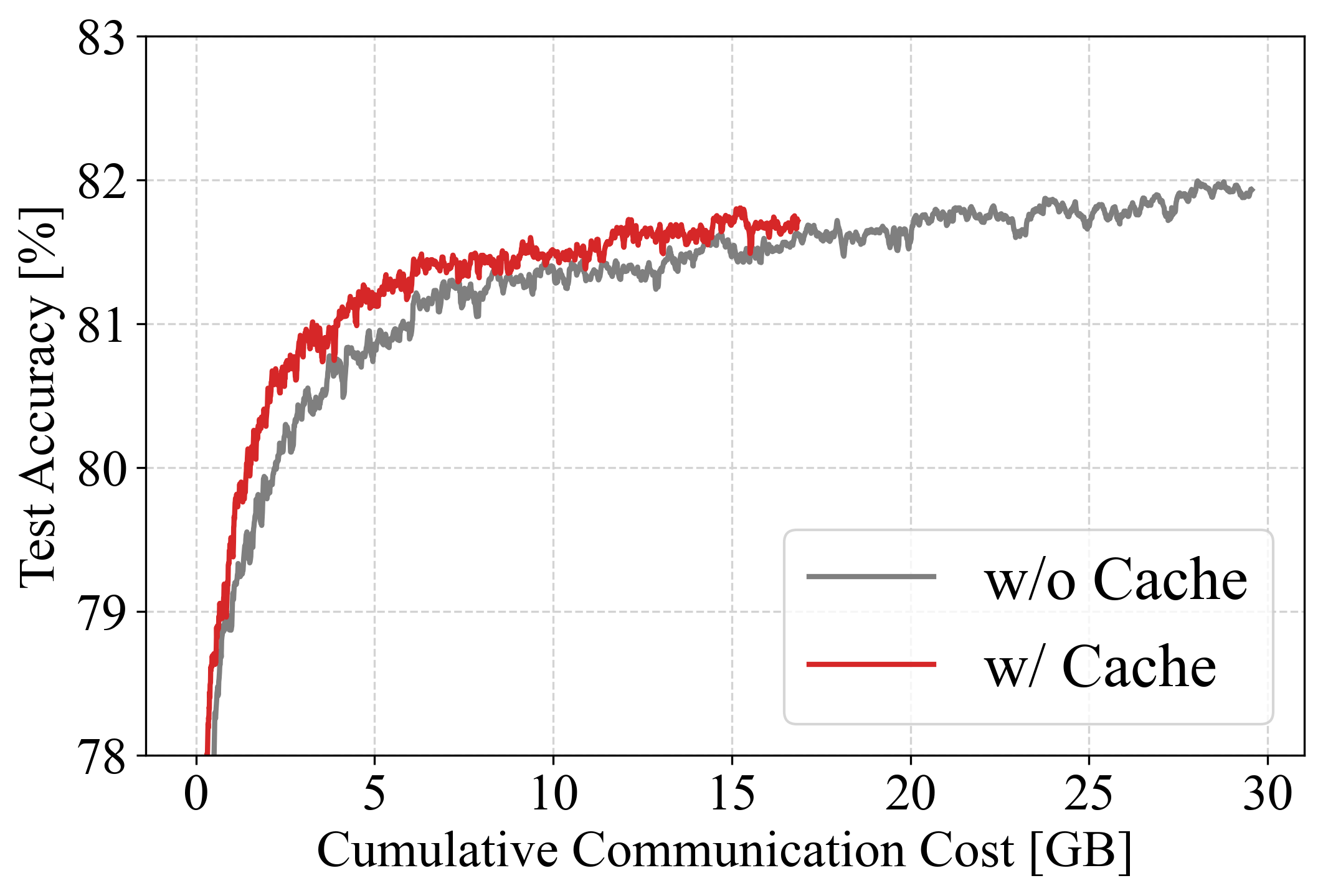}%
    \label{fig:slcm_integration:selectivefd_client}%
  }

  \caption{Impact of integrating SCARLET's soft-label caching mechanism into other state-of-the-art methods on CIFAR-10 ($\alpha=0.05$). The results demonstrate that our caching mechanism is a modular and widely applicable component. When applied to CFD, COMET, and Selective-FD, it reduces their communication costs by approximately 50\% with minimal or no degradation in server-side test accuracy.
  This highlights its potential as a general-purpose enhancement for distillation-based FL frameworks.}
  \label{fig:slcm_integration}
\end{figure}

Fig.~\ref{fig:slcm_integration} presents the server-side and client-side test accuracy versus cumulative communication cost for each method with the soft-label caching mechanism applied. The results show that incorporating the caching mechanism reduced server-side communication costs by approximately 50\% across all three methods, without substantial loss in target model accuracy. Notably, CFD and Selective-FD maintained server-side accuracy levels comparable to their original implementations, indicating that the mechanism imposes minimal performance overhead. On the client side, although communication reductions were less dramatic due to the quicker convergence of personalized models, notable gains were still observed. Selective-FD, in particular, preserved strong personalized accuracy while benefiting from improved communication efficiency. COMET showed a notable client-side accuracy degradation in later training rounds. This is likely attributed to a fundamental conflict between COMET's dynamic mechanism of selecting the optimal clustered teacher every round and the soft-label caching mechanism's static nature of reusing a past teacher for $D$ rounds.

These findings demonstrate the generalizability and effectiveness of the soft-label caching mechanism. Its seamless integration into a variety of distillation-based FL methods---including CFD, COMET, and Selective-FD---highlights its potential as a practical, drop-in component for enhancing communication efficiency. By incorporating this mechanism, existing FL frameworks can achieve significant server-side communication reductions. They can also maintain strong global and personalized model performance when the caching mechanism does not conflict with the method's core design, as seen in CFD and Selective-FD. This contributes to the scalability and practicality of FL systems.

\smallbreak
\subsubsection{Ablation Study of Cache Duration}
\label{subsubsection:ablation_study_of_cache_duration}
\smallbreak
The cache duration $D$, a key parameter in our soft-label caching mechanism, significantly influences both communication cost and accuracy in SCARLET. We performed a detailed analysis using CIFAR-10 under a Dirichlet distribution with $\alpha = 0.05$, as results with other $\alpha$ values showed similar trends and are therefore omitted for brevity. The results for this ablation study were averaged across three random seeds.

\begin{figure}[t]
  \centering

  \subfloat[Server]{%
    \includegraphics[width=0.95\columnwidth]{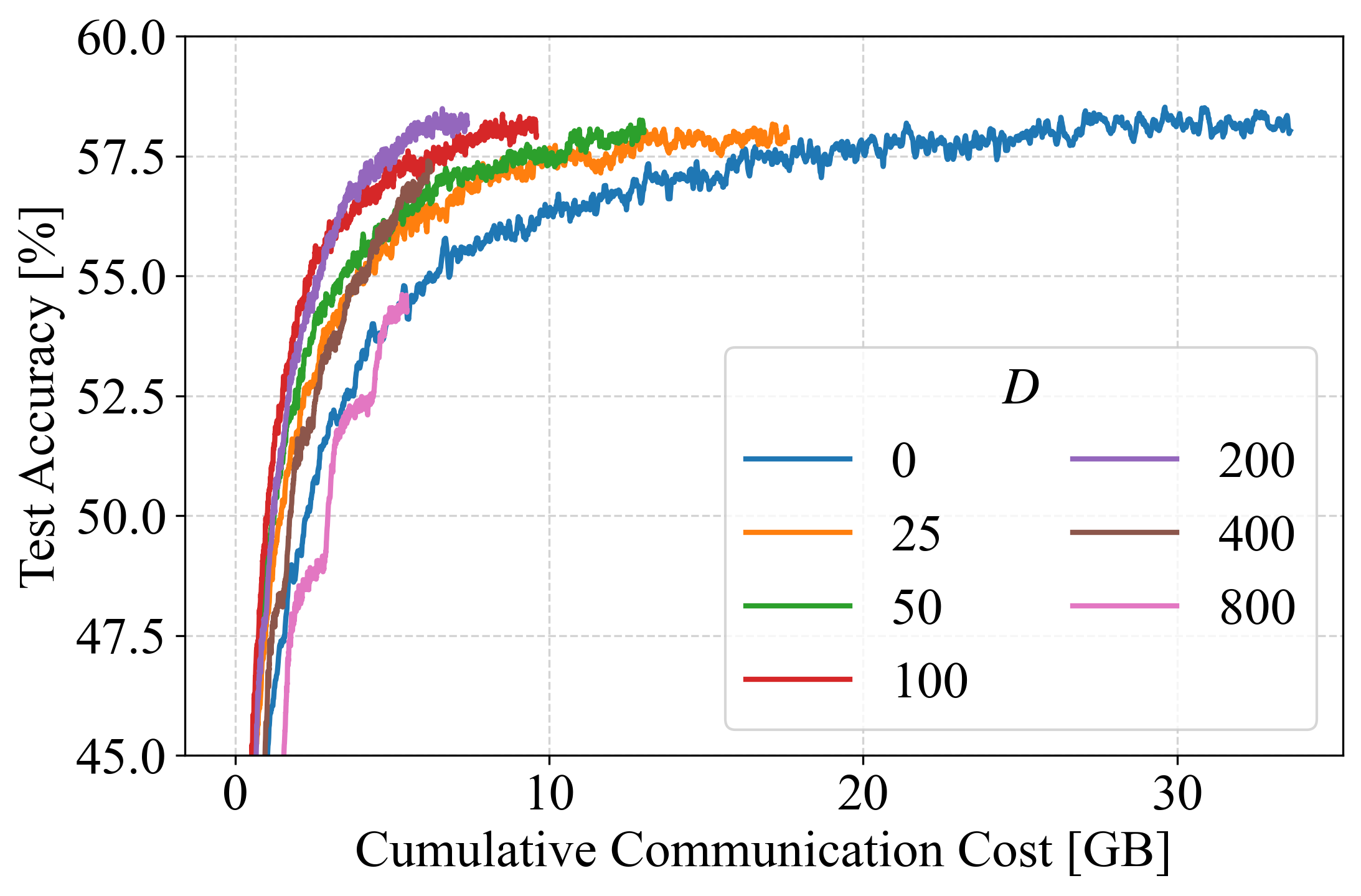}%
    \label{fig:ablation_duration:server}%
  }

  \subfloat[Client]{%
    \includegraphics[width=0.95\columnwidth]{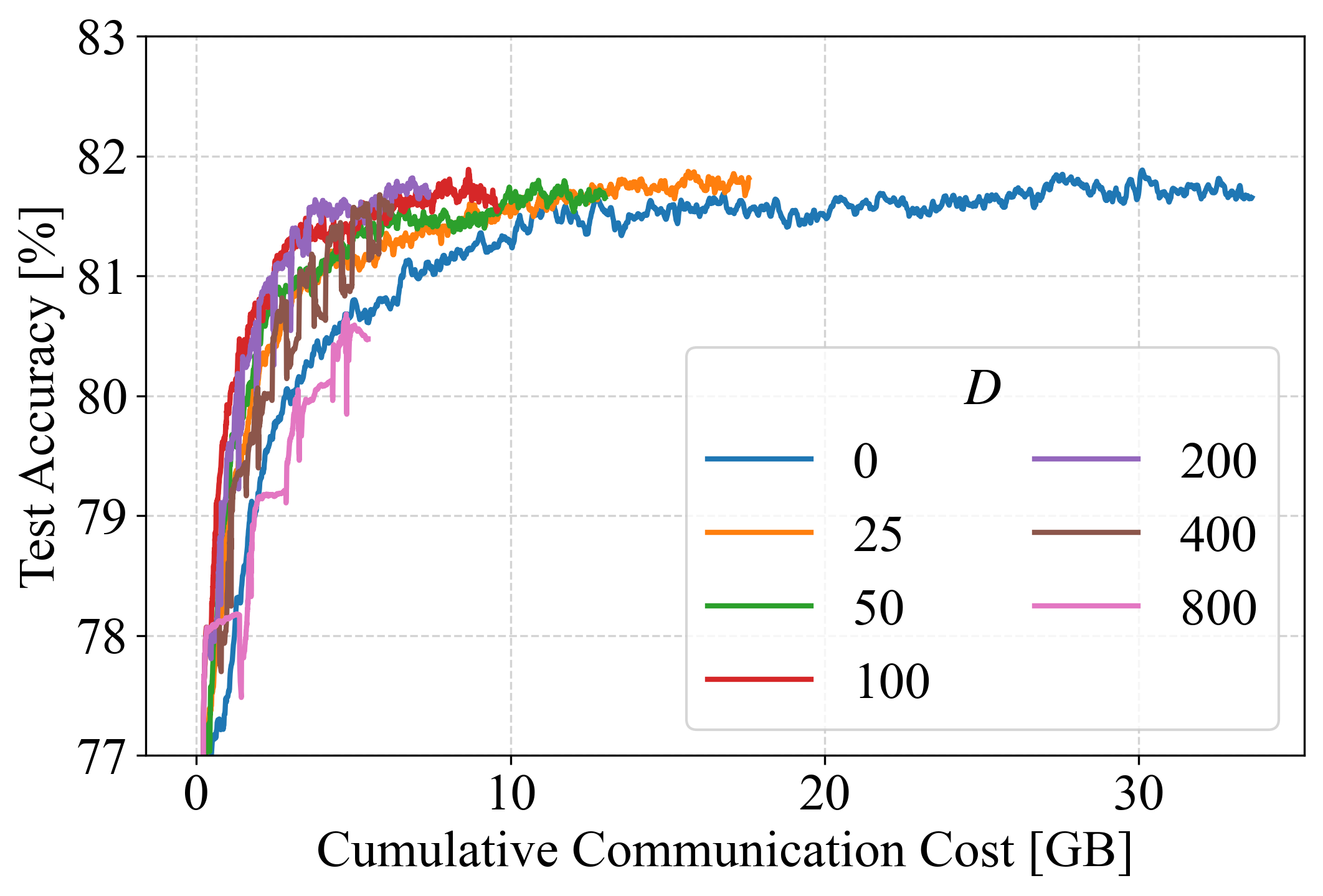}%
    \label{fig:ablation_duration:client}%
  }

  \caption{Impact of cache duration $D$ on the accuracy vs. communication cost trade-off (CIFAR-10, $\alpha=0.05$). This figure highlights that while caching ($D>0$) always saves communication, excessively long durations degrade performance. Increasing $D$ from 0 (blue line) to 50 or 100 (green/red lines) significantly shifts the curves left (saving communication) with minimal accuracy loss. However, very long durations (e.g., $D \geq 400$) cause a clear drop in final accuracy, demonstrating the negative impact of using stale cached labels.}
  \label{fig:ablation_duration}
\end{figure}

The results in Fig.~\ref{fig:ablation_duration} demonstrate the trade-offs associated with cache duration. On the server side (Fig.~\ref{fig:ablation_duration:server}), we observe that a conservative cache duration provides substantial communication savings with negligible impact on performance. Increasing $D$ from 0 (no cache) up to 200 results in almost no change to the final model accuracy (less than a 1 percentage point difference). However, as the cache duration becomes excessively long (e.g., $D=400$ or $D=800$), the model fails to converge to the target accuracy within the 3000-round training period. This performance degradation is caused by the system's heavy reliance on stale cached soft-labels, as predicted by our simulation in Section~\ref{subsubsection:cache_module} (Fig. \ref{fig:cache_simulation}). Notably, the stepwise accuracy improvements observed for $D=800$ at multiples of its cache duration (e.g., rounds 800 and 1600) support this hypothesis, as these moments correspond to the full refresh of the stale cache.

On the client side (Fig.~\ref{fig:ablation_duration:client}), a similar trend is observed. For appropriate cache durations (e.g., $D \le 200$), there is no clear degradation in final accuracy compared to the baseline. However, for excessively long durations (e.g., $D \ge 400$), the training stability markedly decreases, and the model's performance becomes erratic. Given that these long durations also fail to deliver optimal server-side performance, they are not considered effective.

These findings simplify the practical guidance for tuning $D$. The primary trade-off is not a gradual, nuanced decline, but rather a distinct performance cliff when $D$ becomes excessively large (e.g., $D \ge 400$). This behavior aligns with the risk identified in our earlier simulation (Fig.~\ref{fig:cache_simulation}), which predicted that excessively long durations lead to heavy reliance on stale cached soft-labels. The most critical insight for practical deployment is that a conservative duration (e.g., $D=50$ or $D=100$) provides a substantial reduction in communication cost (Fig.~\ref{fig:ablation_duration}) with minimal accuracy trade-offs. This demonstrates that a robust and effective $D$ that avoids significant performance loss is easily identifiable, making cache duration a valuable parameter for tuning communication efficiency in FL systems.

\smallbreak
\subsubsection{Ablation Study of Enhanced ERA}
\label{subsubsection:ablation_study_of_enhanced_era}
\smallbreak
To evaluate the impact of the aggregation sharpness parameter $\beta$ in the Enhanced ERA mechanism, we conducted experiments on CIFAR-10 under both strong ($\alpha = 0.05$) and moderate ($\alpha = 0.3$) non-IID conditions. A fixed cache duration of $D = 50$ was used in all experiments, and the results are presented in Fig.~\ref{fig:ablation_eera}.

\begin{figure}[t]
  \centering

  \subfloat[Server ($\alpha = 0.05$)]{%
    \includegraphics[width=0.48\columnwidth]{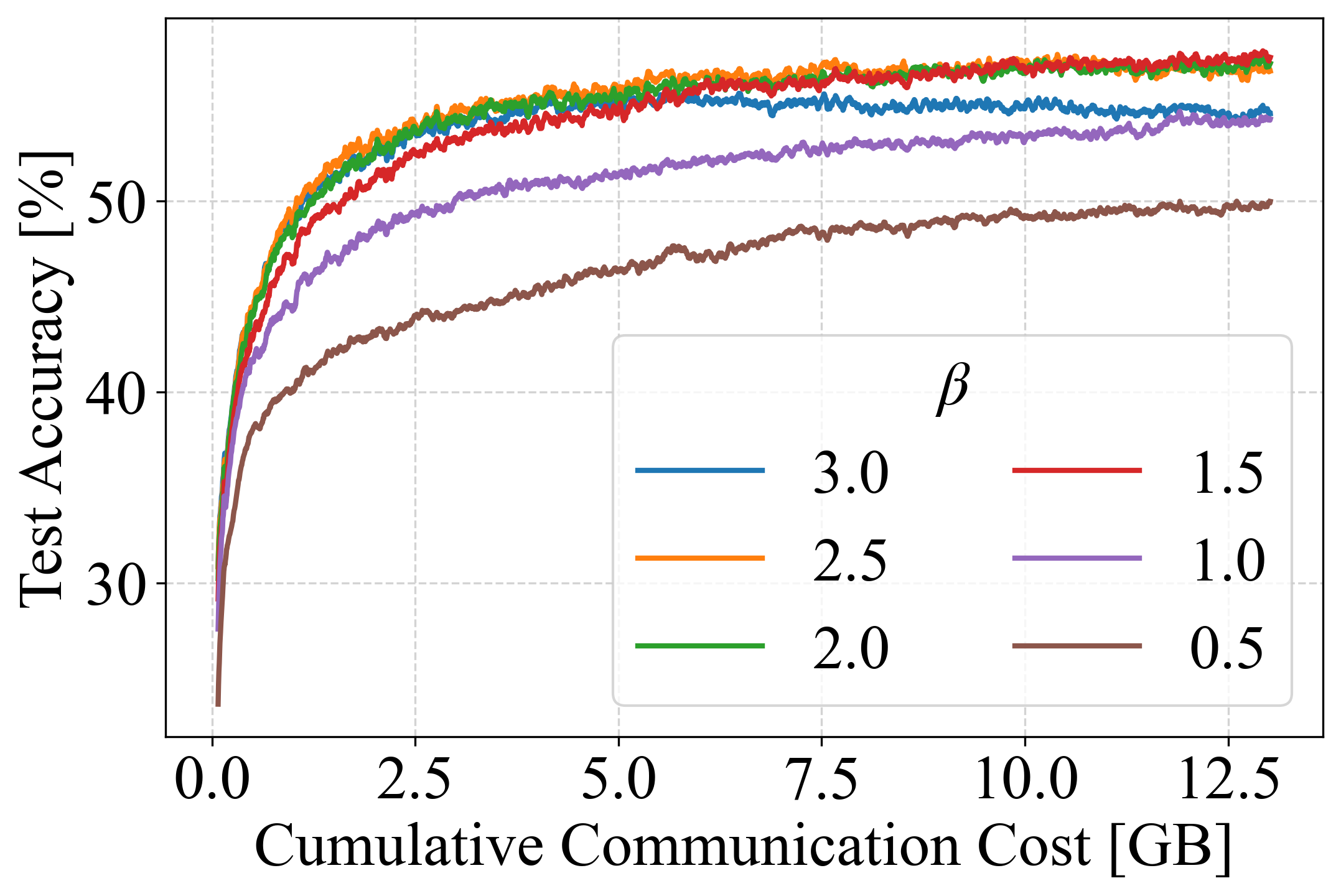}%
    \label{fig:ablation_eera:server_a005}%
  }\hfill
  \subfloat[Client ($\alpha = 0.05$)]{%
    \includegraphics[width=0.48\columnwidth]{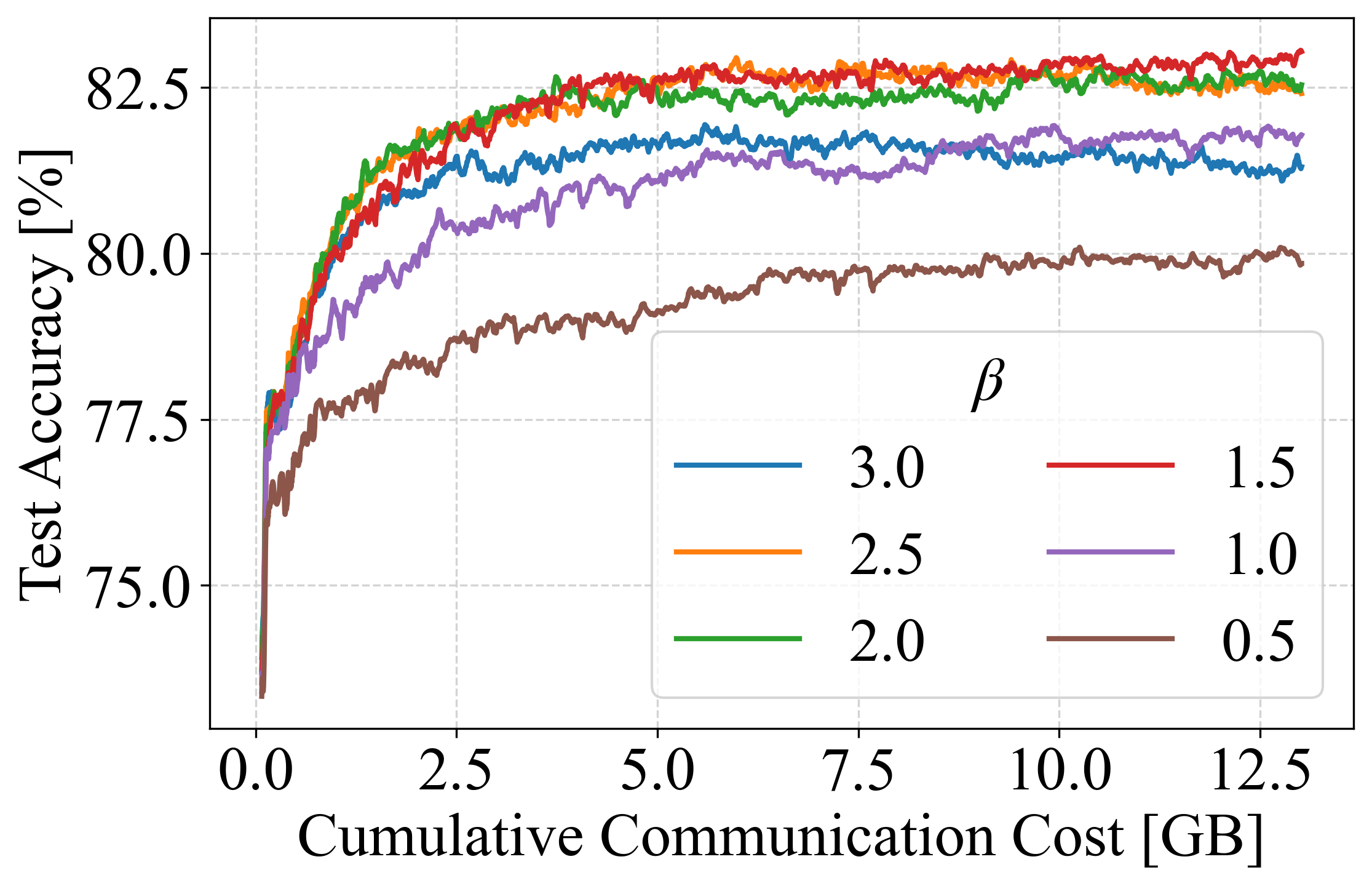}%
    \label{fig:ablation_eera:client_a005}%
  }

  \subfloat[Server ($\alpha = 0.3$)]{%
    \includegraphics[width=0.48\columnwidth]{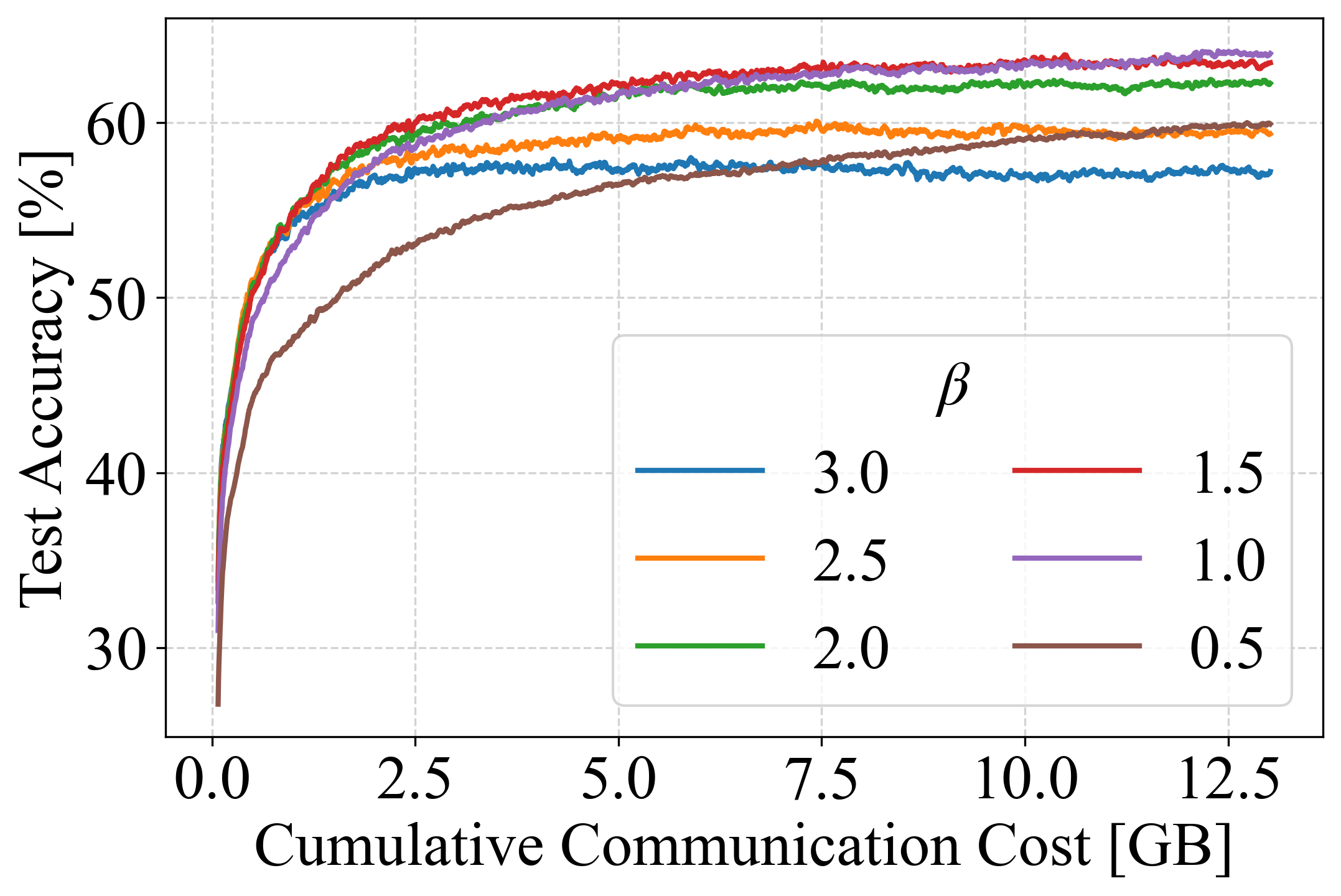}%
    \label{fig:ablation_eera:server_a03}%
  }\hfill
  \subfloat[Client ($\alpha = 0.3$)]{%
    \includegraphics[width=0.48\columnwidth]{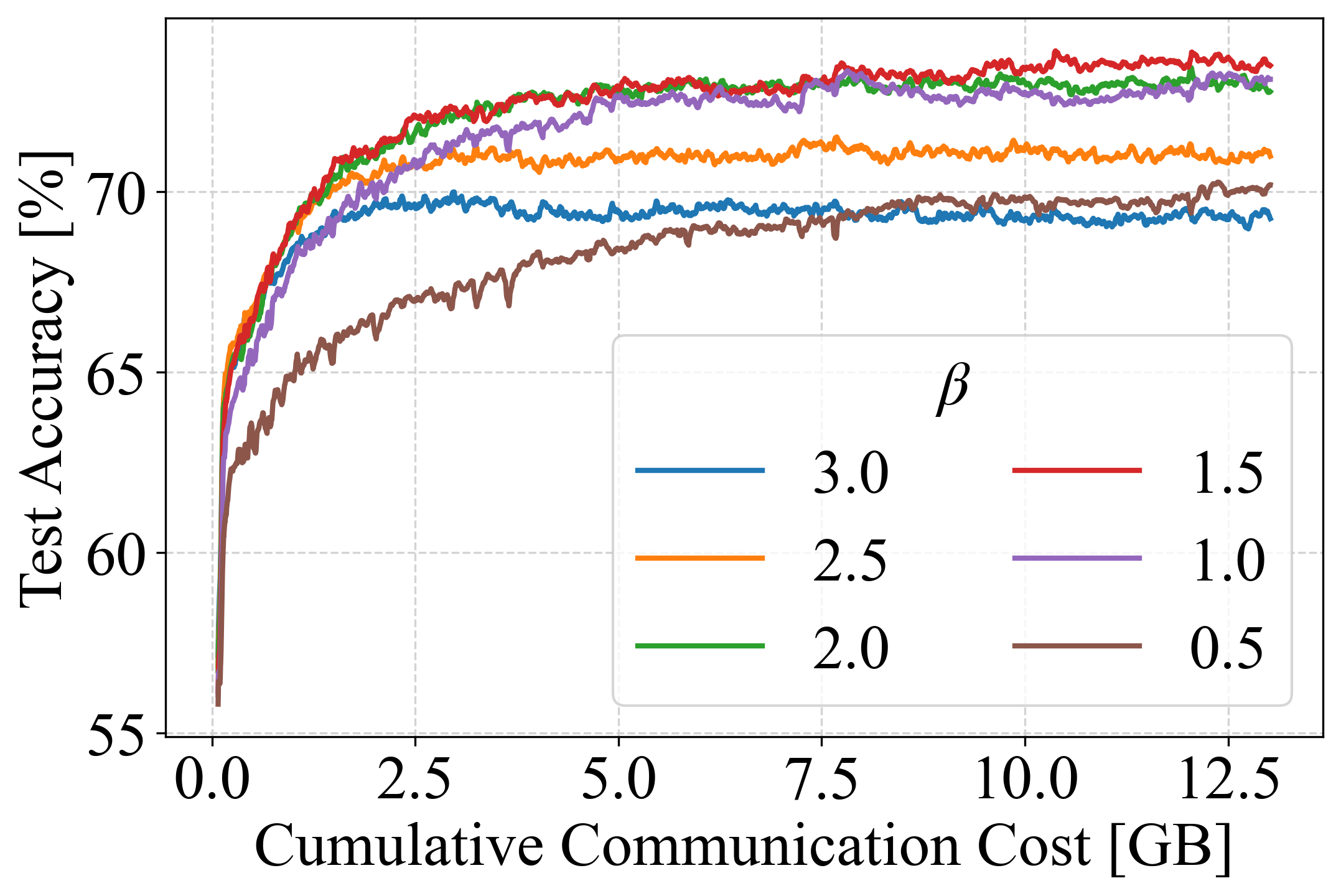}%
    \label{fig:ablation_eera:client_a03}%
  }

  \caption{Impact of the aggregation sharpness parameter $\beta$ in Enhanced ERA on test accuracy for SCARLET on CIFAR-10
  under Dirichlet distributions with $\alpha = 0.05$ (strong non-IID) and $\alpha = 0.3$ (moderate non-IID). Varying $\beta$ exhibits distinct effects: under strong non-IID conditions (\ref{fig:ablation_eera:server_a005}, \ref{fig:ablation_eera:client_a005}), increasing $\beta$ (e.g., $\beta \ge 1.5$) accelerates server-side convergence and improves accuracy. Under moderate non-IID conditions (\ref{fig:ablation_eera:server_a03}, \ref{fig:ablation_eera:client_a03}), $\beta=1.0$ (simple averaging) is most stable for the server, while $\beta=1.5$ achieves the highest client-side accuracy, highlighting a trade-off between generalization and personalization.}
  \label{fig:ablation_eera}
\end{figure}

For strong non-IID scenarios ($\alpha = 0.05$, Fig.~\ref{fig:ablation_eera:server_a005} and \ref{fig:ablation_eera:client_a005}), increasing $\beta$ values (e.g., $\beta \ge 1.5$) significantly improved server-side convergence (Fig.~\ref{fig:ablation_eera:server_a005}). While $\beta = 1.5$ achieved the highest final accuracy on both server and client, increasing $\beta$ further to 2.0 offered a slight improvement in convergence speed at the cost of a marginal (less than 1 percentage point) drop in final accuracy. This acceleration is attributed to the sharper aggregation of client predictions, which mitigates the smoothing effects of locally biased soft-labels. However, on the client side (Fig.~\ref{fig:ablation_eera:client_a005}), excessively high $\beta$ values caused accuracy degradation in later rounds. This degradation likely results from over-sharpened labels introducing biases during local training, thus limiting the generalization of client-specific models.

For moderate non-IID conditions ($\alpha = 0.3$, Fig.~\ref{fig:ablation_eera:server_a03} and \ref{fig:ablation_eera:client_a03}), $\beta = 1.0$ preserved steady server-side accuracy (Fig.~\ref{fig:ablation_eera:server_a03}) across all communication rounds, avoiding the risks associated with over-sharpening or over-smoothing. On the client side (Fig.~\ref{fig:ablation_eera:client_a03}), $\beta = 1.5$ yielded the highest accuracy, striking a balance between effective aggregation and model generalization. Larger $\beta$ values led to performance degradation, suggesting that less aggressive sharpening is preferable in scenarios with lower heterogeneity.

To more precisely characterize the sensitivity of Enhanced ERA to data heterogeneity, we extend the ablation beyond the two $\alpha$ settings and sweep $\alpha \in \{0.05, \dots, 0.30\}$ and $\beta \in \{0.5, \dots, 3.0\}$. For each ($\alpha$, $\beta$) pair, we report the mean test accuracy over the last 10 rounds at round 3000 and visualize the results as heatmaps for server- and client-side evaluations (Fig.~\ref{fig:ablation_eera_heatmap}). This grid complements the learning-curve analysis in Fig.~\ref{fig:ablation_eera} and clarifies how sharpening should be scheduled across non-IID strengths.

\begin{figure}[t]
  \centering

  \subfloat[Server]{%
    \includegraphics[width=0.48\columnwidth]{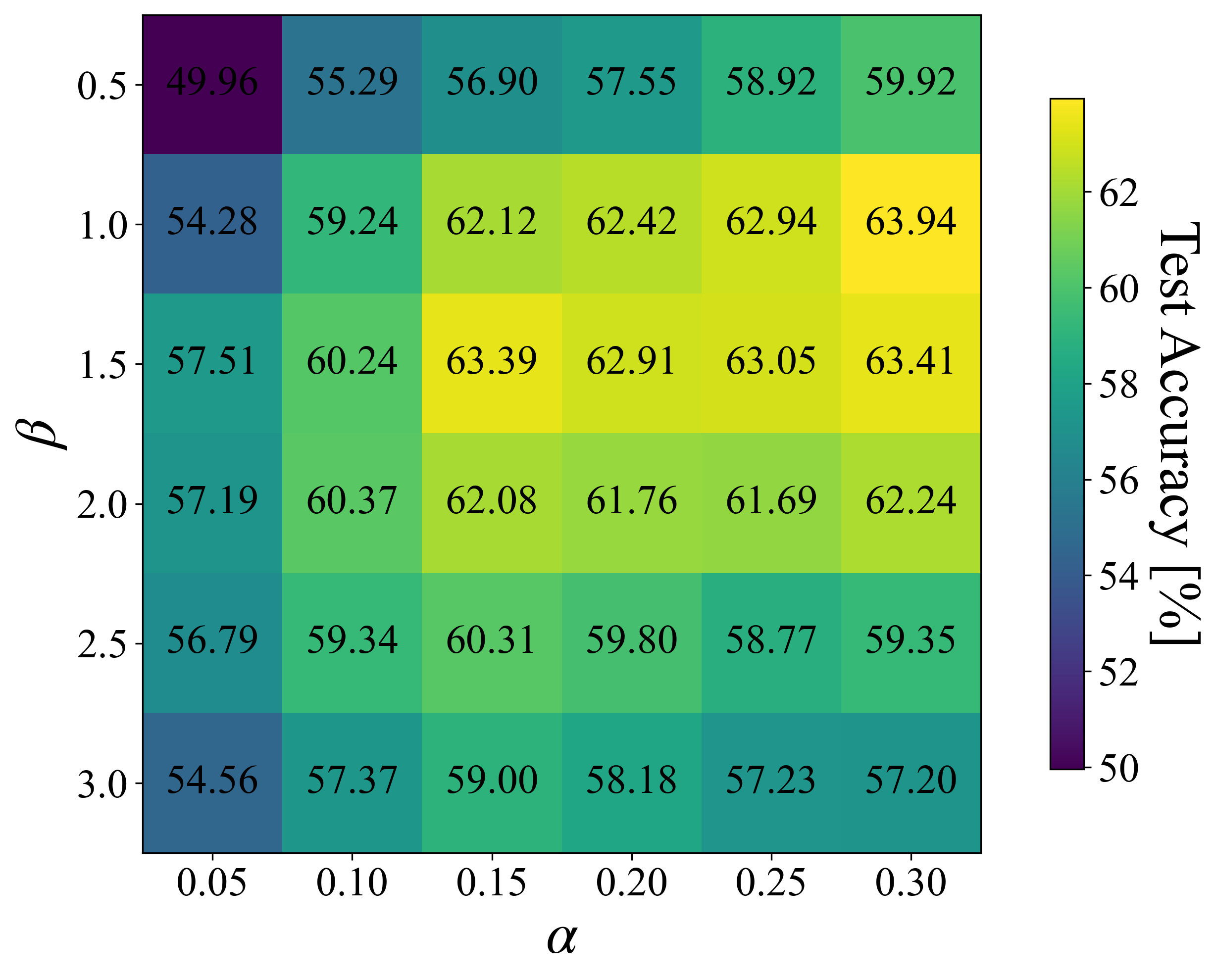}%
    \label{fig:ablation_eera_heatmap:server}%
  }\hfill
  \subfloat[Client]{%
    \includegraphics[width=0.48\columnwidth]{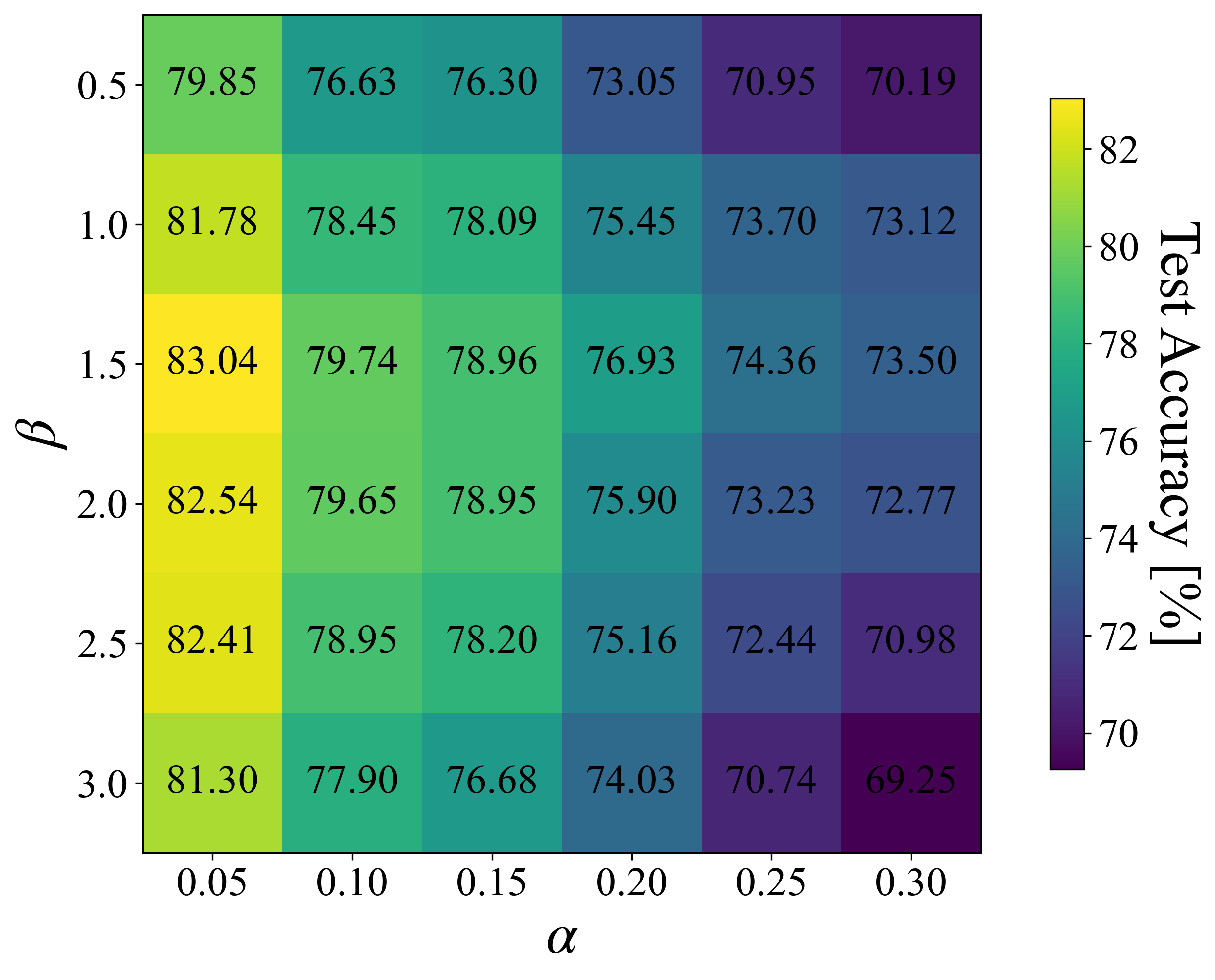}%
    \label{fig:ablation_eera_heatmap:client}%
  }

  \caption{Sensitivity of Enhanced ERA to data heterogeneity.
  Heatmaps show final test accuracy (\%) vs. aggregation sharpness $\beta$ and Dirichlet $\alpha$ for server-side (\ref{fig:ablation_eera_heatmap:server}) and client-side (\ref{fig:ablation_eera_heatmap:client}) evaluation. Each cell represents the mean accuracy over the last 10 rounds at round 3000. As $\alpha$ increases (approaching IID), the server-side optimum shifts toward $\beta = 1.0$, whereas $\beta = 1.5$ remains consistently strong across $\alpha$ and is typically optimal on the client side. Together with Fig.~\ref{fig:ablation_eera}, these results highlight a speed–accuracy trade-off: larger $\beta$ accelerates server-side convergence under strong non-IID conditions, while overly sharp aggregation can dampen late-stage client performance gains.}
  \label{fig:ablation_eera_heatmap}
\end{figure}

\begin{figure}[t]
  \centering

  \subfloat[Server]{%
    \includegraphics[width=0.48\columnwidth]{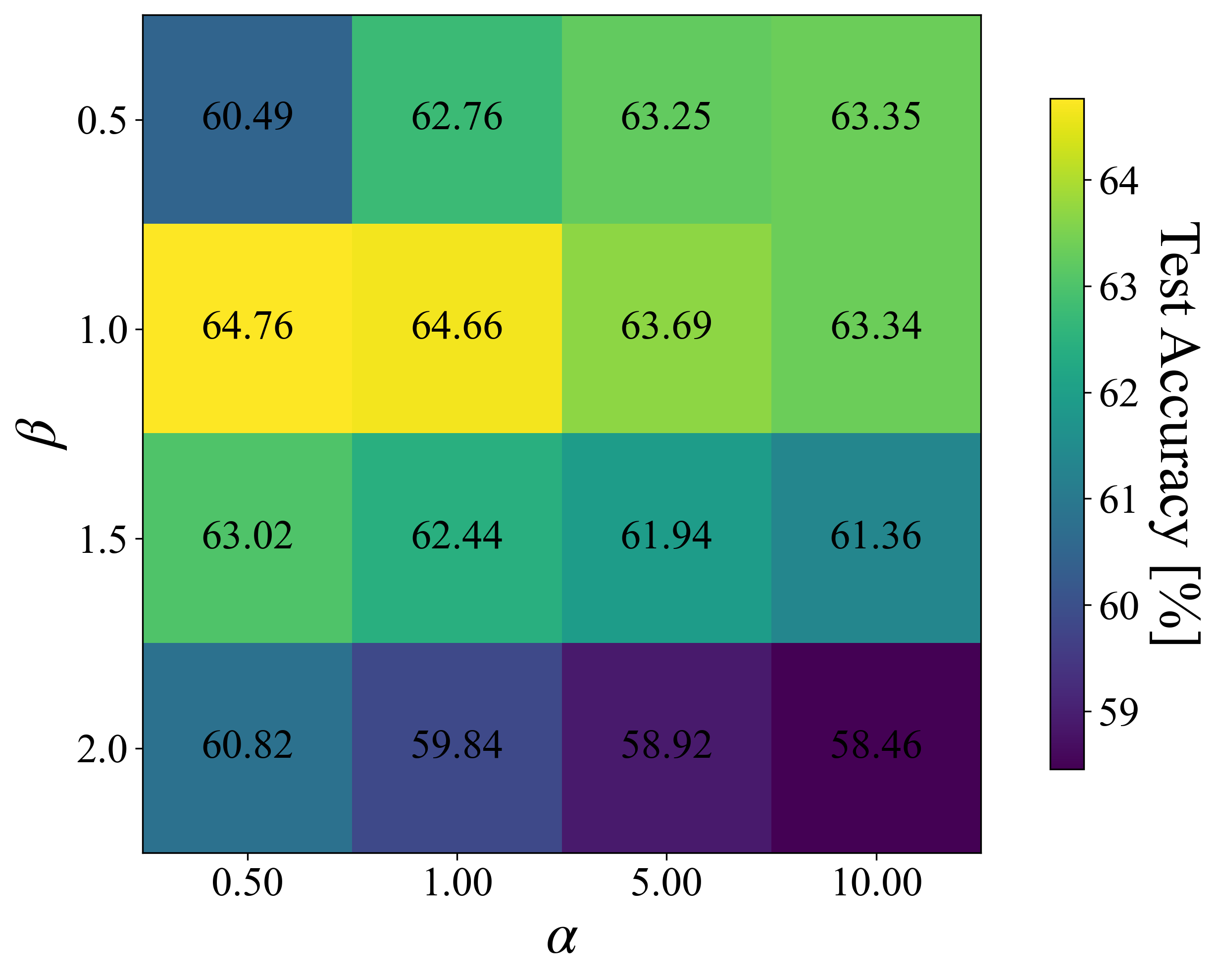}%
    \label{fig:near_iid_heatmap:server}%
  }\hfill
  \subfloat[Client]{%
    \includegraphics[width=0.48\columnwidth]{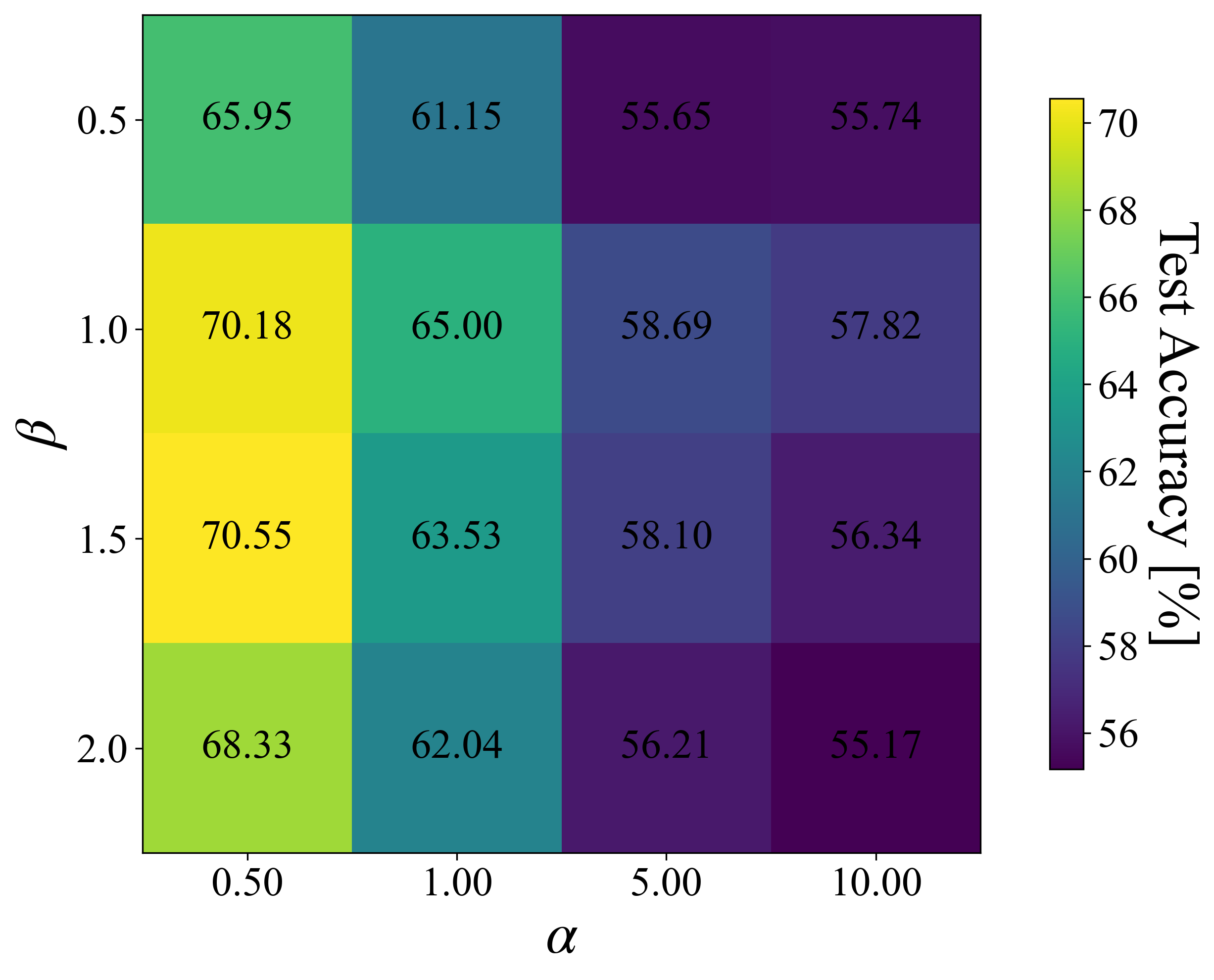}%
    \label{fig:near_iid_heatmap:client}%
  }

  \caption{Sensitivity of Enhanced ERA to near-IID data heterogeneity.
  Heatmaps show final test accuracy (\%) vs. aggregation sharpness $\beta$ and Dirichlet $\alpha$ for server-side (\ref{fig:near_iid_heatmap:server}) and client-side (\ref{fig:near_iid_heatmap:client}) evaluation. As $\alpha$ increases (approaching IID), the optimal performance for both server and client converges toward $\beta = 1.0$, confirming that Enhanced ERA behaves consistently with standard aggregation under homogeneous data conditions.}
  \label{fig:near_iid_heatmap}
\end{figure}

Three observations emerge consistently. On the server side, the optimal $\beta$ drifts toward 1.0 as the partition approaches IID: under strong non-IID ($\alpha = 0.05$), $\beta \approx 1.5 \text{-} 2.0$ yields the best accuracy, whereas under moderate non-IID ($\alpha = 0.3$) the optimum settles at $\beta = 1.0$. Notably, $\beta = 1.5$ is robust across the entire sweep, remaining within a small margin of the per-$\alpha$ optimum even when that optimum is $\beta=1.0$. On the client side, $\beta = 1.5$ consistently attains the highest final accuracy across $\alpha$, and the penalty from “over-sharpening” (increasing $\beta$ by one or two steps) is milder than on the server---particularly when $\alpha$ indicates stronger non-IID.

To provide a complete picture, we extended this evaluation to near-IID scenarios ($\alpha \ge 0.5$), as shown in Fig.~\ref{fig:near_iid_heatmap}. This evaluation follows the same methodology, sweeping $\alpha \in \{0.5, \dots, 10.0\}$ and $\beta \in \{0.5, \dots, 2.0\}$. The results confirm the trend: as the client data distribution approaches IID (i.e., as $\alpha$ increases), the optimal aggregation sharpness $\beta$ converges to 1.0 for both server-side and client-side evaluations.

For $\alpha \ge 1.0$, $\beta=1.0$ (which corresponds to simple soft-label averaging) consistently achieves optimal or near-optimal performance. This indicates that when data heterogeneity is low, the aggressive sharpening ($\beta > 1.0$) required for strong non-IID settings is unnecessary and, in fact, degrades performance.

Notably, on the server side (Fig. \ref{fig:near_iid_heatmap:server}), for near-IID settings ($\alpha \in \{5.0, 10.0\}$), a $\beta$ value below 1.0 (specifically $\beta=0.5$) achieves final accuracy comparable to that of the optimal $\beta=1.0$. This suggests that when all clients produce similar, high-confidence predictions, sharpening provides no additional benefit, and a slight ``softening" (or entropy-increasing) effect does not degrade—and may marginally improve—the global model's accuracy. This implies a potential for Enhanced ERA to be utilized ``in reverse" to manage overly confident and homogeneous client predictions, a direction warranting further investigation.

Taken together, these findings (Fig.~\ref{fig:ablation_eera}, \ref{fig:ablation_eera_heatmap}, and \ref{fig:near_iid_heatmap}) demonstrate that Enhanced ERA successfully translates the complex problem of non-IID aggregation into a practical speed-accuracy trade-off. This allows for simple, robust tuning guidelines for deployments where $\alpha$ is unknown: $\beta = 1.5$ serves as a strong default, achieving near-optimal for both server and clients across a broad range of heterogeneity. For specific goals, $\beta$ can be adjusted further: prioritizing faster server-side convergence under strong non-IID conditions suggests $\beta = 2.0$, while $\beta=1.0$ (i.e., simple averaging) is optimal as the data approaches IID.

\smallbreak
\subsubsection{Evaluation under Partial Client Participation}
\label{subsubsection:evaluation_under_partial_client_participation}
\smallbreak

We evaluated this partial participation scenario on CIFAR-10 with $K=100$ clients, varying the client participation ratio, denoted as $p\ (0 < p < 1)$, from 0.1 (10 clients per round) to 1.0 (all 100 clients). All other hyperparameters, including $D=50$ and $\beta$ values, remain the same as in the main experiment. The results are summarized in Fig.~\ref{fig:partial_participation}. These graphs plot the final accuracy (mean and standard deviation over the last 10 rounds, left y-axis) and the total cumulative communication cost (right y-axis) against the client participation ratio for both $\alpha=0.05$ (Fig.~\ref{fig:partial_participation:0-05}) and $\alpha=0.3$ (Fig.~\ref{fig:partial_participation:0-3}).

\begin{figure}[t]
  \centering

  \subfloat[Dirichlet $\alpha = 0.05$]{%
    \includegraphics[width=\columnwidth]{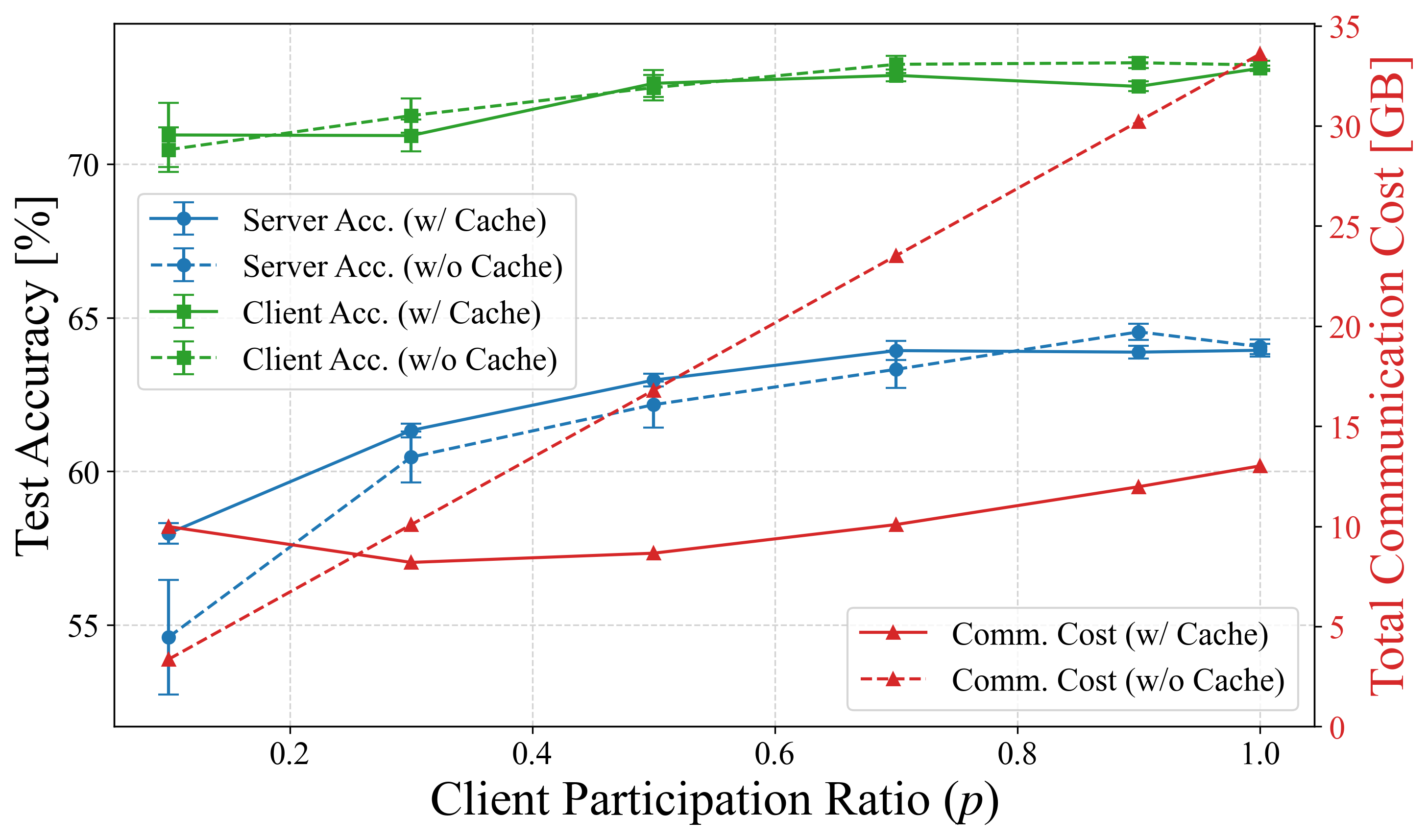}%
    \label{fig:partial_participation:0-05}%
  }

  \subfloat[Dirichlet $\alpha = 0.3$]{%
    \includegraphics[width=\columnwidth]{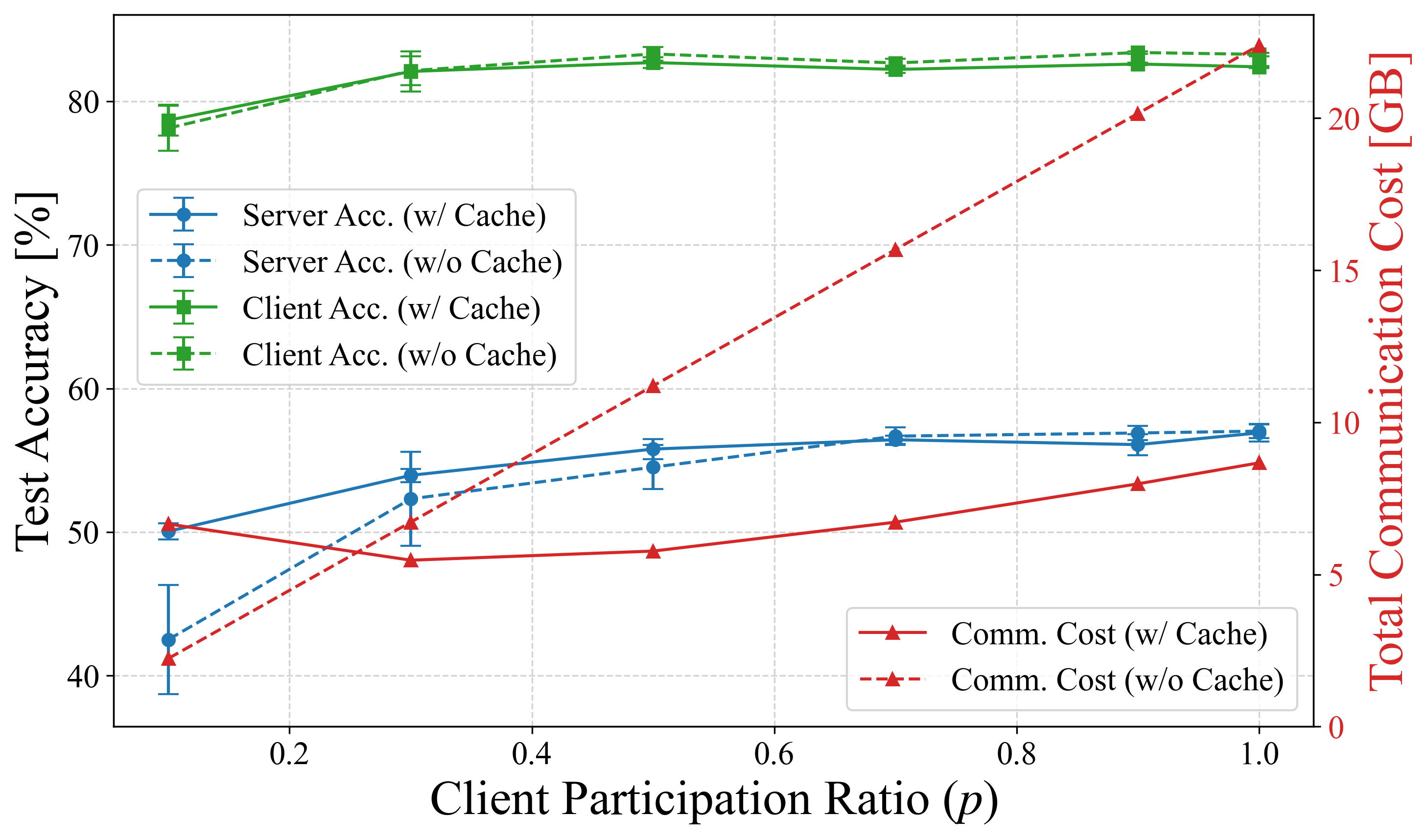}%
    \label{fig:partial_participation:0-3}%
  }

  \caption{Performance Comparison for SCARLET under varying client participation ratios ($p$) on CIFAR-10. Solid lines indicate the soft-label caching mechanism was enabled (w/ Cache), while dashed lines indicate it was disabled (w/o Cache). Accuracy values are the mean $\pm$ std. dev. over the final 10 rounds. The figure highlights that enabling caching results in significant communication cost reduction with only slight accuracy degradation in high participation settings ($p \ge 0.7$), whereas in low participation settings ($p \le 0.3$), it improves accuracy considerably despite increasing communication cost.}
  \label{fig:partial_participation}
\end{figure}

As shown in Fig.~\ref{fig:partial_participation}, SCARLET's caching mechanism (solid lines) demonstrates its primary benefit at high participation ratios ($p \ge 0.7$). It achieves substantial communication cost reductions (e.g., the red solid line is significantly below the red dashed line for $p \ge 0.7$ in both Fig.~\ref{fig:partial_participation:0-05} and Fig.~\ref{fig:partial_participation:0-3}) while maintaining comparable accuracy (blue and green solid lines are close to their dashed counterparts), with performance degradation typically below 1 percentage point in these high participation settings.

Conversely, in low participation scenarios (specifically $p = 0.1$), the cache provides a significant stabilization effect. In Fig.~\ref{fig:partial_participation:0-3}, the baseline model's server accuracy (blue dashed line at $p=0.1$) degrades significantly to 42.51\%, whereas the caching mechanism (blue solid line) maintains it at 50.05\%, a 7.5 percentage point improvement. We attribute this to temporal stabilization. In low participation settings, the aggregated teacher signal is volatile, as it is computed from a small, frequently changing subset of clients. The baseline model is sensitive to this instability. SCARLET's cache mitigates this by reusing soft-labels from previous rounds, which functions as a temporal smoothing mechanism, providing a more consistent training signal and dampening the volatility from low client sampling ratios. This stabilization effect introduces a trade-off, as seen in the $\alpha=0.3$, $p=0.1$ scenario (Fig.~\ref{fig:partial_participation:0-3}): the communication cost with the cache (red solid line at 9.99 GB) is significantly higher than without it (red dashed line at 3.36 GB). This increased communication cost is due to the frequent transmission of catch-up packages to stale clients.

These results demonstrate the robustness of SCARLET's design: while it significantly reduces costs in high participation settings, it also functions as a critical performance stabilizer in low participation, volatile environments.

\section{Limitations and Future Directions}
\label{limitations}

While SCARLET represents a significant advancement in communication-efficient FL, several limitations merit further investigation to broaden its practical applicability. This section discusses these limitations and suggests directions for future research.

First, SCARLET, like other distillation-based FL approaches, depends on the availability of a shared public dataset whose distribution closely resembles, but does not necessarily match exactly, that of the clients' data. This dependency restricts the applicability scenarios for distillation-based FL methods. Knowledge distillation trains a student model to replicate a teacher's outputs for given inputs, essentially enforcing functional equivalence. In theory, even random noise could serve as the shared dataset; however, prior work indicates that this yields poor results \cite{stanton2021does}. Recent techniques leveraging generative models have been proposed to produce suitable synthetic data in place of a real public dataset \cite{fedgen}, \cite{fedcg}, \cite{fediod}. Nevertheless, training generative models within an FL context still carries a risk of inadvertently exposing sensitive client patterns if not carefully managed. Thus, robust safeguards for privacy-preserving synthetic data generation remain critical. Integrating these privacy-conscious data synthesis techniques offers a promising direction to broaden the practical applicability of distillation-based FL frameworks like SCARLET.

Second, the current implementation of SCARLET assumes a homogeneous model architecture across all clients and the server for analytical clarity and simplicity. However, one significant advantage of distillation-based FL is its inherent capability to support heterogeneous models. Several studies have demonstrated successful implementations of model heterogeneity within FL frameworks, covering both client-server diversity \cite{fedgkt} and inter-client variability \cite{fedmd}. Nevertheless, comprehensive evaluations of accuracy degradation resulting from introducing model heterogeneity remain limited, with only a handful of studies, such as \cite{cmfd} addressing this critical aspect. Consequently, the experimental validation of SCARLET and similar distillation-based FL frameworks under heterogeneous model conditions remains an open research area. Investigating this unexplored dimension is essential not only from an academic standpoint but also for practical FL deployments, where diverse device capabilities necessitate tailored model architectures across client devices.

Third, our ablation study of the cache duration $D$ (Section~\ref{subsubsection:ablation_study_of_cache_duration}) clarifies its role and provides strong practical guidance for deployment. The results reveal a distinct performance cliff: conservative $D$ values offer substantial communication savings with minimal accuracy loss, establishing them as a robust and effective default. Conversely, the study confirmed that excessively long durations negatively impact server-side convergence and client-side stability. Critically, this risk is predictable. The lightweight simulation (Section~\ref{subsubsection:cache_module}) allows these ineffective, overly long $D$ values to be identified and avoided, before incurring the cost of a full FL experiment. While our current simple and fair refresh strategy proved highly effective with a conservative $D$, a promising future direction is to explore more granular caching policies. For instance, a probabilistic or selective per-sample expiration strategy might mitigate the instability caused by mass-refresh events observed with very long durations, potentially unlocking further efficiency gains.

Finally, our ablation study of Enhanced ERA (Section~\ref{subsubsection:ablation_study_of_enhanced_era}) clarifies its behavior and offers practical guidance for deployment. The mechanism is fundamentally robust, overcoming the class-count ($N$) dependency of conventional ERA and offering stable control where $\beta=1$ recovers simple averaging. Our empirical results also established a clear operational guideline: a static heuristic (e.g., $\beta=1.5$ in our CIFAR-10 tests) achieved near-optimal final accuracy across the entire spectrum of data heterogeneity, from strong non-IID to near-IID. While this establishes a highly effective default, our analysis also revealed opportunities for further optimization, such as prioritizing server-side convergence speed. Therefore, the primary remaining research direction shifts from identifying a static default to automating this tuning process. Developing a theoretical or heuristic method to set $\beta$ adaptively, perhaps using server-visible signals like aggregated soft-label entropy, remains a promising avenue for future work.

In summary, future research directions include investigating synthetic data generation methods as alternatives to public datasets, evaluating SCARLET’s efficacy in heterogeneous model settings, and developing more granular or probabilistic caching policies to build upon our analysis. Additionally, automating the tuning of the Enhanced ERA sharpness parameter $\beta$ promises further improvements by creating a fully dynamic mechanism for diverse non-IID data scenarios.

\section{Conclusion}
\label{conclusion}

In this paper, we proposed SCARLET, a novel distillation-based FL framework that addresses key challenges in communication efficiency and data heterogeneity. By introducing the soft-label caching mechanism and the Enhanced ERA mechanism, SCARLET significantly reduces communication costs while maintaining competitive model accuracy. Extensive experiments demonstrated that SCARLET outperforms state-of-the-art distillation-based FL approaches in both server- and client-side evaluations. The modular design of SCARLET ensures compatibility with other FL frameworks, offering a flexible and scalable solution for diverse deployment scenarios. SCARLET thus represents a substantial step forward in communication-efficient FL. As discussed in Section~\ref{limitations}, extending this work by investigating model heterogeneity, developing more granular or probabilistic caching policies, and automating the tuning of the Enhanced ERA sharpness parameter $\beta$ will address an even wider range of real-world challenges.

\section*{Acknowledgments}
This work was supported in part by JST, PRESTO under Grant JPMJPR2035, in part by JST, ALCA-Next under Grant JPMJAN24F1, Japan, and was carried out using the TSUBAME4.0 supercomputer at Institute of Science Tokyo.


 

\bibliographystyle{IEEEtran}
\bibliography{main}
\vfill
 
\vspace{11pt}

\clearpage
\onecolumn
\twocolumn
\appendices
\renewcommand{\thesubsectiondis}{\Roman{subsection}.}
\section{Algorithm for Lightweight Cache Hit Rate Simulation}
\label{appendix:cache_simulation}

This appendix provides the detailed algorithm for the lightweight, standalone simulation discussed in Section~\ref{subsubsection:cache_module}. This simulation is used to estimate the cache hit rate (as shown in Fig.~\ref{fig:cache_simulation}) by modeling only the random sampling of the public dataset and the cache expiration logic. As the algorithm demonstrates, it is computationally trivial and runs independently of the FL training, serving as a practical tool for tuning the cache duration $D$.

\begin{algorithm}[t]
\caption{Lightweight Cache Hit Rate Simulation}
\label{algorithm:cache_simulation}
\begin{algorithmic}[1]

\State \textbf{Input:}
\State \quad $|\mathcal{P}|$: Total size of public dataset (e.g., 10000)
\State \quad $|\mathcal{P}^t|$: Number of public samples per round (e.g., 1000)
\State \quad $D$: Cache duration to simulate (e.g., 50)
\State \quad $T$: Total simulation rounds (e.g., 2000)
\State \textbf{Output:}
\State \quad $R_\text{cached}$: An array of cache hit ratios for $t \in [1, T]$
\vspace*{1em}
\State $\mathcal{C}_\text{sim} \leftarrow$ array of size $|\mathcal{P}|$, initialized to \textbf{null} \newline \hspace*{1em} \AlgComment{Stores the cache timestamp for each sample}
\State $R_\text{cached} \leftarrow$ empty array
\vspace*{1em}
\If {$D = 0$}
    \State $R_\text{cached} \leftarrow$ array of size $T$, initialized to 0.0
    \State \textbf{return} $R_\text{cached}$
\EndIf
\vspace*{1em}
\For {$t=1,\cdots,T$ \textbf{simulation rounds}}\textbf{:}
    \State $\mathcal{I}^t \leftarrow$ Randomly select $|\mathcal{P}^t|$ indices from $[1, |\mathcal{P}|]$
    \State $HitCount \leftarrow 0$
    \For{each index $i \in \mathcal{I}^t$}
        \If{$\mathcal{C}_\text{sim}[i] = \textbf{null}$}
            \State $\mathcal{C}_\text{sim}[i] \leftarrow t$ \newline \hspace*{5em} \AlgComment{Cache miss (empty): Add to cache}
        \ElsIf{$t - \mathcal{C}_\text{sim}[i] > D$}
            \State $\mathcal{C}_\text{sim}[i] \leftarrow t$ \newline \hspace*{5em} \AlgComment{Cache miss (expired): Refresh cache}
        \Else
            \State $HitCount \leftarrow HitCount + 1$ \newline \hspace*{5em} \AlgComment{Cache hit (valid)}
        \EndIf
    \EndFor
    \State Append ($HitCount / |\mathcal{P}^t|$) to $R_\text{cached}$
\EndFor
\State \textbf{return} $R_\text{cached}$

\end{algorithmic}
\end{algorithm}

\section{Proof of Majorization for Power Exponents}
\label{appendix:majorization_proof}

In this appendix, we present the mathematical proof of the majorization property introduced in Section \ref{subsection:enhanced_era}. Specifically, we prove that for any two power exponents \( \beta_1 \) and \( \beta_2 \) satisfying \( \beta_2 > \beta_1 > 0 \), the distribution produced by Enhanced ERA with exponent \( \beta_2 \) is majorized by the distribution produced with exponent \( \beta_1 \). This result justifies the monotonic reduction of entropy as \(\beta\) increases, leveraging the Schur-concavity of Shannon entropy.

\begin{theorem}[Majorization via Power Exponents]
\label{thm:majorization}
Let \( \{x_i\}_{i=1}^N \) be nonnegative real numbers in non-\emph{decreasing} order, 
\[
0 \;\le\; x_1 \;\le\; x_2 \;\le\; \cdots \;\le\; x_N \;\le\; 1,
\]
and let 
\[
\sum_{i=1}^N x_i \;=\; 1.
\]
For any \( \beta_2 > \beta_1 > 0 \), define the unnormalized vectors
\begin{align*}
    X^{(\beta_1)} \;=\; \bigl(x_1^{\beta_1},\, x_2^{\beta_1},\, \ldots,\, x_N^{\beta_1}\bigr), \\
X^{(\beta_2)} \;=\; \bigl(x_1^{\beta_2},\, x_2^{\beta_2},\, \ldots,\, x_N^{\beta_2}\bigr),
\end{align*}
and their corresponding \emph{normalized} distributions
\[
\hat{X}^{(\beta_1)} \;=\; \frac{X^{(\beta_1)}}{\sum_{j=1}^N x_j^{\beta_1}}, 
\qquad
\hat{X}^{(\beta_2)} \;=\; \frac{X^{(\beta_2)}}{\sum_{j=1}^N x_j^{\beta_2}}.
\]
Then, for each \(1 \le k \le N\), the following holds:
\begin{equation}
    \label{eq:ratio_inequality}
    \frac{\sum_{i=1}^k x_i^{\beta_2}}{\sum_{j=1}^N x_j^{\beta_2}}
    \;\;\le\;\;
    \frac{\sum_{i=1}^k x_i^{\beta_1}}{\sum_{j=1}^N x_j^{\beta_1}}.
\end{equation}
Equivalently, when viewed as probability distributions, 
\(\hat{X}^{(\beta_2)}\) is \emph{majorized} by \(\hat{X}^{(\beta_1)}\).
\end{theorem}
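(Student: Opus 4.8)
The plan is to prove the cumulative inequality \eqref{eq:ratio_inequality} for each fixed $k$ and then read off majorization, since at $k=N$ both normalized sums equal $1$. First I would clear the (strictly positive) denominators, so that the target becomes
\[
\Bigl(\sum_{i=1}^k x_i^{\beta_2}\Bigr)\Bigl(\sum_{j=1}^N x_j^{\beta_1}\Bigr) \;\le\; \Bigl(\sum_{i=1}^k x_i^{\beta_1}\Bigr)\Bigl(\sum_{j=1}^N x_j^{\beta_2}\Bigr).
\]
Splitting each total sum $\sum_{j=1}^N$ into its first $k$ terms and the tail $\sum_{j=k+1}^N$, the common product $\bigl(\sum_{i=1}^k x_i^{\beta_1}\bigr)\bigl(\sum_{i=1}^k x_i^{\beta_2}\bigr)$ cancels from both sides, and the whole claim collapses to the single ``cross'' inequality
\[
\Bigl(\sum_{i=1}^k x_i^{\beta_2}\Bigr)\Bigl(\sum_{j=k+1}^N x_j^{\beta_1}\Bigr) \;\le\; \Bigl(\sum_{i=1}^k x_i^{\beta_1}\Bigr)\Bigl(\sum_{j=k+1}^N x_j^{\beta_2}\Bigr).
\]

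Next I would expand both sides as double sums over index pairs $(i,j)$ with $1 \le i \le k < j \le N$ and compare them entrywise. The crux of the argument is the pointwise bound $x_i^{\beta_2} x_j^{\beta_1} \le x_i^{\beta_1} x_j^{\beta_2}$ for every such pair. For positive entries this is equivalent, after dividing by $x_i^{\beta_1} x_j^{\beta_1}$, to $x_i^{\beta_2-\beta_1} \le x_j^{\beta_2-\beta_1}$, which holds because the sequence is non-decreasing (so $x_i \le x_j$ whenever $i \le k < j$) and $t \mapsto t^{\beta_2-\beta_1}$ is increasing on $[0,\infty)$ for $\beta_2-\beta_1>0$. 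The degenerate case $x_i=0$ is handled directly: both sides vanish, and if any tail entry $x_j$ with $j>k$ is zero then monotonicity forces all preceding entries to vanish too, so the bound is never violated. Summing the pointwise bounds over all admissible pairs yields the cross inequality, hence \eqref{eq:ratio_inequality}.

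Finally, since \eqref{eq:ratio_inequality} holds for all $1 \le k \le N$ with equality at $k=N$, the cumulative mass that $\hat{X}^{(\beta_2)}$ places on its smallest entries never exceeds that of $\hat{X}^{(\beta_1)}$, which is exactly the majorization relation asserted in the statement (re-sorting into non-increasing order turns these into the standard dominance of partial sums of the largest entries). I expect the only real subtlety — rather than a genuine obstacle — to be the bookkeeping of the sum-splitting and the treatment of zero entries; the hypothesis $\beta_1,\beta_2>0$ guarantees $0^{\beta}=0$ and keeps the monotone-power argument valid throughout. The entire result rests on the one-line entrywise comparison, whose essence is that the likelihood ratio $x^{\beta_2}/x^{\beta_1}=x^{\beta_2-\beta_1}$ is nondecreasing in $x$.
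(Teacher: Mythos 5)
Your proposal is correct, and it follows the same high-level route as the paper (clear denominators, reduce to a cross-product inequality), but you supply the one step that actually carries the mathematical content and that the paper's own proof omits. The paper's Step~1 only observes that $x_i^{\beta_2}\le x_i^{\beta_1}$ termwise on $[0,1]$ — a fact that shrinks both the numerator and the denominator of the normalized prefix sum and therefore says nothing about their ratio — and its Step~2 then simply asserts the cross-product inequality
\[
\Bigl(\sum_{i=1}^k x_i^{\beta_2}\Bigr)\Bigl(\sum_{j=1}^N x_j^{\beta_1}\Bigr)\;\le\;\Bigl(\sum_{i=1}^k x_i^{\beta_1}\Bigr)\Bigl(\sum_{j=1}^N x_j^{\beta_2}\Bigr)
\]
and "rearranges" it, without ever deriving it; notably, the non-decreasing ordering of the $x_i$, which is the hypothesis that makes the theorem true, is never invoked in the paper's argument. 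Your decomposition — cancel the common block $\bigl(\sum_{i\le k}x_i^{\beta_1}\bigr)\bigl(\sum_{i\le k}x_i^{\beta_2}\bigr)$, reduce to the pairwise comparison $x_i^{\beta_2}x_j^{\beta_1}\le x_i^{\beta_1}x_j^{\beta_2}$ for $i\le k<j$, and verify it via the monotonicity of $t\mapsto t^{\beta_2-\beta_1}$ together with $x_i\le x_j$ — is exactly the missing justification, and your handling of zero entries is also careful where the paper is silent. In short, your write-up is not merely an alternative route; it is the complete version of the argument the paper gestures at, and it makes explicit that the result depends on the ordering (equivalently, on the log-supermodularity of $x^\beta$ in $(x,\beta)$) rather than on the $[0,1]$ bound used in the paper's Step~1.
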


\begin{proof}
\textbf{Condition 1: Equality of Total Sums}\\
Both normalized vectors sum to 1, satisfying the first requirement for majorization:
\[
\sum_{i=1}^N \hat{X}^{(\beta_1)}_i \;=\; 1
\quad\text{and}\quad
\sum_{i=1}^N \hat{X}^{(\beta_2)}_i \;=\; 1.
\]

\noindent
\textbf{Condition 2: Prefix Sum Comparison}\\
To prove the prefix sum dominance, we must show that for each \(1 \le k \le N\),
\[
\frac{\sum_{i=1}^k x_i^{\beta_2}}{\sum_{j=1}^N x_j^{\beta_2}}
\;\;\le\;\;
\frac{\sum_{i=1}^k x_i^{\beta_1}}{\sum_{j=1}^N x_j^{\beta_1}}.
\]

\paragraph{Step 1: Power Function Properties}
Since \(0 \le x_i \le 1\) for all \(i\) and \(\beta_2>\beta_1 > 0\), raising \(x_i\) to the higher exponent \(\beta_2\) decreases its value:
\[
x_i^{\beta_2} \;=\; (x_i^{\beta_1})^{\tfrac{\beta_2}{\beta_1}}
\;\;\le\;\;
x_i^{\beta_1},
\]
This inequality holds pointwise and extends to sums over subsets of indices:
\[
\sum_{i=1}^k x_i^{\beta_2} 
\;\le\;
\sum_{i=1}^k x_i^{\beta_1},
\qquad
\sum_{j=1}^N x_j^{\beta_2} 
\;\le\;
\sum_{j=1}^N x_j^{\beta_1}.
\]

\paragraph{Step 2: Ratio Comparison via Cross-Multiplication}
To establish inequality~\ref{eq:ratio_inequality}, consider the cross-product inequality:
\[
\Bigl(\sum_{i=1}^k x_i^{\beta_2}\Bigr)
\Bigl(\sum_{j=1}^N x_j^{\beta_1}\Bigr)
\;\;\le\;\;
\Bigl(\sum_{i=1}^k x_i^{\beta_1}\Bigr)
\Bigl(\sum_{j=1}^N x_j^{\beta_2}\Bigr).
\]
Rearranging this inequality confirms that the normalized prefix sums satisfy
\[
\frac{\sum_{i=1}^k x_i^{\beta_2}}
     {\sum_{j=1}^N x_j^{\beta_2}}
\;\;\le\;\;
\frac{\sum_{i=1}^k x_i^{\beta_1}}
     {\sum_{j=1}^N x_j^{\beta_1}},
\]
for all \(1\le k \le N\).

\paragraph{Conclusion}
Both the normalization and prefix sum comparison conditions are satisfied, so \(\hat{X}^{(\beta_2)}\) is majorized by \(\hat{X}^{(\beta_1)}\). By the Schur-concavity of Shannon entropy, this implies that increasing the power exponent \(\beta\) reduces entropy, aligning with the statement in the main text.

This completes the proof.
\end{proof}

\section{Analysis of Input Sensitivity and Control Stability in Aggregation}
\label{appendix:analysis_sensitivity_stability}

This appendix provides a formal mathematical analysis to substantiate the claim from Section~\ref{subsection:enhanced_era}: that the conventional ERA (Eq.~\ref{equation:era}) is highly sensitive to the input soft-label's entropy, whereas our proposed Enhanced ERA (Eq.~\ref{equation:enhanced_era}) provides robust and predictable control.

Our analysis focuses on the \textbf{log-probability ratio} between two arbitrary classes, $i$ and $j$. This metric clearly exposes the functional relationship between the control parameter ($T$ or $\beta$) and the resulting sharpness, allowing us to formally analyze sensitivity and stability.

\subsection{Limitations and Instability of Conventional ERA}

Conventional ERA sharpens the averaged soft-label $\overline{z}$ using the temperature-scaled softmax function, as defined in (Eq.~\ref{equation:era}):
\begin{align*}
    {\hat{z}_{i}}^\text{ERA} = \frac{e^{\overline{z}_{i}/T}}{\sum_{k}e^{\overline{z}_{k}/T}}
\end{align*}
The ratio of the output probabilities between two classes, $i$ and $j$, is:
\begin{align*}
    \mathit{Ratio}_\text{ERA} = \frac{{\hat{z}_{i}}^\text{ERA}}{{\hat{z}_{j}}^\text{ERA}} = \frac{e^{\overline{z}_{i}/T} / \sum_{k}e^{\overline{z}_{k}/T}}{e^{\overline{z}_{j}/T} / \sum_{k}e^{\overline{z}_{k}/T}} = e^{(\overline{z}_{i} - \overline{z}_{j})/T}
\end{align*}
Taking the natural logarithm of both sides reveals the core mechanism:
\begin{align}
    \ln(\mathit{Ratio}_\text{ERA}) = \frac{\overline{z}_{i} - \overline{z}_{j}}{T} = \frac{\Delta z}{T}
    \label{eq:era_log_ratio}
\end{align}
where $\Delta z = \overline{z}_{i} - \overline{z}_{j}$ is the \textbf{absolute difference} between the input probabilities.

This relationship (Eq.~\ref{eq:era_log_ratio}) reveals two critical points of instability:

\subsubsection{Conventional ERA: Sensitivity to Input Entropy (Scale-Dependent)}
The output log-ratio is proportional to $\Delta z$. This transformation is mathematically \textbf{scale-dependent}, meaning its output is sensitive to the absolute scale of the input probabilities, even when the relative knowledge (the ratio $R = \overline{z}_i / \overline{z}_j$) is identical.

This scale-dependency directly implies entropy-sensitivity because the entropy of a distribution strongly constrains the absolute scale of its constituent probabilities. A high-entropy (flat) distribution necessarily consists of probabilities with a low absolute scale (e.g., all values are near $1/N$). Conversely, a low-entropy (sharp) distribution is defined by having at least one probability with a high absolute scale (approaching 1). Therefore, any transformation that is scale-dependent will, by extension, be entropy-sensitive.

This dependency is evident by mathematically decomposing $\Delta z$:
\begin{align*}
    \Delta z = \overline{z}_i - \overline{z}_j = (R \cdot \overline{z}_j) - \overline{z}_j = \overline{z}_j (R - 1)
\end{align*}
The output log-ratio is therefore $\ln(\mathit{Ratio}_\text{ERA}) = (\overline{z}_j (R - 1)) / T$. This shows that the transformation is not a pure function of the relative knowledge ($R$), but is instead conflated with the input's absolute scale ($\overline{z}_j$), which is dictated by the entropy state.

Consider two inputs with the exact same relative knowledge ($R=1.5$):
\begin{itemize}
    \item \textbf{High-Entropy (Low-Scale) Input:} $\overline{z}_i=0.15, \overline{z}_j=0.10$. 
    Here, $R=1.5$ and the scale $\overline{z}_j=0.10$. The resulting signal is $\Delta z = \mathbf{0.05}$.
    
    \item \textbf{Lower-Entropy (Higher-Scale) Input:} $\overline{z}_i=0.30, \overline{z}_j=0.20$.
    Here, $R=1.5$, but the scale $\overline{z}_j=0.20$. The resulting signal is $\Delta z = \mathbf{0.10}$.
\end{itemize}
Despite the relative knowledge being identical ($R=1.5$), ERA treats these as two completely different signals, applying twice the sharpening effect to the second input for any given $T$. This demonstrates its fundamental sensitivity to input entropy (scale).

\subsubsection{Conventional ERA: Unstable Control Parameter}
The log-ratio is proportional to $1/T$, an \textbf{inverse relationship}. This function $f(T) = 1/T$ is inherently unstable in the sharpening region ($T \to 0$). To quantify this, we examine the sensitivity (the derivative) of the log-ratio with respect to the control parameter $T$:
\begin{align}
    \frac{d}{dT} \left( \frac{\Delta z}{T} \right) = - \frac{\Delta z}{T^2}
    \label{eq:era_sensitivity}
\end{align}
This sensitivity (Eq.~\ref{eq:era_sensitivity}) \textbf{explodes quadratically} as $T \to 0$. A change in $T$ from $0.2 \to 0.1$ has a much smaller effect than the same $0.1$ change from $T=0.15 \to 0.05$. This makes $T$ a non-linear, unpredictable, and ``run-away'' parameter, rendering robust hyperparameter tuning extremely difficult.

\subsection{Robustness and Stability of Enhanced ERA}

Enhanced ERA sharpens using a power function, as defined in Eq.~\ref{equation:enhanced_era}:
\begin{align*}
    {\hat{z}_{i}}^\text{E-ERA} = \frac{(\overline{z}_{i})^{\beta}}{\sum_{k}(\overline{z}_{k})^{\beta}}
\end{align*}
We repeat the same analysis. The output probability ratio is:
\begin{align*}
    \mathit{Ratio}_\text{E-ERA} = \frac{{\hat{z}_{i}}^\text{E-ERA}}{{\hat{z}_{j}}^\text{E-ERA}} = \frac{(\overline{z}_{i})^{\beta} / \sum_{k}(\overline{z}_{k})^{\beta}}{(\overline{z}_{j})^{\beta} / \sum_{k}(\overline{z}_{k})^{\beta}} = \left(\frac{\overline{z}_{i}}{\overline{z}_{j}}\right)^{\beta}
\end{align*}
Taking the natural logarithm reveals its fundamentally different mechanism:
\begin{equation}
    \ln(\mathit{Ratio}_\text{E-ERA}) = \beta \ln \left(\frac{\overline{z}_{i}}{\overline{z}_{j}}\right) = \beta \ln R
    \label{eq:enhanced_era_log_ratio}
\end{equation}
where $R = \overline{z}_{i} / \overline{z}_{j}$ is the \textbf{ratio} of the input probabilities.

This relationship (Eq. \ref{eq:enhanced_era_log_ratio}) directly solves both instability problems:

\subsubsection{Enhanced ERA: Robustness to Input Entropy (Scale-Invariant)}
This transformation is mathematically \textbf{scale-invariant}. The function $f(\overline{\mathbf{z}}) = \ln(\overline{z}_i / \overline{z}_j) = \ln R$ mathematically isolates the relative knowledge ($R$) from the absolute scale of the input probabilities. Because this transformation is invariant to the absolute scale, it is also, by extension, robust to the input's entropy state.

We can see this by re-examining the two inputs from the previous section, both of which contain the identical relative knowledge ($R=1.5$):
\begin{itemize}
    \item \textbf{High-Entropy (Low-Scale) Input:} $\overline{z}_i=0.15, \overline{z}_j=0.10$.
    The input signal is $\ln(1.5)$.
    
    \item \textbf{Lower-Entropy (Higher-Scale) Input:} $\overline{z}_i=0.30, \overline{z}_j=0.20$.
    The input signal is also $\ln(1.5)$.
\end{itemize}
Enhanced ERA maps both inputs to the \textbf{exact same value} before the control parameter $\beta$ is applied. This guarantees that the sharpening effect is consistent and robust to changes in input entropy (scale), as long as the relative knowledge (the ratio) remains constant.

\subsubsection{Enhanced ERA: Stable Control Parameter}
The log-ratio is \textbf{linearly proportional} to the control parameter $\beta$. This $g(\beta) = \beta$ relationship is the most stable form of control possible. The sensitivity (derivative) of the log-ratio with respect to $\beta$ is:
\begin{align}
    \frac{d}{d\beta} (\beta \ln R) = \ln R
    \label{eq:enhanced_era_sensitivity}
\end{align}
This sensitivity (Eq.~\ref{eq:enhanced_era_sensitivity}) is a \textbf{constant} (with respect to $\beta$) that depends only on the input ratio $R$. A change from $\beta=1.0 \to 1.5$ has the exact same additive effect on the log-ratio as a change from $\beta=2.0 \to 2.5$. This makes $\beta$ a predictable, linear, and exceptionally stable control knob.

\subsection{Conclusion}

In the limit (as $T \to 0$ or $\beta \to \infty$), both methods converge to the same one-hot vector concentrated on the maximal element(s) of $\overline{z}$.

However, the critical difference is not the destination, but the \textbf{stability of the path} to that destination.
\begin{itemize}
    \item \textbf{ERA} provides an unstable, non-linear control knob ($T$) and uses a \textbf{scale-dependent} transformation ($\Delta z$) that conflates knowledge with entropy.

    \item \textbf{Enhanced ERA} provides a stable, linear control knob ($\beta$) and uses a \textbf{scale-invariant} transformation ($\ln R$) that cleanly isolates knowledge from entropy.
\end{itemize}
Crucially, $\beta=1$ acts as a stable ``identity'' baseline, recovering the original averaged soft-label. This predictable, linear control and stable, scale-invariant transformation make $\beta$ a far more practical and robust hyperparameter for aggregation across diverse non-IID scenarios.

\section{Validation of Convergence Criteria in Practical Scenarios}
\label{appendix:validation}

A critical challenge in real-world FL deployments is defining a practical convergence criterion. 
Here, we examine how the common practice in deep learning---using a validation dataset and its loss for termination---can be applied to distillation-based FL and evaluate its effectiveness. 
In distillation-based FL, clients possess local private datasets in addition to a shared public dataset, but, in principle, the roles of training and validation can be separated in a manner similar to standard deep learning pipelines. 
It is important to note, however, that a private validation set reflects the convergence of each client's personalized accuracy, whereas a public validation set assesses the convergence of the global model, and the two do not necessarily converge simultaneously. The details of this evaluation are described below.

\subsection{Methodology for Validation}

\begin{figure}[t]
    \includegraphics[width=0.95\columnwidth]{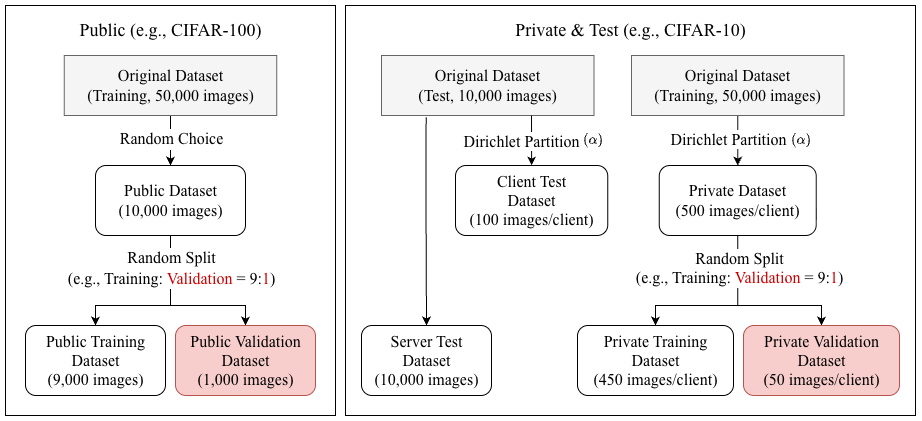}
    \centering
    \caption{Data partitioning strategy for validating practical convergence criteria. To simulate a real-world scenario without a test set, the Public Dataset is split into train/validation sets (left), and each client's Private Dataset is split into local train/validation sets (right). The held-out test sets (center) are used only for ground-truth evaluation of the proxy metrics.}
    \label{fig:validation_split}
\end{figure}

In a practical deployment, the server and clients can only monitor metrics derived from the data they possess (i.e., public data and local private data, respectively). We simulated this constraint by creating validation sets partitioned from the available public and private data, as illustrated in the data-split diagram (Fig.~\ref{fig:validation_split}).

\begin{itemize}
    \item \textbf{Private Validation Dataset:} For each client, we partitioned their local private data (e.g., 500 images from CIFAR-10) into a \textit{Private Training Dataset} (e.g., 450 images, 90\%) and a \textit{Private Validation Dataset} (e.g., 50 images, 10\%). Clients can compute metrics (e.g., loss and accuracy) on this static, held-out validation set to assess their local model's generalization performance on their own data distribution.
    
    \item \textbf{Public Validation Dataset:} We partitioned the server's public dataset (e.g., 10,000 images from CIFAR-100) into a \textit{Public Training Dataset} (e.g., 9,000 images, 90\%) and a \textit{Public Validation Dataset} (e.g., 1,000 images, 10\%). The server can compute metrics on this validation set (e.g., the global model's distillation loss) to monitor the stability and convergence of the global model.
\end{itemize}

We then compared the learning curves of these validation metrics against the ``ground truth'' performance on the held-out test datasets (server-side test accuracy and client-side test accuracy). The objective was to identify validation metrics that converge or plateau concurrently with the test accuracy, thereby serving as reliable indicators for stopping the FL process.

\subsection{Results and Discussion}

We analyzed the correlation between the ``ground truth'' test accuracies (which are unavailable in practice) and the available validation proxy metrics. Fig.~\ref{fig:convergence_validation} visualizes the strong correlations identified: comparing server-side test accuracy to the server's public validation loss, and client-side test accuracy to the client's private validation loss.

\begin{figure}[t]
    \centering
    \subfloat[Server test accuracy]{%
    \includegraphics[width=0.48\columnwidth]{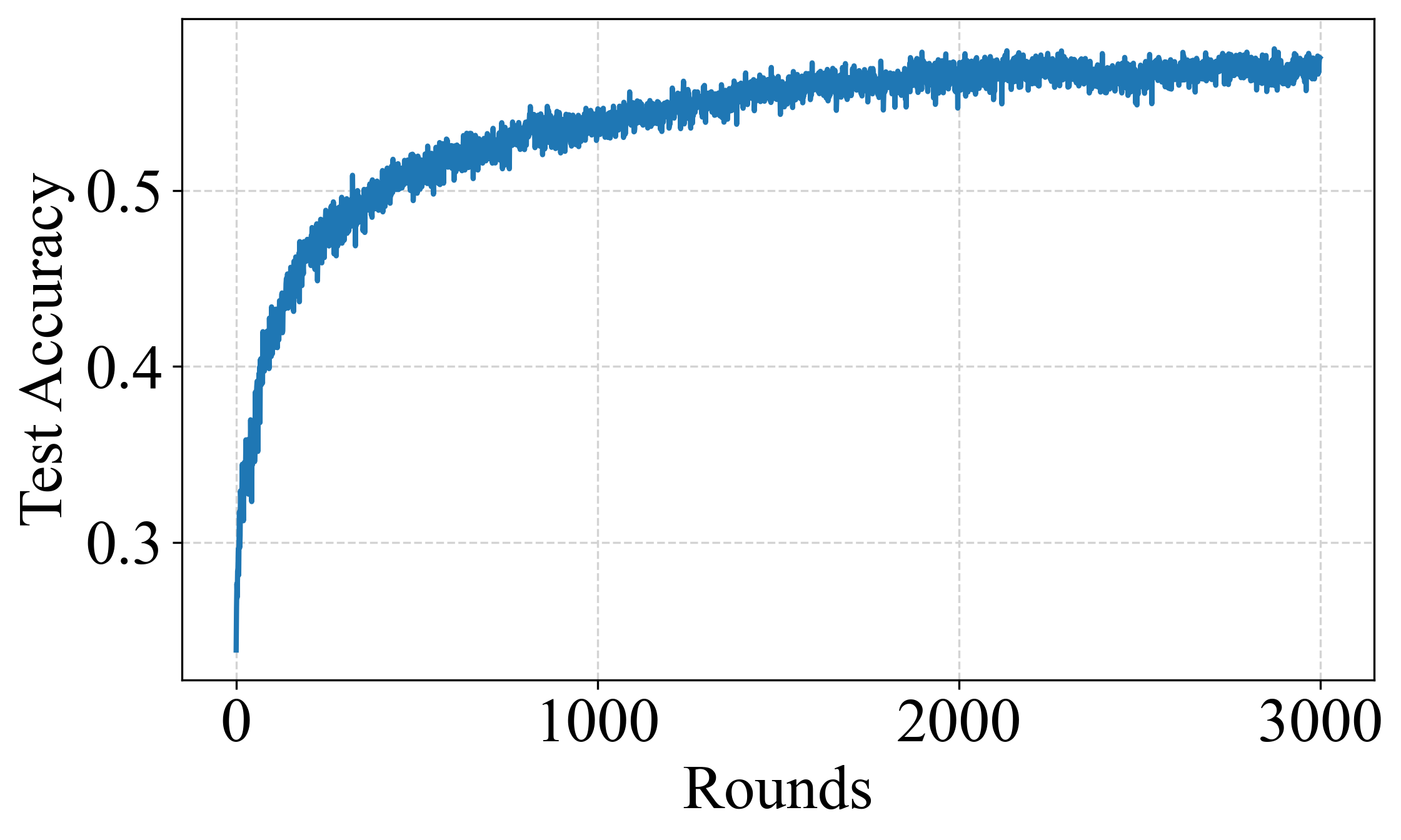}%
    \label{fig:convergence_validation:server_test_acc}%
    }\hfill
    \subfloat[Server public validation loss]{%
    \includegraphics[width=0.48\columnwidth]{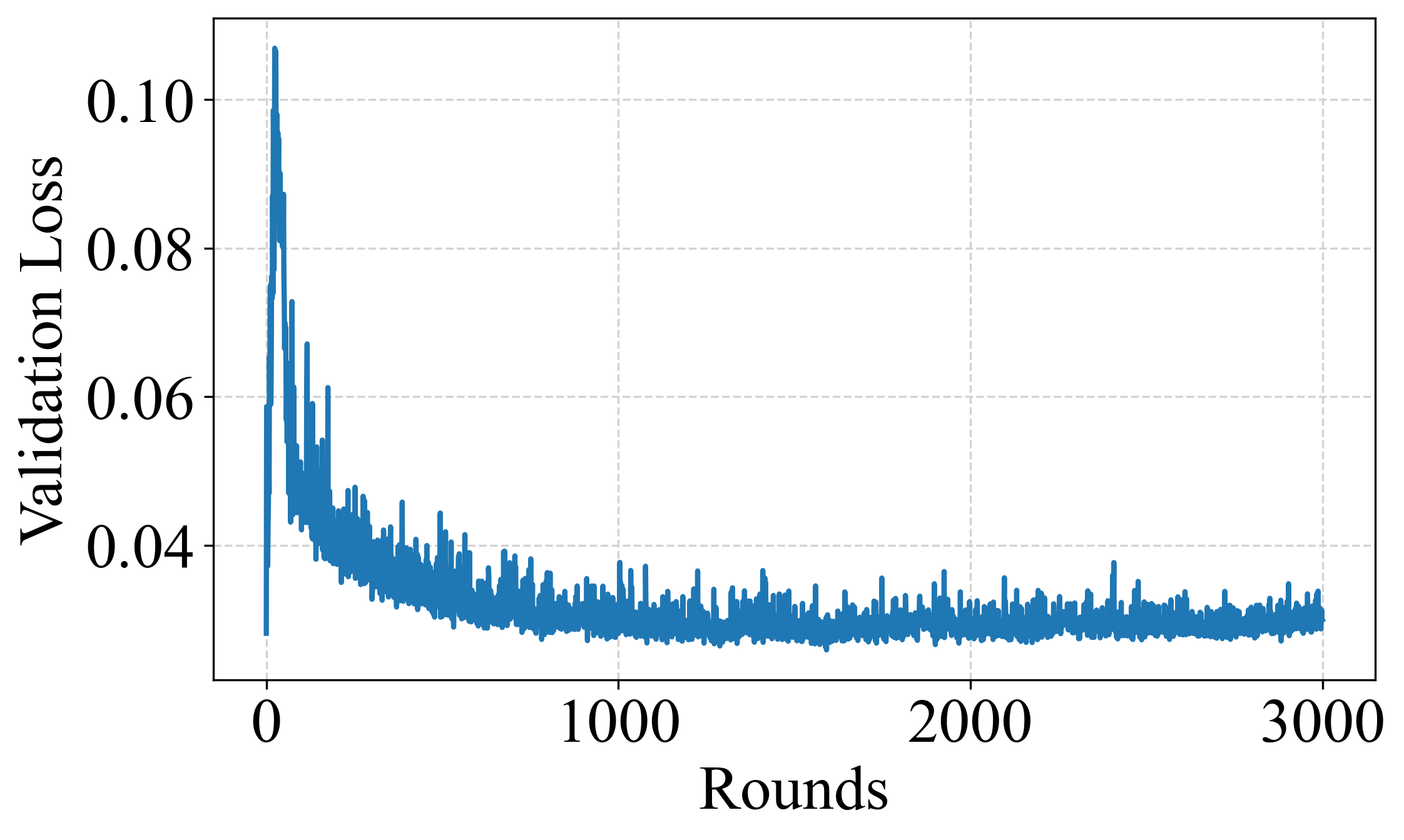}%
    \label{fig:convergence_validation:server_public_val_loss}%
    }
    
    \subfloat[Client test accuracy]{%
    \includegraphics[width=0.48\columnwidth]{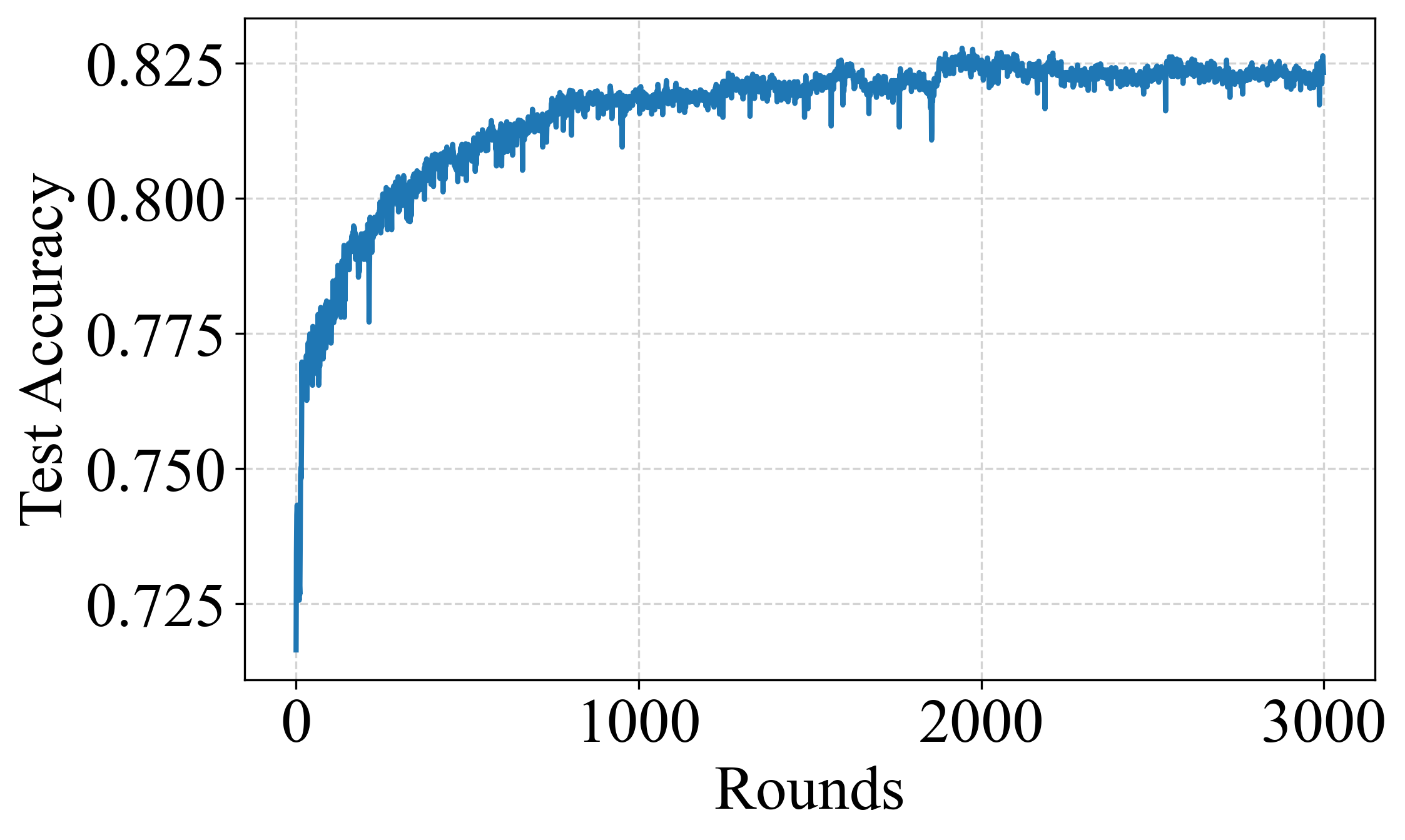}%
    \label{fig:convergence_validation:client_test_acc}%
    }\hfill
    \subfloat[Client private validation loss]{%
    \includegraphics[width=0.48\columnwidth]{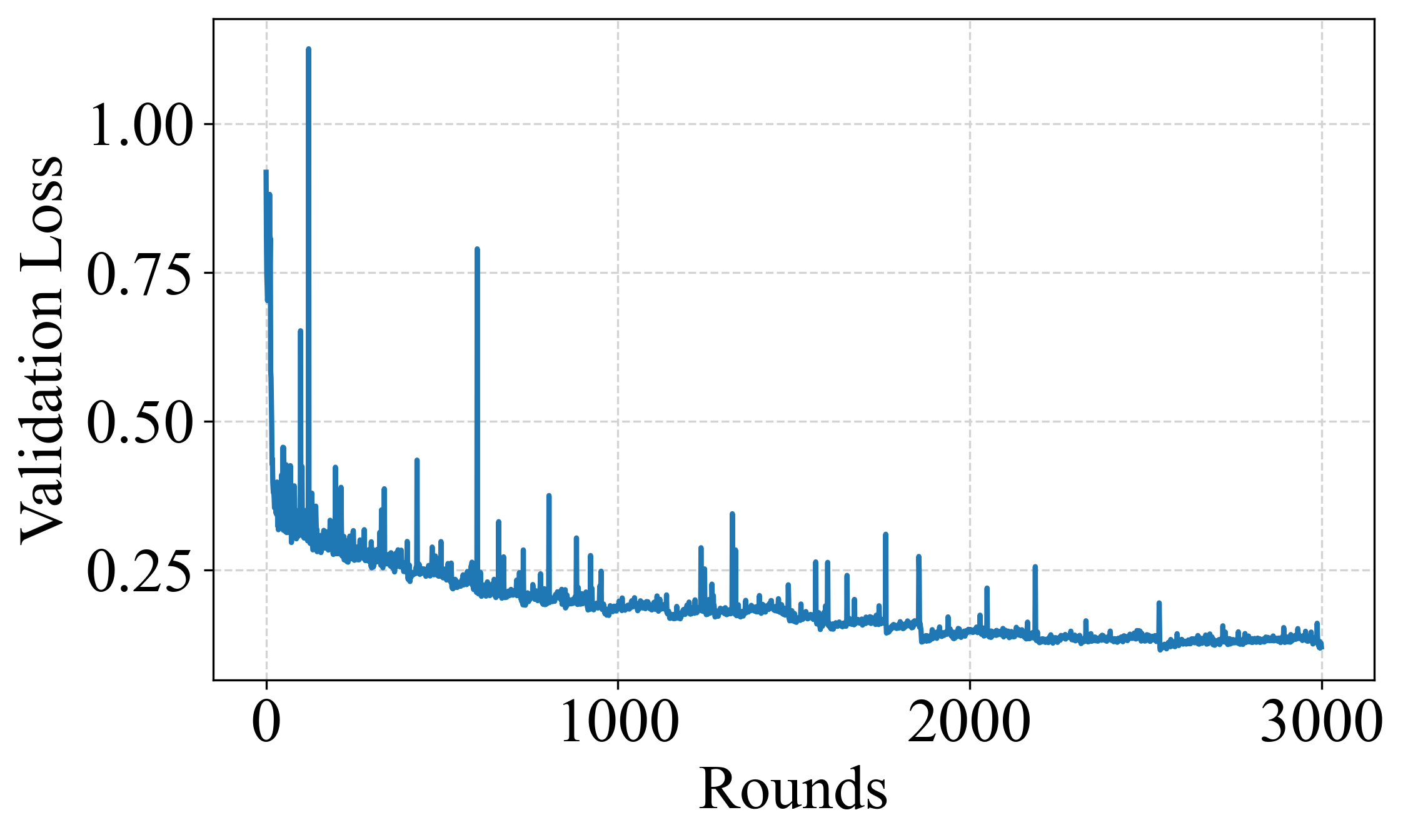}%
    \label{fig:convergence_validation:client_private_val_loss}%
    }
    \caption{Convergence analysis comparing unavailable ``ground-truth'' metrics with accessible proxy metrics. (\ref{fig:convergence_validation:server_test_acc}) Server test accuracy (ground truth) and (\ref{fig:convergence_validation:server_public_val_loss}) Server public validation loss (proxy). (\ref{fig:convergence_validation:client_test_acc}) Mean client test accuracy (ground truth) and (\ref{fig:convergence_validation:client_private_val_loss}) Mean client private validation loss (proxy). The proxy metrics (\ref{fig:convergence_validation:server_public_val_loss}, \ref{fig:convergence_validation:client_private_val_loss}) demonstrate strong concurrent convergence with their respective ground-truth accuracies (\ref{fig:convergence_validation:server_test_acc}, \ref{fig:convergence_validation:client_test_acc}), stabilizing as the test accuracy plateaus. This validates their use as effective and practical indicators for stopping the FL process.}
    \label{fig:convergence_validation}
\end{figure}

\smallbreak
\noindent
\textbf{Server-Side Convergence:}
We observed a strong correlation between the server's ground-truth performance and its accessible proxy metric. The server-side test accuracy (Fig.~\ref{fig:convergence_validation:server_test_acc}) reaches a stable plateau (approx. 57-58\% around rounds 2700-3000). This stabilization occurs concurrently with the server's loss on the public validation set (Fig.~\ref{fig:convergence_validation:server_public_val_loss}), which also settles into a stable low-value range (approx. 0.027-0.030) in the same period. This indicates that the server's public validation loss is a reliable proxy for the convergence of the global model.

\smallbreak
\noindent
\textbf{Client-Side Convergence:}
A similarly strong correlation was observed for the clients. The mean client test accuracy (Fig.~\ref{fig:convergence_validation:client_test_acc}) approaches its plateau (approx. 82-83\% around rounds 2000-3000). Concurrently, the mean loss on the client's local private validation set (Fig.~\ref{fig:convergence_validation:client_private_val_loss}) also stabilizes in a low-value region (approx. 0.11-0.12). Other metrics, such as the client's distillation loss on the public data (not shown), were less indicative, as they reflect the dynamic, round-to-round distillation process rather than the static generalization performance of the client's trained model on its own task. This confirms that monitoring the local private validation loss is a highly effective and practical method for a client to determine when their personalized model has converged.

\subsection{Conclusion}
This analysis demonstrates that the standard validation–based convergence criterion generally works well for distillation-based FL. For clients, the static loss on a held-out local validation set (private validation loss) serves as an excellent proxy for local model convergence. For the server, the loss on the public validation set (server public validation loss) provides a robust indicator of the global model’s convergence.

This validation confirms that the performance plateau observed in our main experiments is achievable using practical proxy metrics. It should be noted, however, that this local validation strategy implies a trade-off, particularly for clients in data-scarce (e.g., cross-device) settings. Partitioning an already small private dataset reduces the data available for local training. Determining the optimal train/validation split ratio, or whether such clients should perform local validation at all (perhaps relying solely on the server's convergence signal), remains an open research question dependent on data scarcity and the specific application. A full exploration of validation policies for highly resource-constrained clients is a potential area for future work.

\section{Hyperparameter Selection and Tuning for Comparative Analysis}
\label{appendix:hyperparameter}
To ensure the reproducibility of our experiments, we document the hyperparameter tuning process for each method and dataset in detail. The final hyperparameter values, summarized in Table~\ref{table:hyperparams_specific_with_values}, were selected based on comprehensive testing across candidate configurations.

\begin{table*}[t]
\centering
\caption{Method-Specific Hyperparameters for Each Private Dataset}
\label{table:hyperparams_specific_with_values}
\begin{tabular}{llccc}
\toprule
\multirow{2}{*}{Method} & \multirow{2}{*}{Hyperparameter} & \multicolumn{3}{c}{Private Dataset} \\
& & CIFAR-10 & CIFAR-100 & Tiny ImageNet \\ \midrule
DS-FL & ERA temperature ($T$) & 0.1 & 0.01 & 0.05 \\ \midrule
\multirow{2}{*}{CFD} & Upstream quantization bits ($b_{\text{up}}$) & 1 & 1 & 1  \\
& Downstream quantization bits ($b_{\text{down}}$) & 32 & 32 & 32  \\ \midrule
\multirow{2}{*}{COMET} & Number of clusters ($c$) & 2 & 2 & 3 \\
& Regularization weight ($\lambda$) & 1.0 & 1.0 & 1.0 \\ \midrule
\multirow{2}{*}{Selective-FD} & Client-side selector threshold ($\tau_{\text{client}}$) & 0.0625 & 0.0625 & 0.03125 \\
& Server-side selector threshold ($\tau_{\text{server}}$) & 2.0 & 2.0 & 2.0 \\ \bottomrule
\end{tabular}
\end{table*}

For DS-FL~\cite{dsfl}, the ERA temperature was tested over the range $T \in \{0.001, 0.005, 0.01, 0.05, 0.1, 0.5\}$. For CFD~\cite{cfd}, based on insights from the original paper, we directly adopted the settings $b_{\text{up}} = 1$ and $b_{\text{down}} = 32$ without further tuning. For COMET~\cite{comet}, the number of clusters was tested within the range $c \in \{2, 3, 4\}$. Additionally, the regularization weight $\lambda$ was tuned across $\lambda \in \{0.5, 1.0, 2.0\}$. For Selective-FD~\cite{selectivefd}, the client-side selector threshold was adjusted with $\tau_{\text{client}} \in \{0.03125, 0.0625, 0.125, 0.25\}$, and the server-side selector threshold was varied as $\tau_{\text{server}} \in \{1.0, 1.5, 2.0\}$.

The selection of hyperparameters was guided by performance metrics tailored to each method. Specifically, for DS-FL and CFD, we prioritized server-side accuracy, choosing the configurations that achieved the highest accuracy on the server. In contrast, for COMET and Selective-FD, client-side performance was the primary criterion, and the configurations yielding the highest client-side accuracy were selected. When multiple configurations exhibited similar accuracy, we opted for the one with faster convergence.

Certain methods required additional considerations. For Selective-FD, the official implementation does not include a server-side selector. We tested various values for the server-side selector threshold and found that $\tau_{\text{server}} = 2.0$, which effectively disables the selector, achieved the best accuracy. Consequently, our implementation for Selective-FD assumes no server-side selector. For CFD, delta coding was not included in our experiments (as no official implementation is available), as the original paper demonstrates that it primarily affects communication cost while having no impact on accuracy. With the upstream quantization already set to 1-bit in our setup, the absence of delta coding does not alter the accuracy results at all and only minimally affects the overall communication efficiency. For COMET, while the original paper focuses on soft-label aggregation without server-side model training, we extended the evaluation to include server-side training for consistency with other methods. This adjustment did not affect client-side evaluation. Furthermore, because COMET assumes partial client participation, we ensured fairness by determining client cluster assignments server-side and transmitting this information directly to each client, thereby avoiding additional communication costs from clustering.

These considerations ensure that each method is fairly evaluated under consistent conditions, taking into account their unique characteristics and theoretical assumptions. The final configurations adopted for each private dataset setting are detailed in Table~\ref{table:hyperparams_specific_with_values}.

\end{document}